\newtheorem{definition}{Definition}
\newtheorem{theorem}{Theorem}
\newtheorem{lemma}[theorem]{Lemma}
\newtheorem{proposition}{Proposition}
\newcommand*\rot{\rotatebox{90}}
\title{Matching High-Dimensional Geometric Quantiles for Test-Time Adaptation of Transformers and Convolutional Networks Alike}
\author{\name Sravan Danda \email  dandas@goa.bits-pilani.ac.in\\
      \addr Computer Science and Information Systems\\
      BITS Pilani K K Birla Goa Campus
      \AND
      \name Aditya Challa \email adityac@goa.bits-pilani.ac.in \\
      \addr Computer Science and Information Systems \\
      BITS Pilani K K Birla Goa Campus
      \AND
      \name Shlok Mehendale \email f20221426@goa.bits-pilani.ac.in  \\
      \addr Computer Science and Information Systems \\
      BITS Pilani K K Birla Goa Campus
      \AND
      \name Snehanshu Saha \email snehanshus@goa.bits-pilani.ac.in\\
      \addr Computer Science and Information Systems \\
      BITS Pilani K K Birla Goa Campus\\
      }
\newenvironment{colortext}[1]{\par\color{#1}}{\par}
\begin{document}

\maketitle

\begin{abstract}
% The abstract paragraph should be indented 1/2~inch on both left and
% right-hand margins. Use 10~point type, with a vertical spacing of 11~points.
% The word \textbf{\large Abstract} must be centered, in bold, and in point size 12. Two
% line spaces precede the abstract. The abstract must be limited to one
% paragraph.
 Test-time adaptation (TTA) refers to adapting a classifier for the test data when the probability distribution of the test data slightly differs from that of the training data of the model. To the best of our knowledge, most of the existing TTA approaches modify the weights of the classifier relying heavily on the architecture. It is unclear as to how these approaches are extendable to generic architectures. In this article, we propose an architecture-agnostic approach to TTA by adding an adapter network pre-processing the input images suitable to the classifier. This adapter is trained using the proposed \emph{quantile loss}. Unlike existing approaches, we correct for the distribution shift by matching high-dimensional geometric quantiles. We prove theoretically that under suitable conditions minimizing quantile loss can learn the optimal adapter. We validate our approach on CIFAR10-C, CIFAR100-C and TinyImageNet-C by training both classic convolutional and transformer networks on CIFAR10, CIFAR100 and TinyImageNet datasets. 
\end{abstract}

\section{Introduction}

% input non-robustness of deep learning classifiers 
In the past decade, deep learning approaches have achieved great progress in solving the image classification problem. Most of these approaches operate under the assumption that the probability distribution of the images at test-time (a.k.a. target) is identical to that of the training (a.k.a source). While these networks perform well on the source data, they achieve significantly lower accuracy on the target data if the target data consists of images with commonly caused weather or sensor degradations. As such corruptions occur in real-world applications, it is necessary to build image classifiers that can adapt to the test-time distribution.     

% TTA variants and datasets in literature
Datasets such as MNIST-C, CIFAR10-C, CIFAR100-C, TinyImageNet-C, ImageNet-C \cite{hendrycks2019benchmarking},  etc simulate the commonly occurring degradations at various levels of severity. These datasets have served as a test-bed for the checking if the classifiers can adapt to covariate shifts. Test-time adaptation (TTA) refers to building classifiers that adapt to the test data assuming access to 1) the classifier trained on the source data, and 2) the target data. The simplest setting - the target distribution is static and is different from the training distribution. In the literature, several works have addressed other settings of TTA such as 1) the type and severity of the corruption in the target distribution dynamically changes with time  \cite{wang2022continual,brahma2023probabilistic}, 2) additionally the test-time distribution is $\epsilon$-contaminated \cite{gong2024sotta} etc. 

% Why should someone consider `quantile matching' when there are existing techniques 
% BN-layer assumption or fine-tuning ViTs
Existing approaches obtain good results on the benchmark datasets. Some notable approaches include BNStats \cite{nado2020evaluating}, TENT \cite{wang2020tent}, CoTTA \cite{wang2022continual}, EATA \cite{niu2022efficient}, SoTTA \cite{gong2024sotta}, RoTTA \cite{yuan2023robust}, SAR \cite{niu2023towards}, MedBN \cite{park2024medbn} etc. However, they seem to be applicable only to CNN-based classifiers such as ResNet-like architectures. BNStats, MedBN, RoTTA, SoTTA and TENT explicitly assume that the underlying pre-trained classifier contains batch-norm layers. The authors of SAR report that their approach suffers when group and layer norms are present in the architecture. Group-norm (a generalization of layer-norm and instance-norm) \cite{wu2018group} stabilizes internal representations for training with small batches. Covariate shift is a global semantic drift that normalization alone cannot fix as they alter statistics beyond means and variances. EATA and CoTTA do not require batch-norm layers. However, EATA obtains much lower performance compared to the others. On the other hand, CoTTA uses a mean-teacher approach i.e. requires two copies of the classifier and is memory-inefficient. Retraining of classifiers is both memory and computationally inefficient when activations take up a lot of GPU memory and the training requires large volumes of data. Hence, they are not easily extendable to newer architectures such as vision transformers (ViTs) which are the current state-of-the-art. We thus pose and answer the question - \emph{Is it possible to design approaches that are architecture-agnostic?}. 

% What we aim to solve and Problem setting for experimental validation
Our solution is to set up an architecture-agnostic design by avoiding the modification of the classifier. To validate our solution, we work with the simplest setting i.e. the target distribution is different and static. We make an additional assumption - access to the features generated by the classifier of the unlabeled source data. This is a reasonable assumption for many applications. We do not require the annotations of the train or the target data. Since the target data distribution is assumed to be a corruption of the source data, we assume the existence of a mapping (de-corruption operator) that takes a corrupted image as an input and outputs an image from the distribution of the source data. Fig \ref{fig:intro} shows the comparison of the t-SNE \cite{maaten2008visualizing} plots of the CIFAR10C features obtained by ResNet18 (trained on CIFAR10) against the t-SNE plots of decorrupted  counterparts. The decorruptor is trained by freezing the classifier and minimizing \emph{quantile loss} in the feature space. Observe that the cluster structures are well-preserved when decorrupted. On the other hand, as the severity increases, the uncorrected raw features lose the cluster structure.  

% text explaining the t-SNE results along with the application of quantile loss in the feature space

% Figure - t-SNE plot CIFAR10C - levels 1 to 5 - ResNet18 CIFAR10 trained feats - Raw vs Quantile Corrected
\begin{figure}[t]
\centering
% \begin{adjustbox}{width=\linewidth}
\begin{tikzpicture}
\matrix[column sep=0em, row sep=0em] {
    % First row of images (5 images)
    \node{}; & \node {Severity 1}; & \node {Severity 2}; & \node {Severity 3}; & \node {Severity 4}; & \node {Severity 5}; \\
    \node[rotate=90, anchor=west] {Raw};  & \node {\includegraphics[width=2.5cm,trim = {2cm 1.5cm 1cm 1cm}, clip]{./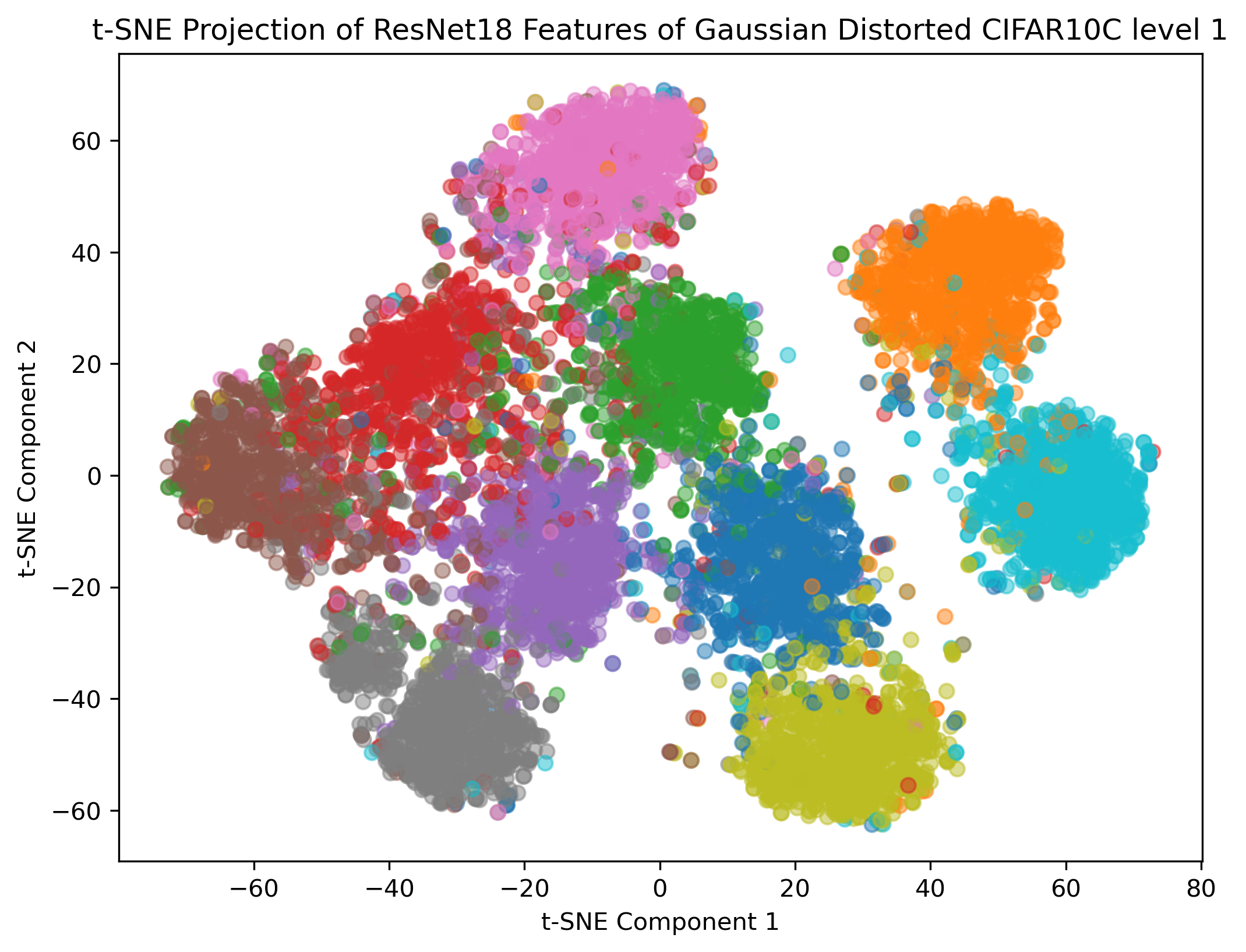}}; & \node {\includegraphics[width=2.5cm, trim = {2cm 1.5cm 1cm 1cm}, clip]{./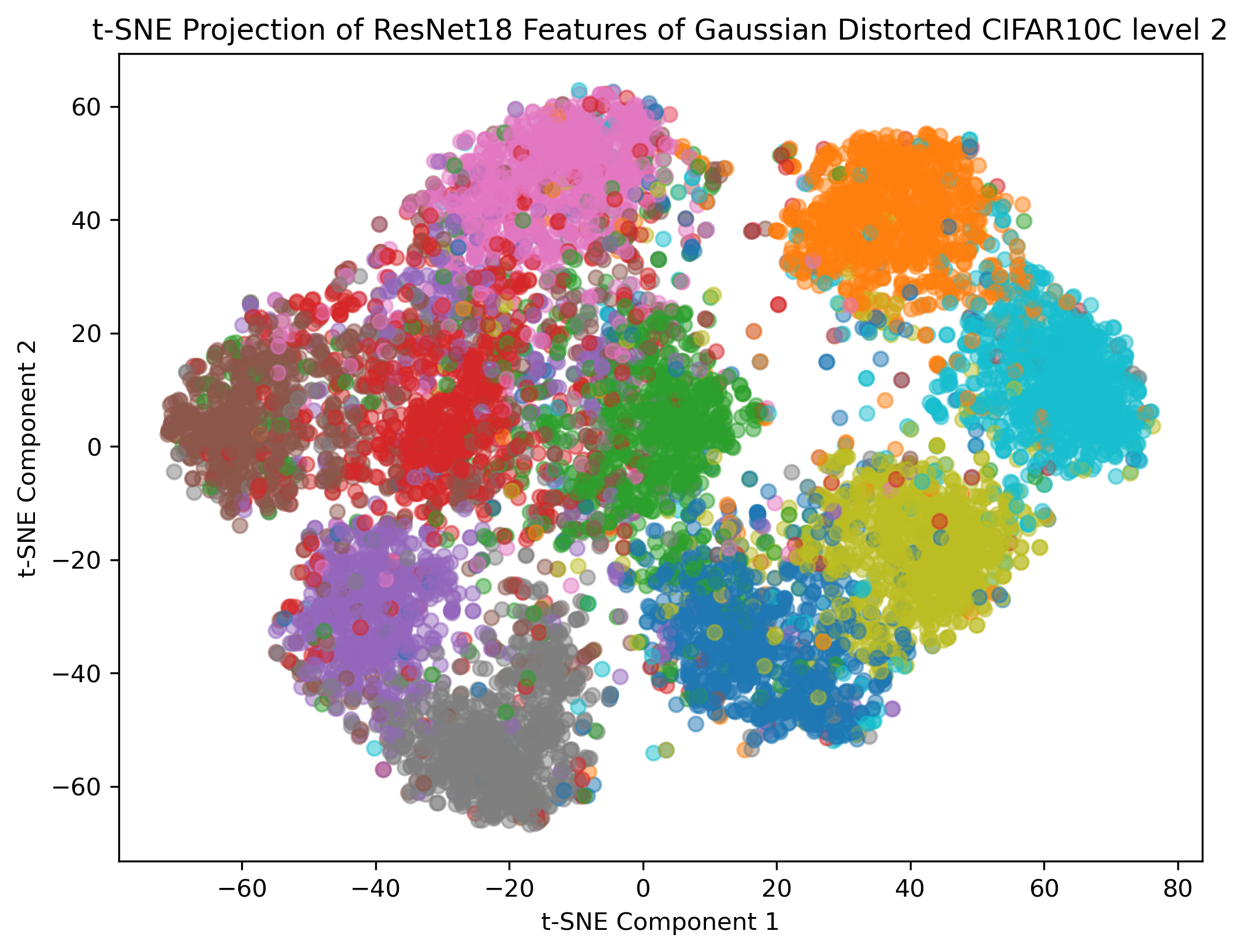}}; & \node {\includegraphics[width=2.5cm, trim = {2cm 1.5cm 1cm 1cm}, clip]{./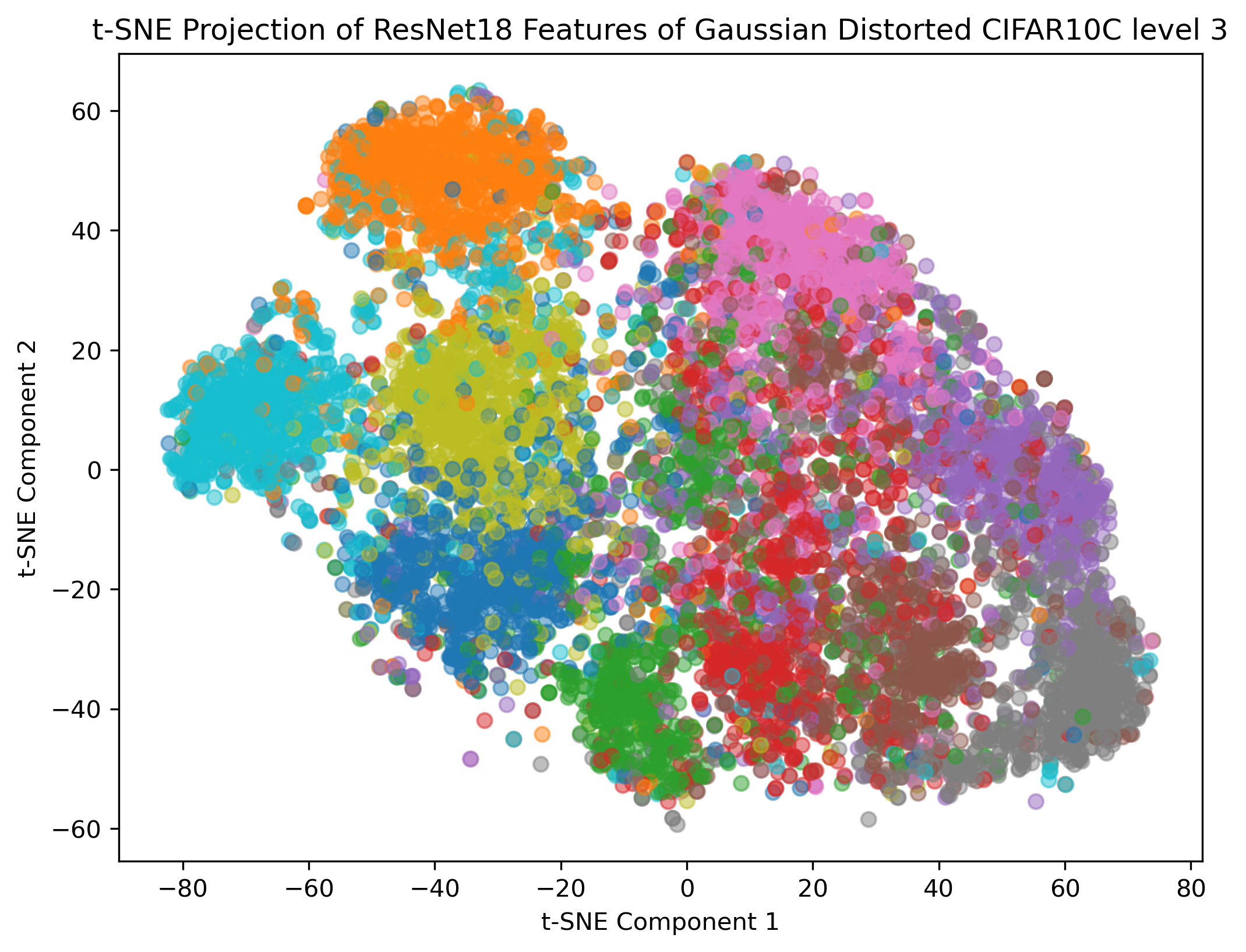}}; & \node {\includegraphics[width=2.5cm, trim = {2cm 1.5cm 1cm 1cm}, clip]{./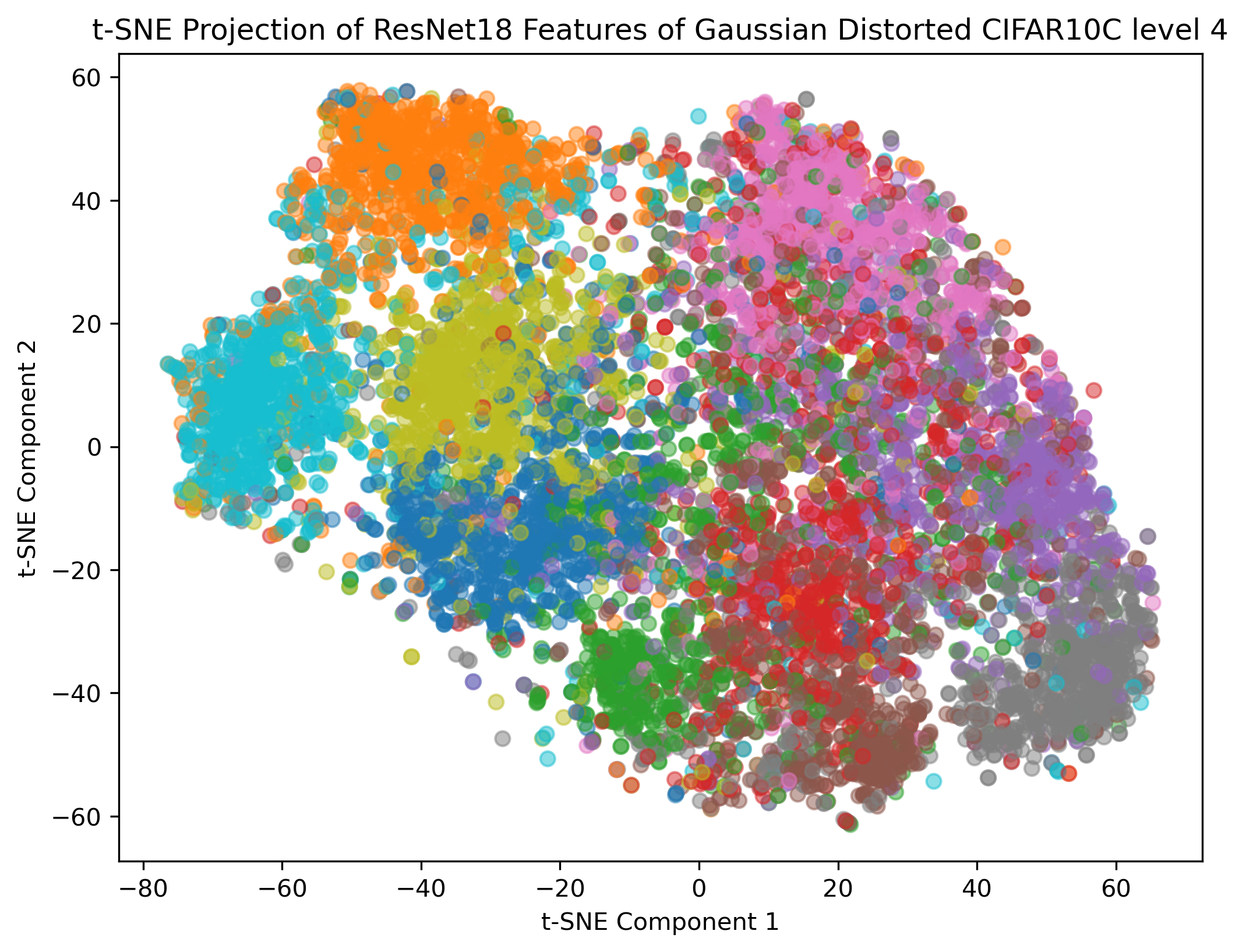}}; & \node {\includegraphics[width=2.5cm, trim = {2cm 1.5cm 1cm 1cm}, clip]{./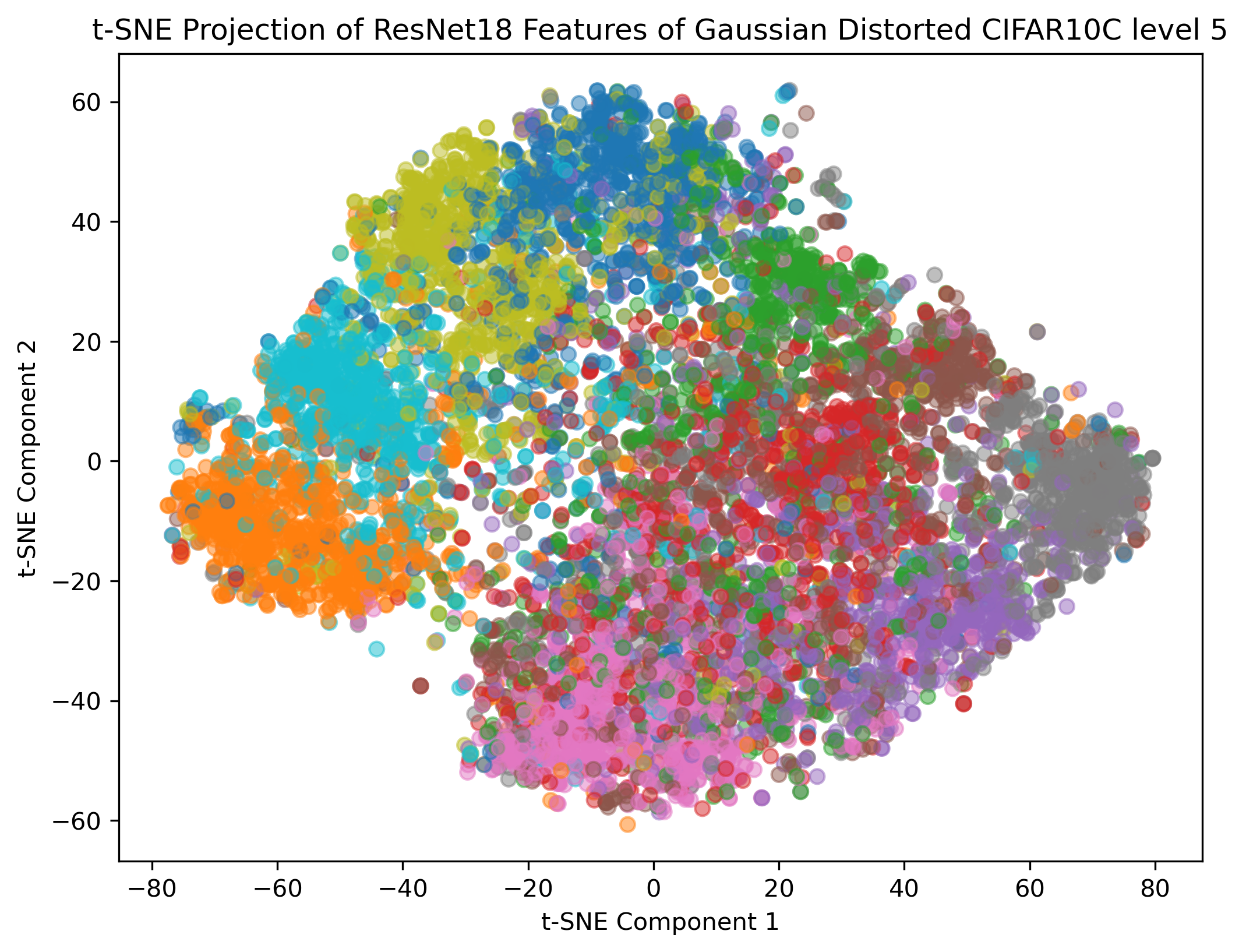}}; \\
    \node{Accuracy}; & \node{80.7}; & \node{66.7}; & \node{50.8}; & \node{43.6}; & \node{36.4}; \\
    \node[rotate=90, anchor=west] {Quant}; & \node {\includegraphics[width=2.5cm, trim = {2cm 1.5cm 1cm 1cm}, clip]{./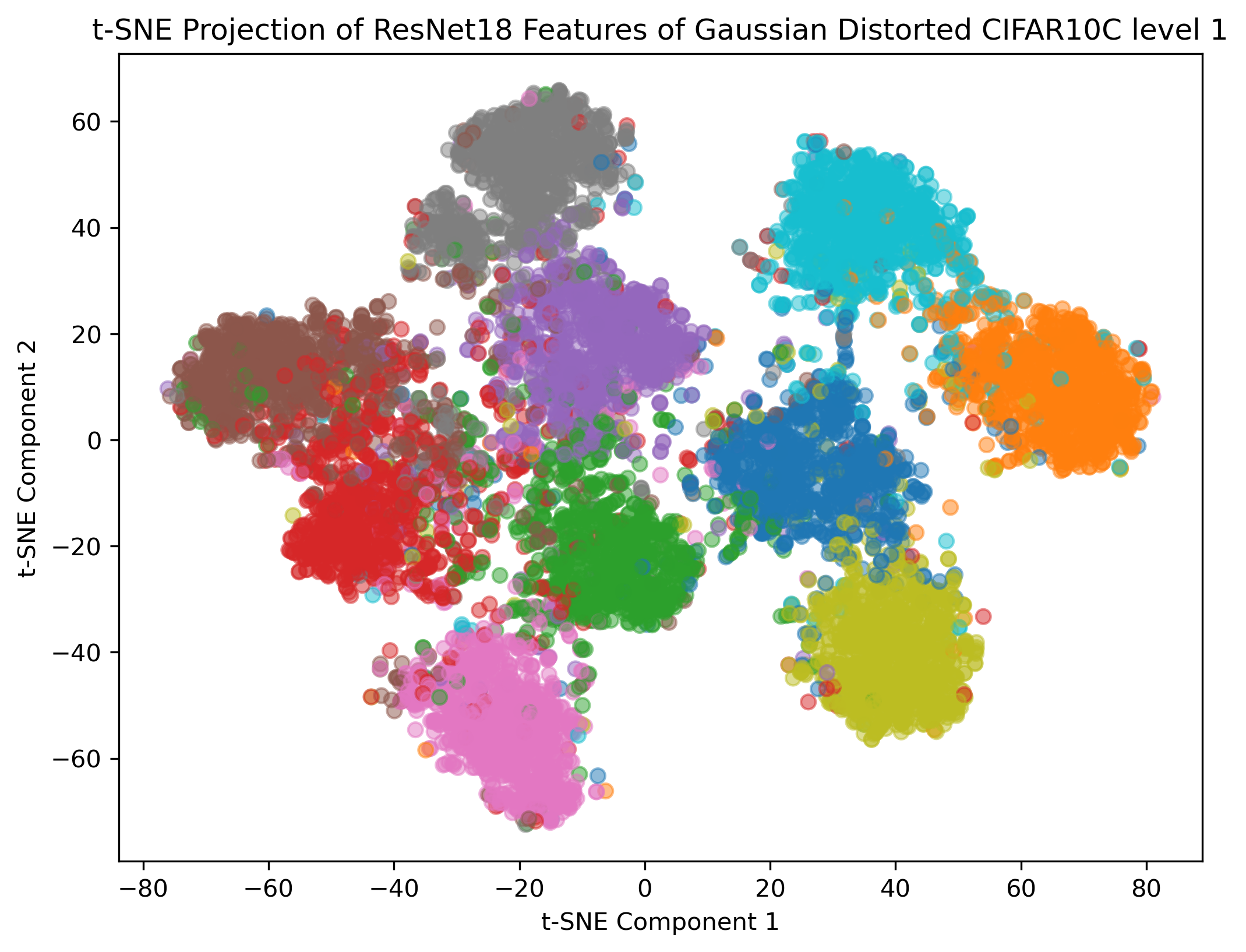}}; & \node {\includegraphics[width=2.5cm, trim = {2cm 1.5cm 1cm 1cm}, clip]{./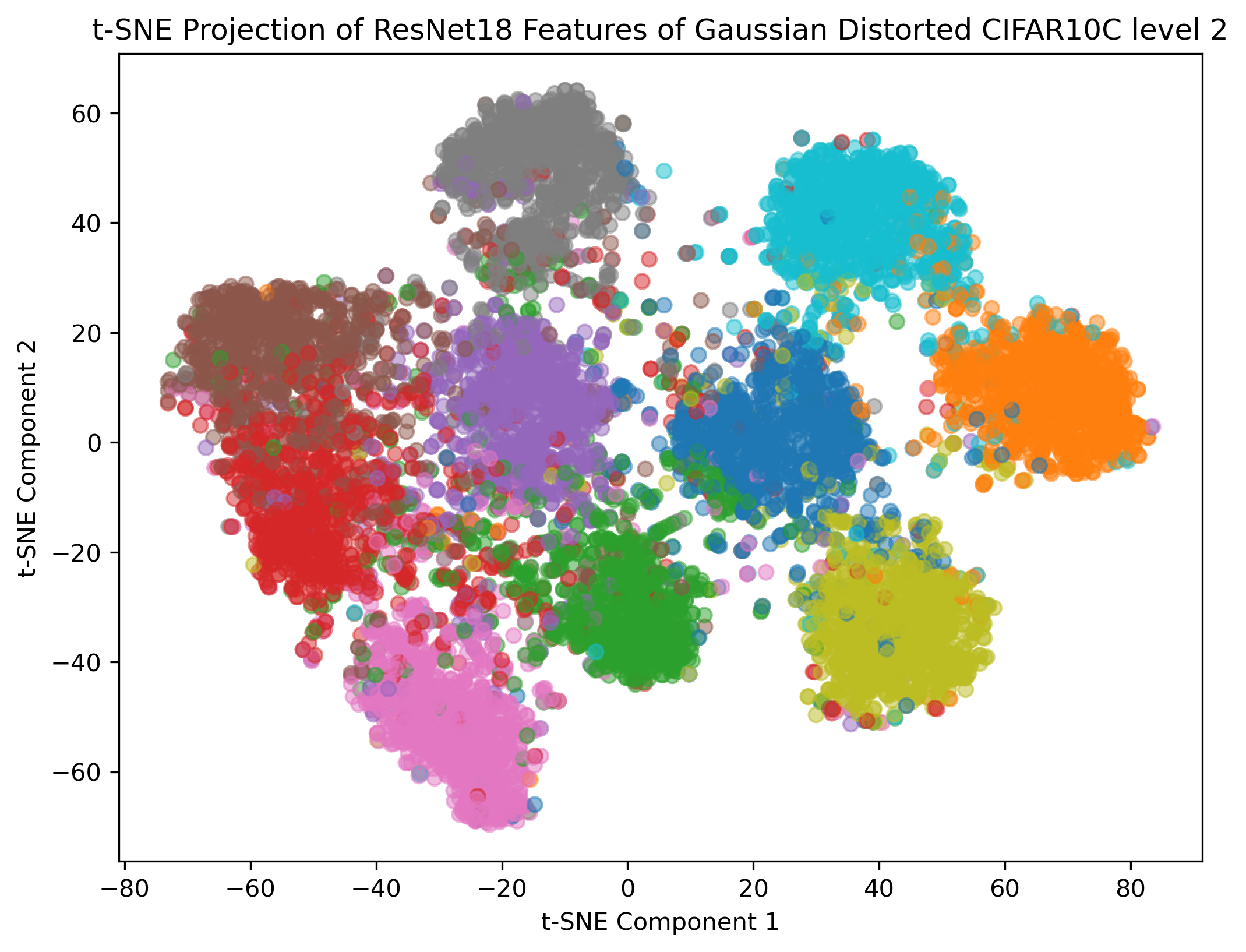}}; & \node {\includegraphics[width=2.5cm, trim = {2cm 1.5cm 1cm 1cm}, clip]{./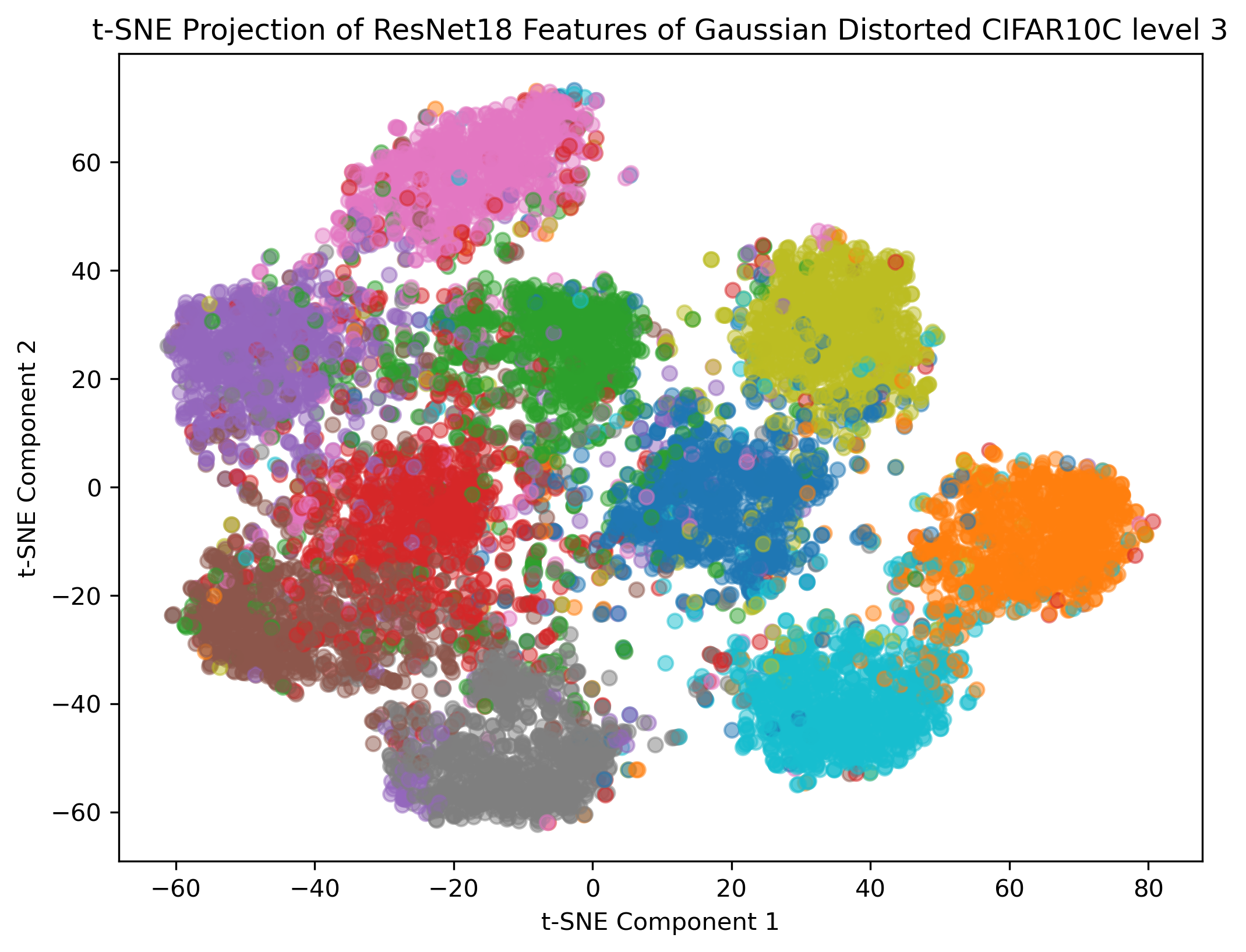}}; & \node {\includegraphics[width=2.5cm, trim = {2cm 1.5cm 1cm 1cm}, clip]{./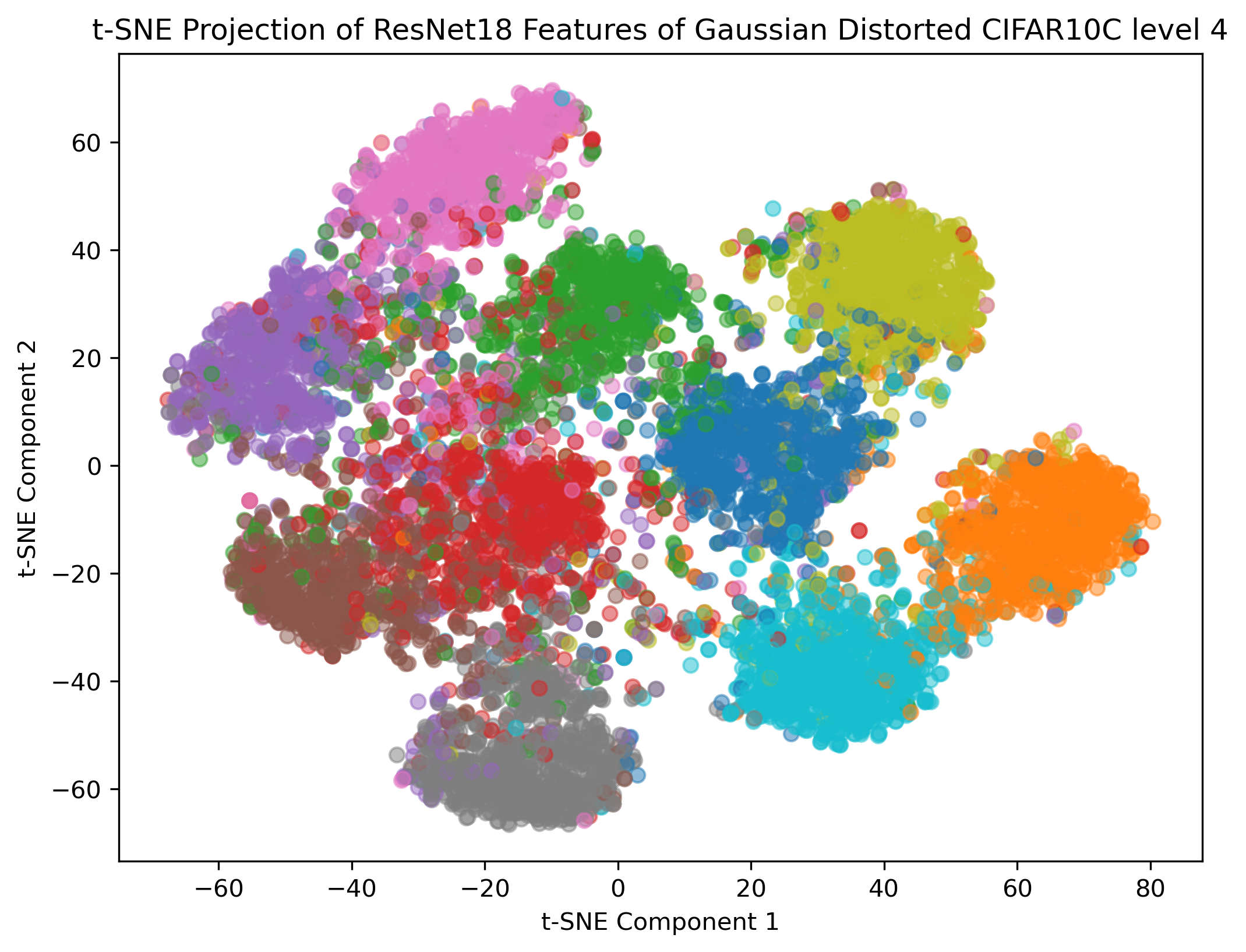}}; & \node {\includegraphics[width=2.5cm, trim = {2cm 1.5cm 1cm 1cm}, clip]{./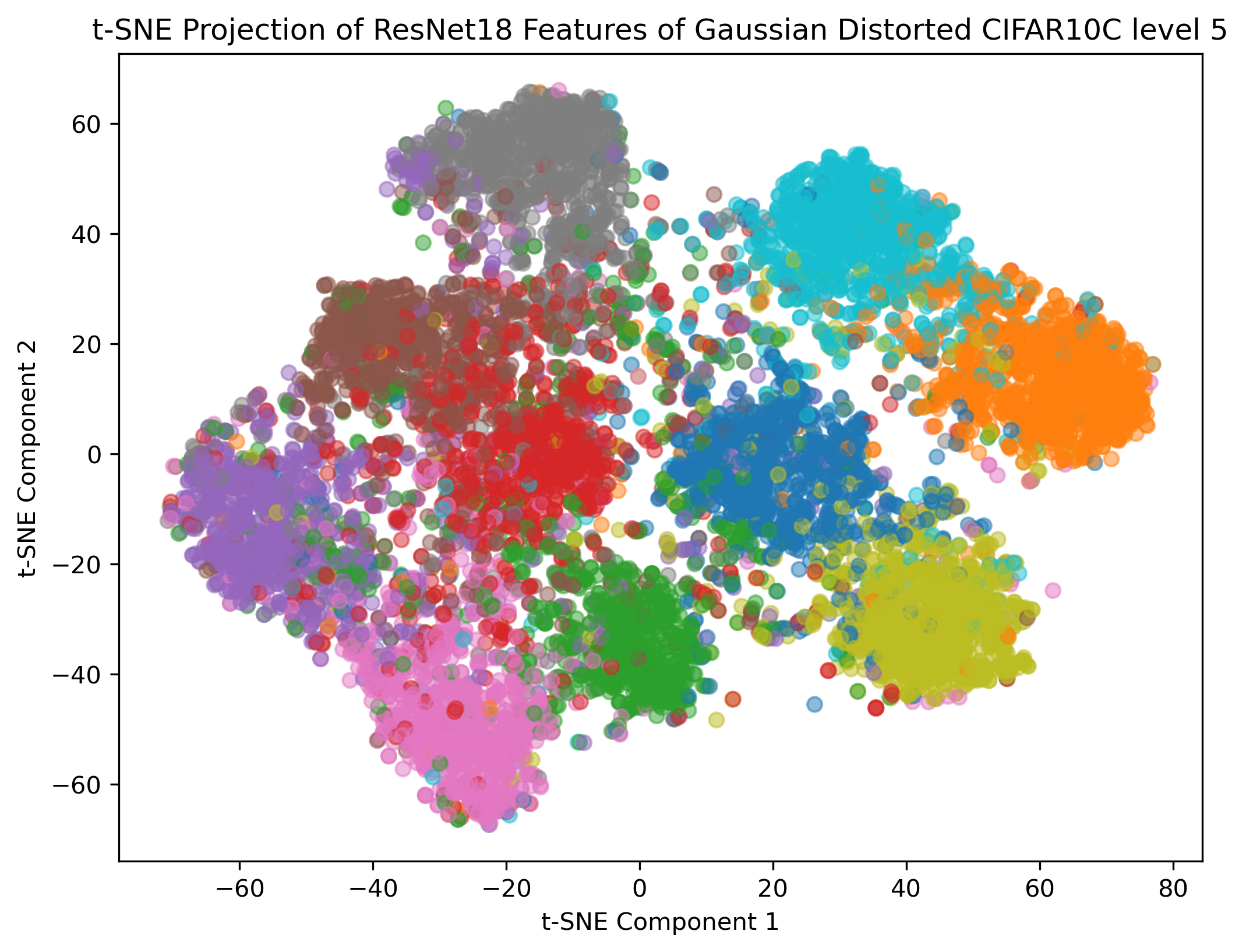}}; \\
    \node{Accuracy}; & \node{88.4}; & \node{85.7}; & \node{82.2}; & \node{81.0}; & \node{79.2}; \\
};
\end{tikzpicture}
% \end{adjustbox}
\caption{Comparing t-SNE plots of raw ResNet18 features versus Quantile-Loss corrected ResNet18 features on CIFAR10C with Gaussian distortions. ResNet18 is trained on CIFAR10. Observe that the corrected features maintain the clusters well at higher levels of distortions while the raw features lose the class structure as severity increases.}
\label{fig:intro}
\end{figure}

% Figure - quantile loss minimization in feature space
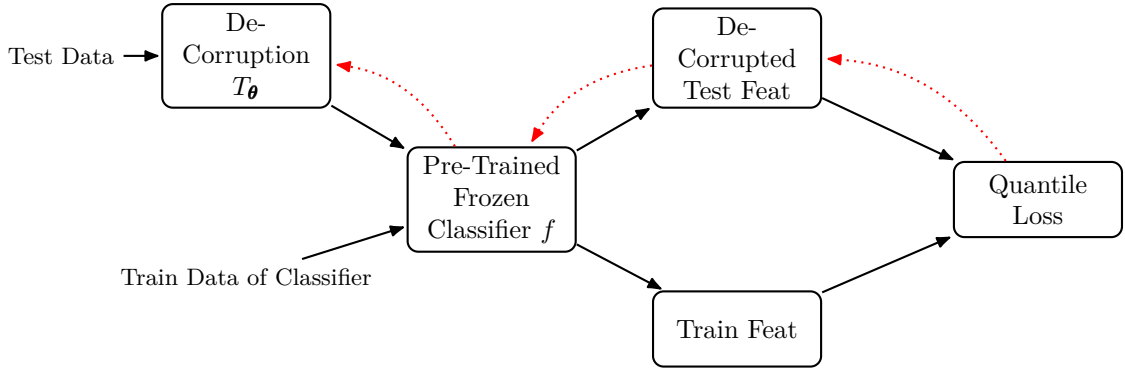
\begin{figure}[htbp]
  \centering
\begin{tikzpicture}[
    block/.style = {rectangle, draw, rounded corners, thick, minimum width=1.5cm, minimum height=1cm, text width=2cm, align=center},
    input/.style = {draw=none, font=\small},
    arrow/.style = {thick, -{Latex[round]}, shorten >=1pt},
    backprop/.style = {
        red, thick, dotted, -{Latex[round]},
        shorten >=2pt
    },
    node distance = 0.5cm and 0.5cm
  ]

  % Nodes
  \node[input] (inputA) {Test Data};
  \node[block, right=of inputA] (block1) {De-Corruption $T_{\pmb{\theta}}$};
  \node[block, below right=0.5cm and 1cm of block1] (network) {Pre-Trained Frozen Classifier $f$};

  \node[input, below=2.0cm of block1] (inputB) {Train Data of Classifier};
  % Input B connects to same Network

  \node[block, above right =0.5cm and 1cm of network] (outputA) {De-Corrupted Test Feat}; 
  \node[block, below right =0.5cm and 1cm of network] (outputB) {Train Feat}; 
  \node[block, right= 5cm of network] (loss) {Quantile Loss};

  % Arrows
  \draw[arrow] (inputA) -- (block1);
  \draw[arrow] (block1) -- (network);
  \draw[arrow] (inputB) -- (network);
  \draw[arrow] (network) -- (outputA);
  \draw[arrow] (network) -- (outputB);
  \draw[arrow] (outputA) -- (loss);
  \draw[arrow] (outputB) -- (loss);
  \draw[backprop] (loss) to[bend right=25] (outputA);
  \draw[backprop] (outputA) to[bend right=25] (network);
  \draw[backprop] (network) to[bend right=25] (block1);
  
\end{tikzpicture}
  \caption{Quantile-Matching pipeline - The black arrows denote the forward propagation. The dotted red arrows denote the backpropagation.}
  \label{fig:modelComponents}
\end{figure}

% contributions
Our contributions are summarized as follows:
\begin{enumerate}
    \item We propose an architecture-agnostic approach to solve TTA by explicitly learning a de-corruption mapping between the target and the source image spaces. The classifier pre-trained on the sourced data is frozen. The solution is to minimize a novel loss function \emph{quantile loss} between the features of the source data and the features obtained from the decorrupted target images (see Fig \ref{fig:modelComponents}). The \emph{quantile loss} is based on high-dimensional geometric quantiles and quantifies closeness between high-dimensional probability distributions. At inference, the decorrupter model is applied to the test image and the classifier is applied to the output. This eliminates modification or fine-tuning of large classifier models such as ViTs. 
    \item We show the existence of a `good' initialization of the decorruptor architecture at which minimization of quantile loss is theoretically equivalent to training the decorruptor using corrupt-clean pairs of images - which is a proxy for aligning class-conditionals. Thus, under `good' initialization, matching marginals of the features by minimizing quantile loss also aligns class-conditional features eliminating the need for any label information. Thanks to the non-trivial performance of classifiers on covariate-shifted data, an identity map satisfies the `good' initialization property. 
    \item The \emph{quantile loss} is not sample-separable but admits a weak-separability property dubbed as composite-separability. We propose an efficient algorithm for small-batch training of composite-separable loss functions using a memory bank.
    \item  We validate our claim on architecture-agnostic approach by showing empirical evidence on standard benchmark datasets CIFAR10C, CIFAR100C and TinyImageNetC using ResNet18, a compact convolutional transformer (CCT), convolutional vision transformer (CVT) and a light weight vision transformer (ViT-Lite). 
\end{enumerate}

% organization of the article
The rest of the article is organized as follows-: in section \ref{sec:existingApproaches}, we formally state the TTA problem and summarize the existing approaches. We briefly discuss as to why the current approaches either cannot be adapted to newer architectures or are inefficient. In section \ref{sec:theory}, we provide a background of high-dimensional geometric quantiles including a few existing results. We then define the quantile loss along with geometric intuition behind its design along with practical considerations on its usage. We then describe quantile-loss based solution to solving TTA and describe our algorithm. We characterize the class of loss functions that admit the weak sample-separability property that the quantile loss satisfies and propose an efficient algorithm to implement SGD for small batch-sizes. In section \ref{sec:experiments}, we validate our claims by applying quantile loss to both convolution-based classifiers and transformer-based classifiers for improving accuracy of the baselines. We then provide ablation studies on the choice of hyperparameters and provide pointers on choosing them.   

\section{Problem Setup and Existing Approaches}
\label{sec:existingApproaches}

\subsection{Problem Setup}

% problem statement 
The benchmarking datasets typically assume only a special case of TTA namely covariate-shifts. We thus formally define the TTA specific to covariate-shifts:

\textbf{Notation}: Let $\mathcal{D} = \{(\pmb{x}_i,y_i)\}_{1 \leq i \leq n}$, $\mathcal{D}^{+} = \{(\pmb{x}_j^{+},y_j)\}_{1 \leq j \leq m}$ be sample data from the clean joint probability distribution $\mu_{\pmb{X},Y}$ and corrupted joint distribution $\nu_{\pmb{X},Y}$ respectively. Covariate-shift allows us to assume a joint distribution on $(\pmb{X}^{+},\pmb{X})$ such that $\pmb{X} = T^{*}(\pmb{X}^{+}) + \pmb{\epsilon}$ with $E[\pmb{\epsilon} | \pmb{X}] = \pmb{0}$ and $\|Cov(\pmb{\epsilon}| \pmb{X}) \|_{op} \leq \delta$ i.e. the highest Eigenvalue of the covariance matrix is bounded by $\delta$. Here $y \in \mathbb{R}$ denotes the class-labels of the features $\pmb{x}, \pmb{x}^{+} \in \mathbb{R}^d$. Let $T_{\pmb{\theta}}: \mathbb{R}^{d} \rightarrow \mathbb{R}^{d}$ with $\pmb{\theta} \in \Theta$ denote a family of functions. Let $\nu_{\pmb{\theta},\pmb{X},Y}$ denote the de-corrupted joint distribution at parameter $\pmb{\theta}$ i.e. distribution of $(T_{\pmb{\theta}}(\pmb{X}^{+}),Y)$ with $(\pmb{X}^{+},Y) \sim \nu_{\pmb{X},Y}$. Let $f: \mathbb{R}^d \rightarrow \mathbb{R}^k$ denote the feature extractor of the classifier trained on $\mu_{\pmb{X},Y}$. Let the distribution of features obtained from the clean and the de-corrupted distributions at parameter $\pmb{\theta}$ using the frozen classifier be denoted by $\mu^{f} = f_{\#}\mu_{\pmb{X}}$ and $\nu^{f}_{\pmb{\theta}} = f_{\#}\nu_{\pmb{X},\pmb{\theta}}$ respectively. Let $h: \mathbb{R}^k \rightarrow \mathbb{R}^c$ denote the fully connected layer (i.e. last linear layer) of the classifier. Here $c$ denotes the number of classes. 

\textbf{Problem Statement}:  Can we correct for the covariate shifts i.e. estimate the de-corruption operator $T^{*}$ so that the classifier $f$ can be applied on the de-corrupted data? Note that we have access only to the $k$-dimensional features generated by the unlabelled original data - $\{f(\pmb{x}_i)\}_{1 \leq i \leq n}$, and the unlabelled corrupted images - $\{\pmb{x}_j^{+}\}_{1 \leq j \leq m}$, ?

\paragraph{Remarks:}The requirement on access to the features generated by the classifier on the training data is a mild assumption. Privacy concerns on the training data does not arise as our method requires only the probability distribution of these features. While the existing approaches do not use these features in their methods, we claim that this information is useful at the cost of a small memory overhead for storing the source features.

% \begin{enumerate}
%     \item The problem of obtaining the de-corruption operator becomes trivial if we have access to the aligned pairs $\{(\pmb{x}_j^{+}, \pmb{x}_j)\}$ as one can try to minimize the mean-squared error or a similar loss function between $f(T_{\pmb{\theta}}(\pmb{X}^{+}))$ and $f(\pmb{X})$ to estimate $T^{*}$. 
%     \item Existing approaches try to modify the parameters of the classifier $(h \circ f)$ directly without attempting to de-corrupt the image. 
% \end{enumerate}

\subsection{Literature Review}
% a brief survey of the existing approaches.
Existing approaches for TTA typically require either 1) modification of the batch-norm layers, or 2) retraining/fine-tuning of the classifier. Broadly, they use five categories of ideas: 1) batch-norm modification, 2) mean-teacher retraining, 3) Fisher-information regularization, 4) sharpness-aware entropy minimization, and 5) pseudo-label generation

\paragraph{BN-dependent methods} These methods assume that the classifier uses batch-norm layers and adapt the running statistics at the test time. BNStats \cite{nado2020evaluating} replaces the batch-norm statistics with those estimated from the test data. MedBN \cite{park2024medbn} uses the median of the test statistics instead of the mean. RoTTA \cite{yuan2023robust} employs an exponential moving average (EMA) of BN statistics from test batches, along with a teacher-student retraining scheme and a memory bank that favors recent and confident samples. SoTTA \cite{gong2024sotta} is similar to RoTTA but additionally optimizes a combination of entropy and perturbation entropy to avoid sharp minima. These approaches are effective when batch-norm layers are present but do not directly extend to transformer architectures, which typically use layer-norm and group-norm layers instead of batch-norm.

\paragraph{Mean-teacher retraining methods} These methods duplicate the model and use consistency training between a teacher and student. RoTTA uses slow EMA updates for the teacher and fast updates for the student. CoTTA \cite{wang2022continual} generates multiple augmentations and uses a majority-voting of the predicted labels whenever the model is under-confident. Additionally, it resets a fraction of parameters to the original state to reduce catastrophic forgetting. Since these methods are not inherently tied with batch-norm, they could be adapted to transformers in principle, but computational cost and training stability remain as challenges.

\paragraph{BN-independent regularization methods} These methods adapt models through entropy-based objectives and information-theoretic constraints. TENT \cite{wang2020tent} minimizes prediction entropy during test time. SAR \cite{niu2023towards} minimizes a combination of entropy and perturbation entropy. SAR relies on the flawed assumption that source-domain calibration transfers to shifted test data, overlooking how geometric shifts can induce high-confidence misclassifications that the adaptation process reinforces. EATA \cite{niu2022efficient} minimizes entropy with a Fisher-information regularizer. It is sample-efficient and scalable to long test streams, but on CIFAR100C its performance lags behind all other approaches.

\section{Quantile Loss: A Proxy for Matching High-Dimensional Geometric Quantiles}
\label{sec:theory}
\subsection{Background Theory}
Univariate quantiles can be viewed as solutions to a loss-minimization problem. Geometric quantiles are natural extensions of the loss-minimization problem in higher dimensions. They were introduced in \cite{MISC:Journal/jasa/probal96}. We recall the definition and replicate some of the useful properties here:

\begin{definition}
Let $\pmb{Z}_1, \cdots, \pmb{Z}_n \in \mathbb{R}^k$ be $k$-dimensional vectors sampled from a common probability distribution. Let $B^{(k)} = \{\pmb{u}: \pmb{u} \in  \mathbb{R}^k, \|u\|_2 < 1\}$ denote the Euclidean open ball of unit radius centered at the origin. For every $u \in B^{(k)}$ and $\pmb{Q} \in \mathbb{R}^k$, define the loss function
    \begin{equation}
        L_u(\pmb{Z}_1, \cdots, \pmb{Z}_n; \pmb{Q}) = \frac{1}{n}\sum_{i=1}^{n}\Phi(\pmb{u}, \pmb{Z}_i - \pmb{Q})
    \end{equation}
    where $\Phi: \mathbb{R}^k \times \mathbb{R}^k \rightarrow \mathbb{R}$ is given by
    \begin{equation}
        \Phi(\pmb{u},\pmb{t}) = \| \pmb{t} \|_2 + <\pmb{u}, \pmb{t}>
    \end{equation}
    with $\|.\|_2$ denoting the Euclidean norm and $<.,.>$ denoting the usual dot product.
    The geometric sample quantile at $\pmb{u}$ is denoted by $\hat{\pmb{Q}}_n(\pmb{u})$ and is given by
    \begin{equation}
        \hat{\pmb{Q}}_n(\pmb{u}) = argmin_{\pmb{Q} \in \mathbb{R}^k}L_u(\pmb{Z}_1, \cdots, \pmb{Z}_n; \pmb{Q})
    \end{equation}
    Likewise, the geometric population quantile at $\pmb{u}$ is defined as  
    \begin{equation}
        \pmb{Q}(\pmb{u}) = argmin_{\pmb{Q} \in \mathbb{R}^k}E_{\pmb{Z}} \left[ \Phi(\pmb{u}, \pmb{Z} - \pmb{Q}) \right]
    \end{equation}    
\end{definition}

Some properties of geometric quantiles:

\begin{enumerate}
    \item For every $\pmb{u} \in B^{(k)}$, the sample quantile $\hat{\pmb{Q}}_n(\pmb{u})$ exists and is finite (as a consequence of the convexity of the loss function it minimizes).
    \item For every $\pmb{u} \in B^{(k)}$, the sample quantile $\hat{\pmb{Q}}_n(\pmb{u})$ is unique if $\pmb{Z}_1, \cdots, \pmb{Z}_n \in \mathbb{R}^k$ do not lie on a straight line. This assumption is reasonable for real data. 
    \item Every data sample $\pmb{Z}_1, \cdots, \pmb{Z}_n \in \mathbb{R}^k$ lies on the support of the sample quantiles of the distribution i.e. each $\pmb{Z}_i$ is a sample quantile and the corresponding index $u_i \in B^{(k)}$ can be computed using theorem \ref{thm:inverseMap} 
\end{enumerate}

\begin{theorem}
\label{thm:inverseMap}
    \textbf{(Inverse Map)} \cite{MISC:Journal/jasa/probal96}: Consider the data $\pmb{Z}_1, \cdots, \pmb{Z}_n \in \mathbb{R}^k$ sampled from a common probability distribution. Assume that $n$ is large and the samples do not lie on a straight line and are distinct. Let $\pmb{u} \in  B^{(k)}$. Suppose $\hat{\pmb{Q}}_n(\pmb{u})$ denotes the sample quantile (existence is guaranteed) then $\pmb{u}$ satisfies the following relation:
    
    If $\hat{\pmb{Q}}_n(\pmb{u}) \notin \{\pmb{Z}_1, \cdots, \pmb{Z}_n\} $ then
    \begin{equation}
    \label{eq:SampleQuantId}
     \pmb{u} = \frac{1}{n}\sum_{i=1}^{n}\frac{\hat{\pmb{Q}}_n(\pmb{u}) - \pmb{Z}_i}{\| \hat{\pmb{Q}}_n(\pmb{u}) - \pmb{Z}_i \|}   
    \end{equation}
    and if $\hat{\pmb{Q}}_n(\pmb{u}) = \pmb{Z}_r$ for some $1 \leq r \leq n$ then
    \begin{equation}
    \label{eq:SampleQuantIdSupp}
     \pmb{u} = \frac{1}{n-1}\sum_{i=1,i \neq r}^{n}\frac{\hat{\pmb{Q}}_n(\pmb{u}) - \pmb{Z}_i}{\| \hat{\pmb{Q}}_n(\pmb{u}) - \pmb{Z}_i \|}   
    \end{equation}     
\end{theorem}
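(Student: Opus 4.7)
The plan is to deduce both identities from the first-order (sub)gradient optimality conditions for the convex program defining $\hat{\pmb{Q}}_n(\pmb{u})$. Since $L_u(\pmb{Z}_1, \ldots, \pmb{Z}_n; \pmb{Q})$ is an average of convex functions of $\pmb{Q}$, any global minimizer $\pmb{Q}^* = \hat{\pmb{Q}}_n(\pmb{u})$ is characterized by $\pmb{0} \in \partial_{\pmb{Q}} L_u(\pmb{Q}^*)$. The atomic calculation is the subdifferential of the single-term map $\pmb{Q} \mapsto \Phi(\pmb{u}, \pmb{Z}_i - \pmb{Q}) = \|\pmb{Z}_i - \pmb{Q}\| + \langle \pmb{u}, \pmb{Z}_i - \pmb{Q}\rangle$: the inner-product part always contributes $-\pmb{u}$, while the norm part is smooth away from $\pmb{Z}_i$ with gradient $(\pmb{Q} - \pmb{Z}_i)/\|\pmb{Q} - \pmb{Z}_i\|$, and non-smooth at $\pmb{Q} = \pmb{Z}_i$ with subdifferential equal to the closed Euclidean unit ball $\bar{B}^{(k)}$.

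In the first case, where $\hat{\pmb{Q}}_n(\pmb{u}) \notin \{\pmb{Z}_1, \ldots, \pmb{Z}_n\}$, every summand is smooth at the minimizer, so $L_u$ itself is differentiable there. Setting $\nabla_{\pmb{Q}} L_u(\hat{\pmb{Q}}_n(\pmb{u})) = \pmb{0}$ yields
\begin{equation*}
\frac{1}{n}\sum_{i=1}^{n}\frac{\hat{\pmb{Q}}_n(\pmb{u}) - \pmb{Z}_i}{\|\hat{\pmb{Q}}_n(\pmb{u}) - \pmb{Z}_i\|} \;=\; \pmb{u},
\end{equation*}
which is exactly (\ref{eq:SampleQuantId}). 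The non-collinearity hypothesis gives strict convexity of $L_u$ off the affine hull of the data, so this gradient equation pins down $\hat{\pmb{Q}}_n(\pmb{u})$ uniquely.

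In the second case, where $\hat{\pmb{Q}}_n(\pmb{u}) = \pmb{Z}_r$, the $r$-th summand is the only non-smooth term: its norm part contributes the set $\bar{B}^{(k)}$ and its inner-product part contributes $-\pmb{u}$, while the remaining $n-1$ summands contribute their ordinary gradients. The optimality condition $\pmb{0} \in \partial L_u(\pmb{Z}_r)$ then reduces to the set inclusion
\begin{equation*}
n\pmb{u} \;-\; \sum_{i \neq r}\frac{\pmb{Z}_r - \pmb{Z}_i}{\|\pmb{Z}_r - \pmb{Z}_i\|} \;\in\; \bar{B}^{(k)}.
\end{equation*}
Equation (\ref{eq:SampleQuantIdSupp}) corresponds to the canonical selection in which the residual vector on the left-hand side equals $\pmb{u}$ itself, which is admissible because $\pmb{u} \in B^{(k)} \subset \bar{B}^{(k)}$. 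Substituting this choice back collapses the inclusion to the equality $(n-1)\pmb{u} = \sum_{i \neq r}(\pmb{Z}_r - \pmb{Z}_i)/\|\pmb{Z}_r - \pmb{Z}_i\|$, which is (\ref{eq:SampleQuantIdSupp}). Heuristically, this is the limit of the Case 1 identity as an interior minimizer collapses onto $\pmb{Z}_r$ with the $r$-th unit direction absorbing exactly the residual subgradient $\pmb{u}$.

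The main obstacle I expect is precisely the second case: subgradient optimality only produces a set inclusion, so (\ref{eq:SampleQuantIdSupp}) is really singling out a canonical index $\pmb{u}_r$ associated with the data point $\pmb{Z}_r$ rather than being a necessary consequence for every $\pmb{u}$ in the preimage of $\pmb{Z}_r$. The cleanest way to close this gap is to verify the reverse direction: take the $\pmb{u}$ defined by the right-hand side of (\ref{eq:SampleQuantIdSupp}), plug it into the subgradient inclusion above, and check that the residual $\pmb{u}$ indeed lies in $\bar{B}^{(k)}$, which then confirms $\hat{\pmb{Q}}_n(\pmb{u}) = \pmb{Z}_r$ and justifies reading (\ref{eq:SampleQuantIdSupp}) as the formula for the index attached to the atom $\pmb{Z}_r$ in the subsequent algorithm.
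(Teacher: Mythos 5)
The paper offers no proof of this theorem: it is recalled verbatim from the cited Chaudhuri (1996) reference, so there is nothing internal to compare against. Your subgradient derivation is the standard one and is correct. Case 1 is immediate from stationarity of the smooth objective, and your reading of Case 2 is exactly right: the first-order condition at an atom only yields the inclusion $n\pmb{u} - \sum_{i\neq r}(\pmb{Z}_r - \pmb{Z}_i)/\|\pmb{Z}_r - \pmb{Z}_i\| \in \bar{B}^{(k)}$, so Eq.~(\ref{eq:SampleQuantIdSupp}) is not a necessary consequence for every $\pmb{u}$ in the (positive-measure) preimage of $\pmb{Z}_r$; it is the canonical index the paper later assigns to the atom $\pmb{Z}_r$ in Algorithm~1. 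Your proposed closure — verify the reverse direction by substituting the right-hand side of (\ref{eq:SampleQuantIdSupp}) into the inclusion — works; the only small point worth adding is that one must also confirm this $\pmb{u}$ lies in the \emph{open} ball $B^{(k)}$, which follows because it is an average of $n-1$ unit vectors that cannot all coincide under the non-collinearity and distinctness hypotheses, so its norm is strictly less than $1$.
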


Theorem \ref{thm:inverseMap} implies that every vector in $\mathbb{R}^k$ is a quantile of a probability distribution in $k$ dimensions (assuming the support is not on a straight line). The inverse map provides a canonical mapping between the space $\mathbb{R}^k$ and the open unit ball $B^{(k)}$.   

\begin{theorem}
\label{thm:Uniqueness}
    \textbf{(Population Quantiles Determine Probability Distribution Uniquely)} \cite{MISC:Journal/jasa/probal96} Let $\pmb{U}, \pmb{V}$ be random variables in $\mathbb{R}^k$ then if for each $\pmb{u} \in B^{(k)}$ if the population quantiles are equal i.e. $Q_U(\pmb{u}) = Q_V(\pmb{u})$ then $\pmb{U}$ and $\pmb{V}$ have the same probability distribution.
\end{theorem}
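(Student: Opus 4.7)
The plan is to reduce the statement to a uniqueness property of convolution with the Euclidean norm, using the first-order optimality conditions of the loss defining $\pmb{Q}(\pmb{u})$. For any random vector $\pmb{W}$ in $\mathbb{R}^k$, I introduce the auxiliary function
\[
g_W(\pmb{Q}) \;=\; E\bigl[\|\pmb{Q}-\pmb{W}\|_2 - \|\pmb{W}\|_2\bigr],
\]
which, by the reverse triangle inequality, is bounded by $\|\pmb{Q}\|_2$ and therefore finite for every $\pmb{Q}\in\mathbb{R}^k$ without any moment assumption on $\pmb{W}$. Since $E[\Phi(\pmb{u},\pmb{W}-\pmb{Q})]$ differs from $g_W(\pmb{Q})-\langle\pmb{u},\pmb{Q}\rangle$ only by a term that is constant in $\pmb{Q}$, the population quantile $\pmb{Q}_W(\pmb{u})$ equivalently minimizes the convex functional $\pmb{Q}\mapsto g_W(\pmb{Q})-\langle\pmb{u},\pmb{Q}\rangle$.

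The first step is to derive the population analogue of Theorem~\ref{thm:inverseMap}. Differentiating under the expectation (justified by dominated convergence off the atoms of $\mu_W$) gives $\nabla g_W(\pmb{Q}) = E\bigl[(\pmb{Q}-\pmb{W})/\|\pmb{Q}-\pmb{W}\|_2\bigr]$, and the first-order condition for the quantile reads $\nabla g_W(\pmb{Q}_W(\pmb{u})) = \pmb{u}$. Next I would verify surjectivity of the quantile map: for any fixed $\pmb{Q}\in\mathbb{R}^k$ that is not an atom of $\mu_W$, the vector $E\bigl[(\pmb{Q}-\pmb{W})/\|\pmb{Q}-\pmb{W}\|_2\bigr]$ has Euclidean norm strictly less than one whenever $\mu_W$ is not supported on a line (strict inequality in Jensen's inequality applied to $\|\cdot\|_2$), so it lies in $B^{(k)}$ and realizes $\pmb{Q}$ as a quantile. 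Applying this to both distributions, the hypothesis $Q_U(\pmb{u})=Q_V(\pmb{u})$ for every $\pmb{u}\in B^{(k)}$ forces $\nabla g_U(\pmb{Q}) = \nabla g_V(\pmb{Q})$ on a dense subset of $\mathbb{R}^k$; by continuity of the gradients this equality extends to all of $\mathbb{R}^k$, and integration yields $g_U = g_V + c$ for some constant $c$.

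The final and principal step is a convolution-uniqueness argument. Unwinding $g_W$, the identity $g_U-g_V\equiv c$ says that $\|\cdot\|_2 * (\mu_U-\mu_V)$ is constant on $\mathbb{R}^k$, i.e.\ vanishes in the sense of tempered distributions up to an additive constant. Taking Fourier transforms, the product $\widehat{\|\cdot\|_2}\cdot\widehat{\mu_U-\mu_V}$ is a multiple of the Dirac mass at the origin. Since $\widehat{\|\cdot\|_2}$ is a nonzero, smooth, radially homogeneous tempered distribution on $\mathbb{R}^k\setminus\{0\}$ (a constant multiple of $\|\xi\|_2^{-k-1}$), while $\widehat{\mu_U-\mu_V}$ is continuous (being the Fourier transform of a finite signed measure), this forces $\widehat{\mu_U-\mu_V}\equiv 0$, so by Fourier uniqueness $\mu_U=\mu_V$. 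The main obstacle I expect is precisely this distributional Fourier step: $\|\cdot\|_2$ is an unbounded kernel, and one must carefully interpret the convolution identity in $\mathcal{S}'(\mathbb{R}^k)$ and handle the additive constant. A more hands-on alternative I would fall back on is to apply a fractional Laplacian (or iterated finite differences) to $g_U-g_V\equiv c$, converting the statement into the equality of the associated Riesz potentials of $\mu_U$ and $\mu_V$, which are then inverted by standard potential theory to recover $\mu_U=\mu_V$.
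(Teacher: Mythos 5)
The paper does not actually prove Theorem \ref{thm:Uniqueness}; it imports it from Chaudhuri (1996), and the closest in-paper relative is Proposition \ref{prop:QuantileBasis}, which is proved only for discrete distributions with finite support via a linear-algebra argument on the quantile index function. Your argument is therefore a genuinely different --- and essentially correct --- route: it is, in substance, Koltchinskii's potential-theoretic proof of this uniqueness result. The reduction is right: the recentered functional $g_W$ is finite without moment assumptions, the first-order condition $\nabla g_W(\pmb{Q}_W(\pmb{u}))=\pmb{u}$ recovers the population analogue of Theorem \ref{thm:inverseMap}, strict Jensen gives that $U_{\mu_W}(\pmb{Q})$ lies in the open ball $B^{(k)}$ for non-atoms, and equality of the two quantile maps forces $\nabla g_U=\nabla g_V$ on a co-countable (hence dense) set, whence $g_U-g_V$ is constant. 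For that last step, do not invoke ``continuity of the gradients'' (the gradients are undefined at atoms); the correct statement is that two finite convex functions on $\mathbb{R}^k$ whose gradients agree almost everywhere differ by a constant. What your approach buys over the paper's Proposition \ref{prop:QuantileBasis} is the general (non-discrete) case; what it costs is the harmonic-analysis machinery.

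Two caveats. First, the Fourier step as you first state it is not rigorous: $\widehat{\mu_U-\mu_V}$ is merely a bounded continuous function, so its product with the tempered distribution $\widehat{\|\cdot\|_2}$ (which itself requires a finite-part regularization at the origin, with possible delta corrections) is not defined. Your fallback is the correct fix and can be made clean: test the identity $g_U-g_V\equiv c$ against $(-\Delta)^{(k+1)/2}\phi$ for $\phi\in\mathcal{S}(\mathbb{R}^k)$. The recentering term $-\|w\|_2$ contributes nothing because $\int(-\Delta)^{(k+1)/2}\phi=0$, Fubini is justified by the decay of $(-\Delta)^{(k+1)/2}\phi$, and the Riesz inversion $\int\|\pmb{Q}-\pmb{w}\|_2\,\bigl((-\Delta)^{(k+1)/2}\phi\bigr)(\pmb{Q})\,d\pmb{Q}=c_k\,\phi(\pmb{w})$ with $c_k\neq0$ then yields $\int\phi\,d(\mu_U-\mu_V)=0$ for all $\phi$, i.e.\ $\mu_U=\mu_V$. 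Second, the theorem as stated omits the non-degeneracy hypothesis ($k\ge 2$ and distributions not supported on a line) that both your proof and the original result need --- for uniqueness of the quantiles and for the strictness in Jensen's inequality; that omission is the paper's, not yours, but your proof should state it explicitly.
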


\begin{theorem}
\label{thm:AsymptoticConvergence}
    \textbf{(Asymptotic Convergence)} \cite{MISC:Journal/jasa/probal96}: Consider the data $\pmb{Z}_1, \cdots, \pmb{Z}_n \in \mathbb{R}^k$ sampled from a common probability distribution. Assume that $n$ is large and the samples do not lie on a straight line and are distinct. Let $\pmb{u} \in  B^{(k)}$. Then the sample quantiles $\hat{\pmb{Q}}_n(\pmb{u})$ at $\pmb{u}$ converge to the population quantiles $\pmb{Q(u)}$ i.e. 
    \begin{equation}
        \hat{\pmb{Q}}_n(\pmb{u}) \rightarrow \pmb{Q(u)} \ \text{as}  \ n \rightarrow \infty
    \end{equation}
    
\end{theorem}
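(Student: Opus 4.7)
The plan is to treat this as a consistency result for a convex M-estimator, chaining (i) pointwise convergence almost surely of the empirical objective by the SLLN applied to a centered version of the loss, (ii) the classical convexity lemma promoting pointwise to uniform-on-compacts convergence, and (iii) an argmin-convergence step using coercivity of the limiting objective together with uniqueness of its minimizer. Writing $M_n(\pmb{Q}) = \frac{1}{n}\sum_{i=1}^{n}\Phi(\pmb{u}, \pmb{Z}_i - \pmb{Q})$ and $M(\pmb{Q}) = E[\Phi(\pmb{u}, \pmb{Z} - \pmb{Q})]$, both are convex in $\pmb{Q}$, and Cauchy--Schwarz gives $\Phi(\pmb{u}, \pmb{t}) \geq (1 - \|\pmb{u}\|)\|\pmb{t}\|_2$. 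Since $\|\pmb{u}\| < 1$ on $B^{(k)}$, both $M_n$ and $M$ are therefore coercive; the non-collinearity of the support makes $M$ strictly convex and pins down a unique minimizer $\pmb{Q}(\pmb{u})$.

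For step (i), I would avoid any moment assumption on $\pmb{Z}$ by working with the centered summand $\Phi(\pmb{u}, \pmb{Z} - \pmb{Q}) - \Phi(\pmb{u}, \pmb{Z})$. The reverse triangle inequality combined with Cauchy--Schwarz yields the uniform bound $|\Phi(\pmb{u}, \pmb{Z} - \pmb{Q}) - \Phi(\pmb{u}, \pmb{Z})| \leq (1 + \|\pmb{u}\|)\|\pmb{Q}\|$, so the centered summand is bounded independently of $\pmb{Z}$, hence integrable, and the SLLN delivers $M_n(\pmb{Q}) - M_n(\pmb{0}) \to M(\pmb{Q}) - M(\pmb{0})$ almost surely for every fixed $\pmb{Q}$. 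For step (ii), I would invoke the standard convexity lemma (Rockafellar): a sequence of finite convex functions on $\mathbb{R}^k$ that converges pointwise also converges uniformly on every compact set. Applied to the centered sample objectives, this upgrades the previous pointwise convergence to uniform convergence on every compact $K \subset \mathbb{R}^k$, almost surely.

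For step (iii), I would pick a closed ball $K_R$ around $\pmb{Q}(\pmb{u})$ large enough that coercivity of $M$ forces $\inf_{\partial K_R} M > M(\pmb{Q}(\pmb{u}))$; uniform convergence on $K_R$ then transfers the same strict inequality to $M_n$ eventually, and convexity pins $\hat{\pmb{Q}}_n(\pmb{u}) \in K_R$ almost surely for large $n$. Combined with uniqueness of the minimizer of $M$, the usual argmin-continuity argument yields $\hat{\pmb{Q}}_n(\pmb{u}) \to \pmb{Q}(\pmb{u})$ almost surely. The main obstacle is precisely this tightness step: one must rule out that the sample minimizers drift to infinity, and the clean way to do this is the coercivity bound $\Phi(\pmb{u}, \pmb{t}) \geq (1 - \|\pmb{u}\|)\|\pmb{t}\|_2$, which critically uses $\pmb{u}$ being strictly inside the open unit ball---if $\|\pmb{u}\| = 1$ the tail contribution can vanish along some direction and the minimizer need not be well-defined. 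Everything else---integrability via centering, the pointwise SLLN, and uniform convergence on compacts---becomes routine thanks to convexity.
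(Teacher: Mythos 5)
The paper does not prove this theorem at all: it is imported verbatim from Chaudhuri (1996) and merely cited, so there is no in-paper argument to compare against. Your reconstruction is the standard consistency proof for convex M-estimators and is essentially correct: centering by $\Phi(\pmb{u},\pmb{Z})$ to get a bounded summand without moment assumptions, the SLLN pointwise, the Rockafellar/Pollard convexity lemma to upgrade to uniform convergence on compacts, and the coercivity-plus-unique-minimizer argmin step are exactly the ingredients used in the original reference and in the spatial-median literature it builds on. One small slip to tidy up: you assert that ``both $M_n$ and $M$ are therefore coercive'' from the bound $\Phi(\pmb{u},\pmb{t}) \geq (1-\|\pmb{u}\|)\|\pmb{t}\|_2$, but if $E\|\pmb{Z}\|=\infty$ the uncentered $M$ is identically $+\infty$ and the statement is vacuous; the coercivity you actually need in step (iii) is that of the \emph{centered} objective $\pmb{Q} \mapsto E[\Phi(\pmb{u},\pmb{Z}-\pmb{Q})-\Phi(\pmb{u},\pmb{Z})]$. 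This still holds: splitting on the event $\{\|\pmb{Z}\|\leq R\}$ with $P(\|\pmb{Z}\|>R)<\tfrac{1}{2}(1-\|\pmb{u}\|)$ gives a lower bound of the form $(1-2\epsilon-\|\pmb{u}\|)\|\pmb{Q}\| - C$, so the argument goes through once you carry the centering consistently into the localization step. Also worth stating explicitly is why strict convexity of the population objective follows from non-collinearity (the linear term $\langle\pmb{u},\cdot\rangle$ is harmless; $E\|\pmb{Z}-\pmb{Q}\|$ fails to be strictly convex only along lines carrying the whole mass), since uniqueness of $\pmb{Q}(\pmb{u})$ is what the final argmin-continuity step leans on.
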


Theorem \ref{thm:Uniqueness} and theorem \ref{thm:AsymptoticConvergence} together imply that it if a pair of high-dimensional distributions $\pmb{V}$ and $\pmb{W}$ of the same dimension $k$ and we have i.i.d. samples $\{\pmb{V}_i\}_{i=1}^{n}$ and $\{\pmb{W}_j\}_{j=1}^{m}$ with $n$ and $m$ `large' enough to approximate the population quantiles, it is enough to check if the sample quantiles of both the distributions are `close'. 

% \subsection{Estimating the Shift with Geometric Quantiles}
\subsection{Quantile Loss: Measuring the Discrepancy Between High-Dimensional Distributions}
\label{subsec:QuantileLoss}

% We now describe the details of our design and the implementation.

\paragraph{Core Idea}{The intuition is - if the estimate $T_{\pmb{\theta}}$ of the de-corruption operator $T^{*}$ is accurate then the geometric high-dimensional quantiles of the features obtained from the classifier on the estimated de-corrupted images $f(T_{\pmb{\theta}}(\pmb{X}^{+}))$ and the features of the classifier on the original images $f(\pmb{X})$ should be `close' to each other.}

The denoising operator $T_{\pmb{\theta}}$ should be tuned such that the geometric quantiles of the two distributions are similar. Define the quantile index function $U_F: \mathbb{R}^k \rightarrow \mathbb{R}^k$ as
\begin{equation}
    U_F(\pmb{z}) = E_{\pmb{X} \sim F}\left[ \frac{\pmb{z} - \pmb{X}}{\| \pmb{z} - \pmb{X}\|}\right]
\end{equation}

The quantile index function is well-defined and bounded in a unit ball $B^{(k)}$ in $\mathbb{R}^k$ as a consequence of Theorem \ref{thm:inverseMap}. The quantile loss is defined as:
\begin{equation}
\label{eq:QuantileLoss}
    \mathcal{L}_{quant}(\pmb{\theta}) = E_{\pmb{Z} \sim \mu^f} \left[ \| U_{\nu^{f}_{\pmb{\theta}}}(\pmb{Z}) - U_{\mu^{f}}(\pmb{Z}) \|^2\right]
\end{equation}

The pseudo-code for quantile matching is given in Algorithm \ref{alg:QuantileMatching}. The algorithm uses a full gradient descent to present the key ideas in a minimalistic way. In practice, the quantile indices are matched only on a subset of the support of the feature space of original images (see subsection \ref{subsec:HUS} for details) to boost the results. Additionally, we work with mini-batches and maintain a memory bank for the estimated quantile indices for scaling up the algorithm with GPU constraints (subsection \ref{subsec:modelScaling} contains more details).

\begin{algorithm}[htbp]
\caption{Quantile Matching}
\label{alg:QuantileMatching}
\KwIn{$\{\pmb{x}_j^{+}\}_{1 \leq j \leq m}$: test data, $\{f(\pmb{x}_i)\}_{1 \leq i \leq n}$: train features , $\pmb{\theta_0}$: initial parameters to set an identity mapping $T_{\pmb{\theta}_0}$, $N$: number of epochs}
\KwOut{$\pmb{\theta}$: final optimized parameters}

%Identify the high-confidence samples of each class based on $f(\pmb{x}_i)$ and collate them to construct a balanced distribution across all predicted classes. Denote the set by $S \subset \{1,\cdots,n\}$;

Pre-compute the quantile indices $\{\pmb{u}_r\}_{r \in S}$ of quantiles $\{f(\pmb{x}_r)\}_{1 \leq r \leq n}$ w.r.t. data $\{f(\pmb{x}_i)\}_{1 \leq i \leq n}$;

Initialize $\pmb{\theta} \leftarrow \pmb{\theta_0}$ \;

\For{$l \leftarrow 1$ \KwTo $N$}{
    Compute the estimated de-corrupted features at current parameters $\{f(T_{\pmb{\theta}}(\pmb{x}_j^{+}))\}_{1 \leq j \leq m}$;
    
    Compute the estimated quantile indices $\{\pmb{u}^{\pmb{\theta}}_r\}_{1 \leq r \leq n}$ of the quantiles $\{f(\pmb{x}_r)\}_{1 \leq r \leq n}$ w.r.t. data given by $\{f(T_{\pmb{\theta}}(\pmb{x}_j^{+}))\}_{1 \leq j \leq m}$;
    
    Compute the MSE loss $\frac{1}{n}\sum_{r=1}^{n}\|\pmb{u}_r -  \pmb{u}^{\pmb{\theta}}_r \|_2^2$;

    Update the parameters $\pmb{\theta}$ using gradient descent;
}
\Return $\pmb{\theta}$
\end{algorithm}

\subsection{Theoretical Guarantees}

Algorithm \ref{alg:QuantileMatching} minimizes quantile loss aiming to match the marginal distributions. The class-conditional distributions of the features of de-corrupted data may not match with that of the class-conditional features of the training data. A simple counter-example would be to permute the labels of the class-conditionals. Fig \ref{fig:FailureExample} contains one such illustrative example where the class-conditionals fail to match for a valid solution to quantile-loss minimization. The dataset is given by two-moons in two dimensions with each moon denoting a class. If the test data is known to be a rotation of the train data about the origin. Observe that there exists two rotations on the test data that match the marginal distributions of the rotated test data and the train data - one solution is the correct operator, the other swaps the class-conditionals perfectly.

\begin{figure}[t]
    \centering
    \hspace{0.02\textwidth}    
    \begin{subfigure}[b]{0.4\textwidth}
        \centering
        \includegraphics[width=\linewidth]{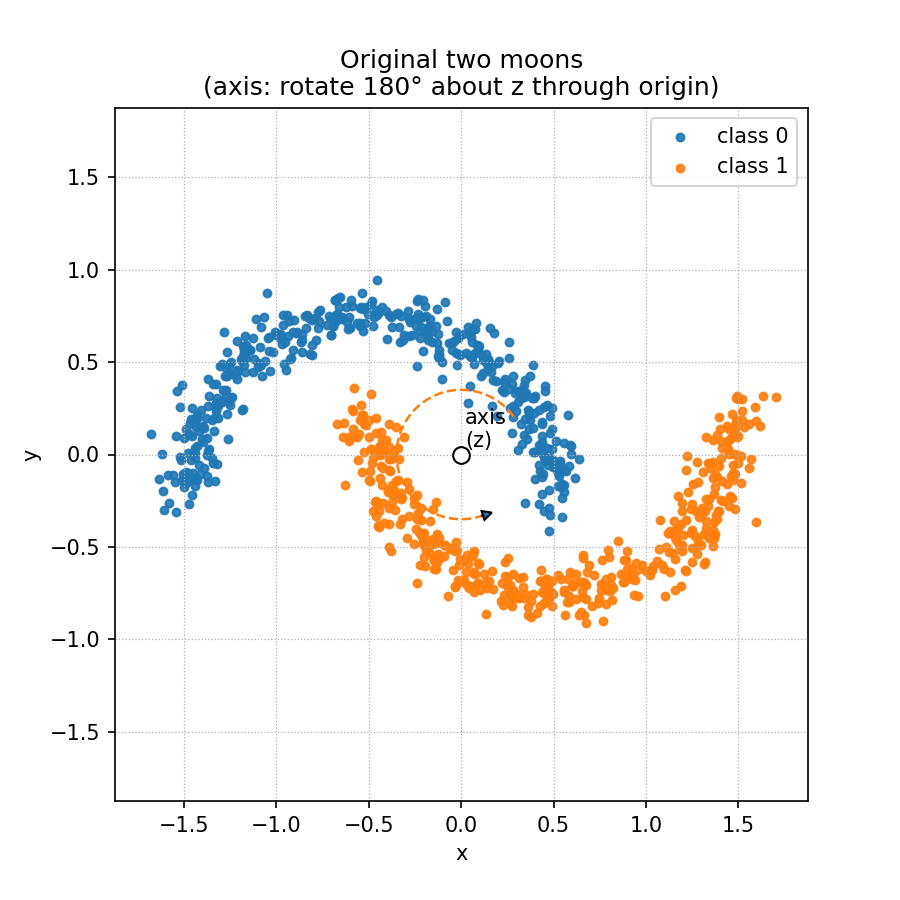}
        \caption{The classes are given by the colors}
        \label{fig:TwoMoonsShifted}
    \end{subfigure}
    \hspace{0.02\textwidth}
    \begin{subfigure}[b]{0.4\textwidth}
        \centering
        \includegraphics[width=\linewidth]{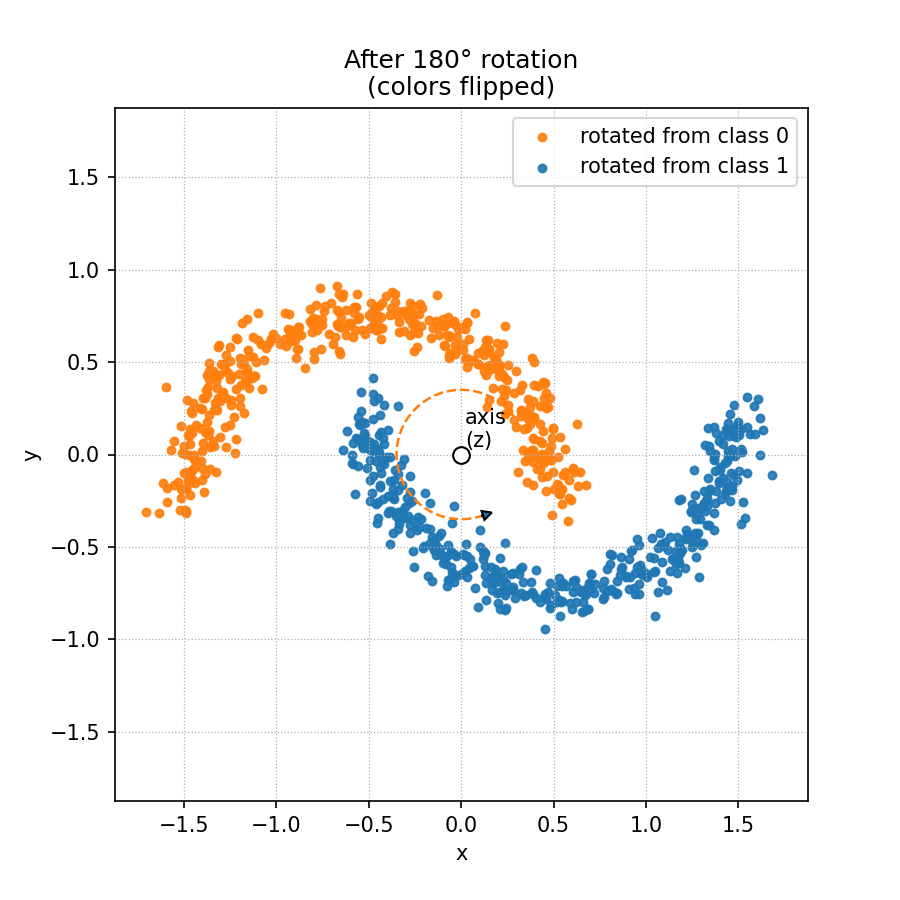}
        \caption{A $180$ degree rotation of the data about the origin.}
        \label{fig:TwoMoonsIncorrect}
    \end{subfigure}   
    \caption{An illustrative example where marginal distributions align perfectly but the class-conditionals are flipped. If the de-corruption operator class were chosen from rotations about the origin, it is possible to obtain both the correct operator as well as the one that flips class-conditionals perfectly.}
    \label{fig:FailureExample}
\end{figure}

\paragraph{\textbf{Alignment upto Optimal Transport is Necessary but not Sufficient}}
Let $\mu_{\pmb{X},Y}$, $\nu_{\pmb{X},Y}$ and $\nu_{\pmb{\theta},\pmb{X},Y}$ denote the clean joint distribution, the corrupted joint distribution and the de-corrupted joint distribution at parameter $\pmb{\theta}$ respectively. Let the features obtained from the clean and the de-corrupted distributions at parameter $\pmb{\theta}$ using the frozen classifier be denoted by $\mu^{f} = f_{\#}\mu_{\pmb{X}}$ and $\nu^{f}_{\pmb{\theta}} = f_{\#}\nu_{\pmb{X},\pmb{\theta}}$ respectively. Even if feature marginals are matched exactly i.e. $\mu^f = \nu^f_{\pmb{\theta}}$, classification risk on the denoised images can be strictly greater than the clean risk i.e. there exist distributions for which $\mu^f = \nu^f_{\pmb{\theta}}$ but
\begin{equation}
     \mathbb{P}_{(\pmb{X}^{+},Y) \sim \nu_{\pmb{\theta},\pmb{X},Y}}[\arg\max (h \circ f)(\pmb{X}^{+}) \neq y \lvert Y = y] > \ \mathbb{P}_{(\pmb{X},Y) \sim \mu_{\pmb{X},Y}} [\arg\max (h \circ f)(\pmb{X}) \neq y \lvert Y = y]
\end{equation}
Fig \ref{fig:FailureExample} provides one such example.
 
Since we work with covariate shifts, we assume a joint distribution $\mu_{\pmb{X}^{+}, \pmb{X}}$ on the corrupt-clean pairs of images. Given a feature extractor $f$, one can work with the joint distribution $\mu^f_{\pmb{X}^{+}, \pmb{X}}$ i.e. $\left(f(\pmb{X}^{+}),f(\pmb{X})\right)$ where  $(\pmb{X}^{+}, \pmb{X}) \sim \mu_{\pmb{X}^{+}, \pmb{X}}$. A proxy for aligning the class-conditionals would be to minimize the mean-squared error (MSE) between the features of denoised corrupt image and its corresponding clean counterpart. However, in practice it is not possible to train based on minimizing MSE as paired samples cannot be obtained. Theorem \ref{thm:WeakEquivalence} provides sufficient conditions under which minimizing the quantile loss is equivalent to minimizing the MSE. We shall require proposition \ref{prop:QuantileBasis} for the proof of Theorem \ref{thm:WeakEquivalence}.

\begin{proposition} (\textbf{Quantile Basis})
\label{prop:QuantileBasis}
     Let $F$ denote a probability distribution in $\mathbb{R}^d$ such that $\{x - y: x,y \sim F\}$ spans $\mathbb{R}^d$. Let $Supp(F)$ denote the support of $F$.
     Define the quantile index function $U_{F}: \mathbb{R}^d \rightarrow \mathbb{R}^d$ as
    \begin{equation}
        U_{F}(\pmb{z}) = E_{\pmb{X} \sim F}\left[ \frac{\pmb{z} - \pmb{X}}{\| \pmb{z} - \pmb{X}\|}\right]
    \end{equation}
     then $U_{F}\vert_{Supp(F)}$ uniquely determines a probability distribution on $Supp(F)$. 
     
\end{proposition}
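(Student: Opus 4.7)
The plan is to recognize that $U_F$ is the inverse of the population quantile map and then reduce the claim to Theorem \ref{thm:Uniqueness}. The first step is to observe that $U_F$ admits the potential representation $U_F(\pmb{z}) = \nabla_{\pmb{z}} \Psi_F(\pmb{z})$ with $\Psi_F(\pmb{z}) := E_{\pmb{X} \sim F}[\|\pmb{z} - \pmb{X}\|]$. The function $\Psi_F$ is convex, and the spanning hypothesis on $\{x-y: x,y \sim F\}$ rules out $F$ being supported on a proper affine subspace, which in turn upgrades the convexity of $\Psi_F$ to strict convexity. Consequently $U_F = \nabla \Psi_F$ is strictly monotone, hence injective wherever it is defined — in particular on $Supp(F)$.

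Next, I would pass the sample statement of Theorem \ref{thm:inverseMap} to its population counterpart using the asymptotic consistency of Theorem \ref{thm:AsymptoticConvergence}: every $\pmb{z} \in Supp(F)$ is itself a population quantile of $F$ at the index $\pmb{u} = U_F(\pmb{z}) \in B^{(d)}$, so $Q_F(U_F(\pmb{z})) = \pmb{z}$. Combined with injectivity of $U_F$, this identifies $U_F|_{Supp(F)}$ as the set-theoretic inverse of the population quantile map $Q_F : B^{(d)} \to Supp(F)$. Therefore, if a second distribution $G$ on $Supp(F)$ satisfies $U_G|_{Supp(F)} = U_F|_{Supp(F)}$, the same inversion applied to $G$ forces $Q_G$ to coincide with $Q_F$ on all indices in the image $U_F(Supp(F)) \subseteq B^{(d)}$, and Theorem \ref{thm:Uniqueness} then yields $F = G$.

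The main obstacle is the surjectivity claim $U_F(Supp(F)) = B^{(d)}$, which is what makes the equivalence ``knowing $U_F$ on $Supp(F)$'' $\Leftrightarrow$ ``knowing $Q_F$ on all of $B^{(d)}$'' legitimate. Injectivity follows cleanly from strict convexity of $\Psi_F$, but surjectivity requires a more delicate argument: as $\pmb{z}$ ranges over $Supp(F)$, its index $U_F(\pmb{z})$ must sweep every value in the open ball. I would approach this by combining (i) continuity of the monotone map $U_F$ on the interior of a convex set containing $Supp(F)$, so that the image is open in $B^{(d)}$, with (ii) boundary/asymptotic control showing that $\|U_F(\pmb{z})\|$ tends to $1$ as $\pmb{z}$ leaves any bounded region, so that the image cannot miss any point of $B^{(d)}$ without contradicting monotonicity. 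If full surjectivity proves hard to extract directly, a fallback is to establish only that $U_F(Supp(F))$ is dense in $B^{(d)}$ and then use continuity of $Q_F$ (another consequence of strict convexity of $\Psi_F$) to extend the equality $Q_F = Q_G$ from a dense set to all of $B^{(d)}$ before invoking Theorem \ref{thm:Uniqueness}.
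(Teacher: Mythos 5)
Your reduction to Theorem \ref{thm:Uniqueness} hinges on the claim that $U_F(Supp(F))$ is all of $B^{(d)}$ (or at least dense in it), and this is precisely where the argument breaks: the claim is false in general, and in particular it fails in the very setting the proposition is meant to cover. The first-order condition $U_F(Q_F(\pmb{u}))=\pmb{u}$ does identify $U_F$ as the inverse of the population quantile map, but the preimage of an index $\pmb{u}$ with $\|\pmb{u}\|$ close to $1$ is a point $Q_F(\pmb{u})$ that lies far outside the support (for a compactly supported $F$, $\|Q_F(\pmb{u})\|\to\infty$ as $\|\pmb{u}\|\to 1$; extreme geometric quantiles escape the convex hull of the data). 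Hence for any compactly supported $F$ the set $U_F(Supp(F))$ is a compact subset of the open ball that misses an entire neighborhood of the boundary, and for a finite discrete $F$ — which is exactly the case the paper proves — it is a finite set of points, nowhere near dense. Your boundary-control step (ii), showing $\|U_F(\pmb{z})\|\to 1$ as $\pmb{z}$ leaves bounded regions, only applies when $\pmb{z}$ is allowed to range over all of $\mathbb{R}^d$, not over $Supp(F)$. So neither the surjectivity route nor the density fallback can be completed, and Theorem \ref{thm:Uniqueness} (which needs agreement of quantiles at \emph{every} $\pmb{u}\in B^{(d)}$) cannot be invoked. Indeed, the entire content of the proposition is that the quantile indices at the support points alone suffice, i.e., that one does \emph{not} need to recover $Q_F$ on all of $B^{(d)}$; any proof that routes through Theorem \ref{thm:Uniqueness} is circular with respect to that content.

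The parts of your plan that do work — the potential representation $U_F=\nabla\Psi_F$, strict convexity of $\Psi_F$ under the spanning hypothesis, and injectivity of $U_F$ — are correct but insufficient: injectivity of $U_F$ says distinct points have distinct indices, whereas the proposition is about distinguishing two different \emph{measures} on the same support. The paper's proof attacks that directly: taking two candidate distributions with weights $p_i,q_i$ on the support points and setting $\delta_i=p_i-q_i$, the hypothesis that the index functions agree at every support point becomes the homogeneous linear system $\sum_{i\neq j}\delta_i v_{j,i}=0$ for all $j$, where $v_{j,i}$ are the unit difference vectors; Lemma \ref{lem:span} (using $\pmb{0}\in ConvHull(Supp(F))$ and the spanning assumption) guarantees that for each $i$ one can extract $d$ linearly independent such vectors, which forces $\pmb{\delta}=\pmb{0}$. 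If you want to salvage your approach, you would need to replace the appeal to Theorem \ref{thm:Uniqueness} with an argument of this direct, finite-dimensional kind.
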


Informally, the result implies that if the support of the probability distribution is well-behaved in the underlying dimension then the quantiles outside of the support are redundant for uniquely identifying a probability distribution. The reader may refer to the appendix for a proof of Proposition \ref{prop:QuantileBasis} for a discrete probability distribution with finite support.

\begin{theorem} (\textbf{Equivalence of Quantile Loss and Pairwise Mean-Squared Error Minimization})
\label{thm:WeakEquivalence}
    Let $\mu_{\pmb{X}, \pmb{X}^{+}}$ denote the joint distribution on the covariate shift-original pairs of data in $\mathbb{R}^d \times \mathbb{R}^d$. Let $\mu_{\pmb{X}^{+}}$ and $\mu_{\pmb{X}}$ respectively denote the marginal distributions of covariate shifted and original data. Let $T_{\pmb{\theta}} : \mathbb{R}^d \rightarrow \mathbb{R}^d$ with $\pmb{\theta} \in \Theta$ denote the class of shift-correction operators. Let $\mu_{\pmb{\theta}}$ denote the distribution of $T_{\pmb{\theta}}(\pmb{X}^{+})$ where $\pmb{X}^{+} \sim \mu_{\pmb{X}^{+}}$ Assume $\pmb{X}^{+}, \pmb{X}$ and $T_{\pmb{\theta}}(\pmb{X}^{+})$ have finite second moments for all $\pmb{\theta} \in \Theta$ and are absolutely continuous w.r.t. Lebesgue measure. For every $\pmb{\theta} \in \Theta$, define the expected pairwise mean-squared error between the shift-corrected and the original data by
        \begin{equation}
            \mathcal{L}_{pair}(\pmb{\theta}) = E_{\mu_{\pmb{X}^{+}, \pmb{X}}} \left[ \|T_{\pmb{\theta}}(\pmb{X}^{+}) - \pmb{X}\|^2\right]
        \end{equation}
    
    Define the spatial quantile function $U_{F}: \mathbb{R}^d \rightarrow \mathbb{R}^d$ as
    \begin{equation}
        U_{F}(\pmb{z}) = E_{\pmb{X} \sim F}\left[ \frac{\pmb{z} - \pmb{X}}{\| \pmb{z} - \pmb{X}\|}\right]
    \end{equation}
    Define quantile loss as
    \begin{equation}
        \mathcal{L}_{quant}(\pmb{\theta}) = E_{\pmb{Z} \sim \mu_{\pmb{X}}} \left[ \| U_{\mu_{\pmb{\theta}}}(\pmb{Z}) - U_{\mu_{\pmb{X}}}(\pmb{Z}) \|^2\right]
    \end{equation}
    Consider the following assumptions
    \begin{enumerate}
        \item[(A1)] (\textbf{Regularity}) The span of the support $Supp(\mu_{\pmb{X}})$ is $\mathbb{R}^d$ and  $\pmb{0} \in \mathbb{R}^d$ lies in the convex hull of $Supp(\mu_{\pmb{X}})$.
        \item[(A2a)] (\textbf{Perfect Reconstruction}) $\exists \ \pmb{\theta}^{*} \in \Theta$ such that $\mathcal{L}_{pair}(\pmb{\theta}^{*})=0$.
        \item[(A2b)] (\textbf{Reconstruction}) %Let $\delta >0$ be a fixed small value.  $\exists \ \pmb{\theta}_{\delta} \in \Theta$ such that $\mathcal{L}_{pair}(\pmb{\theta}_{\delta}) < \delta$. 
        $\pmb{X} = T^{*}(\pmb{X}^{+}) + \pmb{\epsilon}$ with $E[\pmb{\epsilon} | \pmb{X}] = \pmb{0}$ and $\|Cov(\pmb{\epsilon}| \pmb{X}) \|_{op} \leq \delta$. Let $\underset{\pmb{\theta}}{inf}\mathcal{L}_{pair}(\pmb{\theta}) = E[\|\pmb{\epsilon}^2\|] = \sigma^2$ i.e. $\sigma^2 \leq d \delta$ denotes the irreducible error when trained to minimize pairwise mean-squared error.
        \item[(A3a)] (\textbf{Identifiability}) If $T_{\pmb{\theta}_1}(\pmb{X}^{+}) \stackrel{d}{=} T_{\pmb{\theta}_2}(\pmb{X}^{+})$ for $\pmb{\theta}_1, \pmb{\theta}_2 \in \Theta$ then $T_{\pmb{\theta}_1}(\pmb{x}^{+}) = T_{\pmb{\theta}_2}(\pmb{x}^{+})$ for $\mu_{\pmb{X}^{+}}$-almost $\pmb{x}^{+}$
        \item[(A3b)] (\textbf{Local Identifiability}) There exists $\theta_0$ and a small neighborhood around $\theta_0$ say $S_{\epsilon}(\theta_0) \subset \Theta$ such that if $T_{\pmb{\theta}_1}(\pmb{X}^{+}) \stackrel{d}{=} T_{\pmb{\theta}_2}(\pmb{X}^{+})$ for $\pmb{\theta}_1, \pmb{\theta}_2 \in S_{\epsilon}(\theta_0)$ then $T_{\pmb{\theta}_1}(\pmb{x}^{+}) = T_{\pmb{\theta}_2}(\pmb{x}^{+})$ for $\mu_{\pmb{X}^{+}}$-almost $\pmb{x}^{+}$.
    \end{enumerate}
    then under the assumptions (A1), (A2a), (A3a), the following are equivalent
    \begin{enumerate}
        \item[(a)] $\mathcal{L}_{pair}(\pmb{\theta}) = 0$
        \item[(b)] $\mathcal{L}_{quant}(\pmb{\theta}) = 0$
        \item[(c)] $\mu_{\pmb{\theta}} \stackrel{d}{=} \mu_{\pmb{X}}$
    \end{enumerate}
    Let $\eta \geq \sigma^2$. Similarly, under the assumptions (A1), (A2b), (A3b)  the following statements are equivalent
    \begin{enumerate}
        \item[(d)] $\mathcal{L}_{pair}(\pmb{\theta}_l) \rightarrow \Delta_1(\eta)$ as $l \rightarrow \infty$ where $\Delta_1: \mathbb{R}^{+} \rightarrow \mathbb{R}^{+}$ is a continuous increasing function with $\Delta_1(\sigma^2)=\sigma^2$.
        \item[(e)] $\mathcal{L}_{quant}(\pmb{\theta}_l) \rightarrow \Delta_2(\eta)$ as $l \rightarrow \infty$ where $\Delta_2: \mathbb{R}^{+} \rightarrow \mathbb{R}^{+}$ is a continuous increasing function. Additionally, $\Delta_2(0)=0$ when $\sigma=0$.
        \item[(f)] $W_2(\mu_{\pmb{\theta}_l} , \mu_{\pmb{X}}) \rightarrow \Delta_3(\eta)$ as $l \rightarrow \infty$ where $\Delta_3: \mathbb{R}^{+} \rightarrow \mathbb{R}^{+}$ is a continuous increasing function. Additionally, $\Delta_3(0)=0$ when $\sigma=0$. $W_2(. , .)$ represents the Wasserstein-2 distance between two probability distributions.
    \end{enumerate}
\end{theorem}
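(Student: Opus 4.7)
The strategy splits naturally into the two equivalence clusters.

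\textbf{Perfect reconstruction, $(a)\Leftrightarrow(b)\Leftrightarrow(c)$.} I would first establish $(a)\Leftrightarrow(c)$. The direction $(a)\Rightarrow(c)$ is immediate since $\mathcal{L}_{pair}(\pmb{\theta})=0$ forces $T_{\pmb{\theta}}(\pmb{X}^{+})\stackrel{a.s.}{=}\pmb{X}$, so the pushforward distributions coincide. For the reverse, (A2a) supplies $\pmb{\theta}^{*}$ with $\mathcal{L}_{pair}(\pmb{\theta}^{*})=0$; then $T_{\pmb{\theta}^{*}}(\pmb{X}^{+})\stackrel{d}{=}\pmb{X}\stackrel{d}{=}T_{\pmb{\theta}}(\pmb{X}^{+})$, and identifiability (A3a) yields $T_{\pmb{\theta}}=T_{\pmb{\theta}^{*}}$ $\mu_{\pmb{X}^{+}}$-almost surely, giving $\mathcal{L}_{pair}(\pmb{\theta})=0$. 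For $(b)\Leftrightarrow(c)$: the direction $(c)\Rightarrow(b)$ is trivial, and for $(b)\Rightarrow(c)$, vanishing $\mathcal{L}_{quant}$ together with absolute continuity of $\mu_{\pmb{X}}$ forces $U_{\mu_{\pmb{\theta}}}\equiv U_{\mu_{\pmb{X}}}$ on $\mathrm{Supp}(\mu_{\pmb{X}})$; assumption (A1) supplies the spanning hypothesis needed to invoke Proposition \ref{prop:QuantileBasis}, which uniquely pins down $\mu_{\pmb{\theta}}=\mu_{\pmb{X}}$.

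\textbf{Noisy case, $(d)\Leftrightarrow(e)\Leftrightarrow(f)$.} I would construct $\Delta_{1},\Delta_{2},\Delta_{3}$ as limiting values of the three functionals along sequences $\{\pmb{\theta}_{l}\}$ indexed by the slack $\eta\geq\sigma^{2}$. The coupling bound $W_{2}^{2}(\mu_{\pmb{\theta}},\mu_{\pmb{X}})\leq \mathcal{L}_{pair}(\pmb{\theta})$, computed against the joint law $\mu_{\pmb{X}^{+},\pmb{X}}$, provides $(d)\Rightarrow(f)$ with a continuous monotone comparison. For the reverse, (A2b) gives the orthogonal decomposition $\mathcal{L}_{pair}(\pmb{\theta})=\sigma^{2}+E\|T_{\pmb{\theta}}(\pmb{X}^{+})-T^{*}(\pmb{X}^{+})\|^{2}$ (the cross term vanishes because $E[\pmb{\epsilon}\mid\pmb{X}]=\pmb{0}$), and local identifiability (A3b) upgrades $W_{2}$-closeness of marginals to operator-level closeness in a neighborhood of the minimizer. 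For $(e)\Leftrightarrow(f)$ I would invoke continuity of the map $F\mapsto U_{F}$ under $W_{2}$-convergence of absolutely continuous measures with finite second moments, controlled by a uniform bound on $U_{F}$ that follows from (A1) and dominated convergence. Monotonicity of each $\Delta_{i}$ in $\eta$ then reflects that inflating the residual MSE can only enlarge the other discrepancies, while the normalization $\Delta_{i}(0)=0$ under $\sigma=0$ collapses to the perfect-reconstruction case already handled.

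\textbf{Main obstacle.} The most delicate step is the continuity of the quantile index functional $U_{F}$ as $F$ varies: the kernel $(\pmb{z}-\pmb{x})/\|\pmb{z}-\pmb{x}\|$ is singular at $\pmb{z}=\pmb{x}$, so obtaining a quantitative modulus of continuity requires absolute continuity of $F$, second-moment control, and careful excision of a shrinking ball around the singularity. Equally delicate is transferring $W_{2}$-closeness of marginals back to $\mathcal{L}_{pair}$-closeness without access to paired samples; this is exactly where (A3b) rules out the mode-swap failure illustrated in Figure \ref{fig:FailureExample}, since without local identifiability two parameters can induce identical marginals while pairing samples incorrectly, and $\Delta_{3}\to 0$ would fail to imply $\Delta_{1}\to\sigma^{2}$.
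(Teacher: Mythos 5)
Your handling of the first equivalence cluster $(a)\Leftrightarrow(b)\Leftrightarrow(c)$ matches the paper's proof essentially step for step: $(a)\Rightarrow(c)$ from almost-sure equality, $(c)\Rightarrow(a)$ via the perfect-reconstruction parameter $\pmb{\theta}^{*}$ plus identifiability (A3a), $(c)\Rightarrow(b)$ trivially, and $(b)\Rightarrow(c)$ via the regularity assumption (A1), Lemma \ref{lem:span} and Proposition \ref{prop:QuantileBasis}. For the second cluster $(d)\Leftrightarrow(e)\Leftrightarrow(f)$ the paper offers no argument at all (it states only that the proofs ``are similar but require more rigor''), so your sketch --- the coupling bound $W_{2}^{2}(\mu_{\pmb{\theta}},\mu_{\pmb{X}})\leq\mathcal{L}_{pair}(\pmb{\theta})$, the bias--variance decomposition under (A2b), local identifiability (A3b) for the reverse direction, and continuity of $F\mapsto U_{F}$ under $W_{2}$ for $(e)\Leftrightarrow(f)$ --- goes strictly beyond what the paper proves, and the obstacles you identify (the singular kernel in $U_{F}$ and the marginal-to-pairwise transfer) are exactly the points the paper leaves unaddressed. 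One small technical wrinkle worth noting: the cross term in your decomposition $\mathcal{L}_{pair}(\pmb{\theta})=\sigma^{2}+E\|T_{\pmb{\theta}}(\pmb{X}^{+})-T^{*}(\pmb{X}^{+})\|^{2}$ vanishes only if $E[\pmb{\epsilon}\mid\pmb{X}^{+}]=\pmb{0}$, since $T_{\pmb{\theta}}(\pmb{X}^{+})-T^{*}(\pmb{X}^{+})$ is a function of $\pmb{X}^{+}$, whereas (A2b) as written conditions on $\pmb{X}$; this appears to be an infelicity in the paper's own assumption statement rather than an error on your part, but a complete write-up should flag it.
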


Observe that $\Delta_2(\eta)$ is minimized $\Rightarrow \eta \approx \sigma^2 \Rightarrow \Delta_3(\eta)$ is minimized. Similarly, $\Delta_2(\eta)$ is minimized $\Rightarrow \eta \approx \sigma^2 \Rightarrow \Delta_1(\eta) \approx \sigma^2$. Intuitively, the result implies that under an appropriate initialization, minimizing quantile loss minimizes the Wasserstein distance approximately aligning the marginal distributions as much as the capacity of the decorruptor architecture allows. Also, this is equivalent to minimizing the pairwise errors implicitly learning the optimal adapter for the given decorruptor architecture. The reader may refer to the appendix for a sketch of proof.
    
Since we minimize the quantile loss in the feature-space, theorem \ref{thm:WeakEquivalence} is applicable only when the three assumptions are valid for the feature extractor i.e. $f \circ T_{\pmb{\theta}}$. (A1) is a valid assumption as typical neural network feature representations are not sparse and the number of training samples of $\mu_{\pmb{X}}$ are much larger than the dimension of the feature space. Assumption (A2b) is valid for real covariate-shifts as constructed in \cite{hendrycks2019benchmarking}. Thanks to the non-trivial performance of the classifier on the corrupted images, (A3b) holds locally around the parameters of the identity operator $\pmb{\theta}_0$. 

Fig \ref{fig:PairwiseQuantileEquivalence} provides empirical evidence to Theorem $\ref{thm:WeakEquivalence}$. The classifier is a ResNet18 trained on CIFAR10 training images and is frozen while training the decorruptor. The corrupt dataset is a level $5$ Gaussian distortion of CIFAR10C. The decorruptor is trained by minimizing the quantile loss. The mean-squared error is computed by using the paired test images of CIFAR10.  Fig \ref{fig:PairwiseQuantileScatter}, a scatter plot between the two losses indicates that the MSE and quantile losses are positively correlated. In fig \ref{fig:PairwiseQuantileEpoch}, both the quantile and mean-squared error losses are tracked as a function of steps. The losses are normalized (rescaled by positive constants) to accommodate on the same scale. The plot shows that both the losses reduce as the training progresses. 

\begin{figure}[ht]
    \centering
    \hspace{0.02\textwidth}  
    \begin{subfigure}[b]{0.4\textwidth}
    \includegraphics[width=1.1\linewidth]{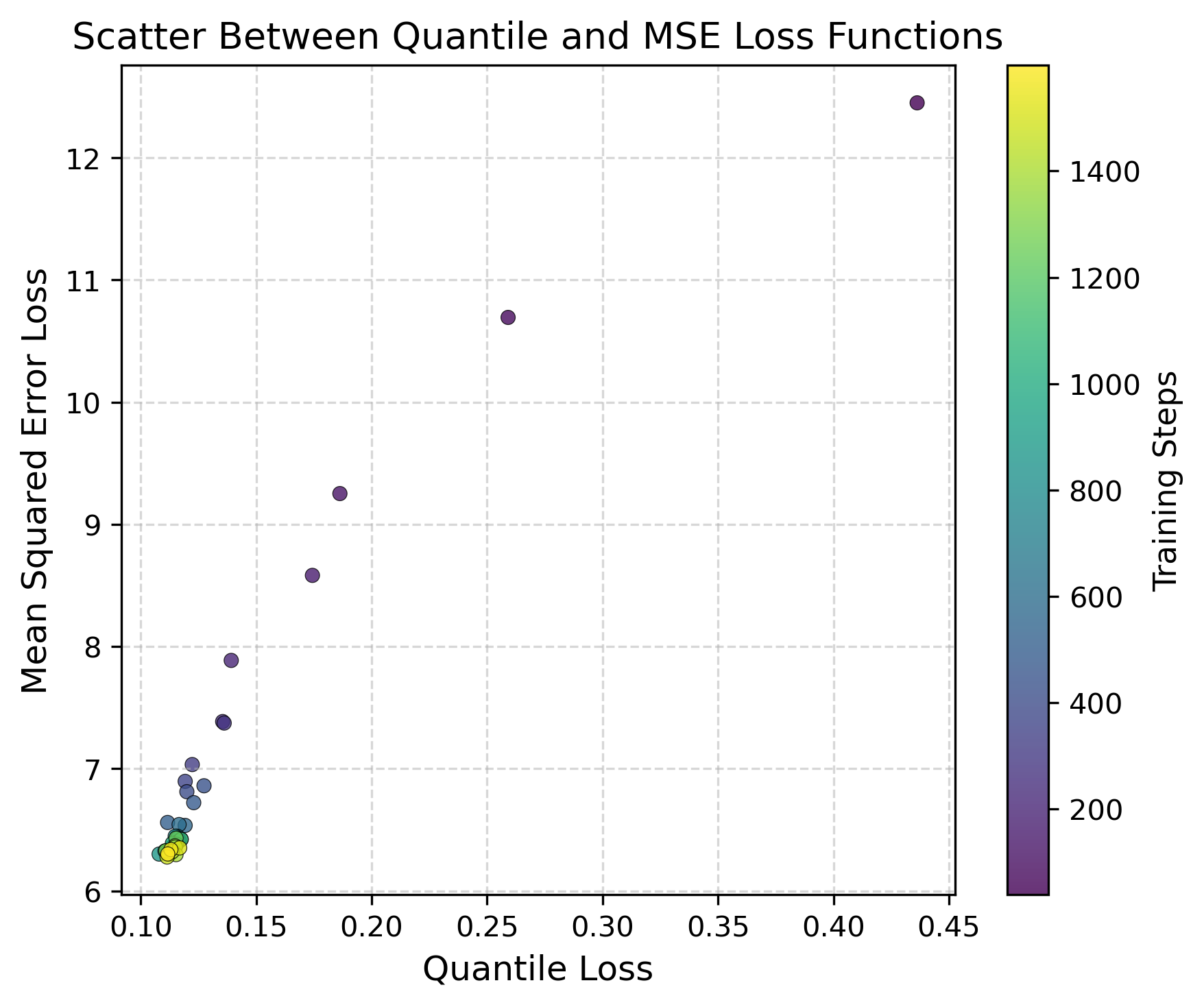}
    \caption{Raw Quantile Loss vs MSE scatter plot}
    \label{fig:PairwiseQuantileScatter}
    \end{subfigure}   
    \hspace{0.02\textwidth}
    \begin{subfigure}[b]{0.4\textwidth}
    \includegraphics[width=1.1\linewidth]{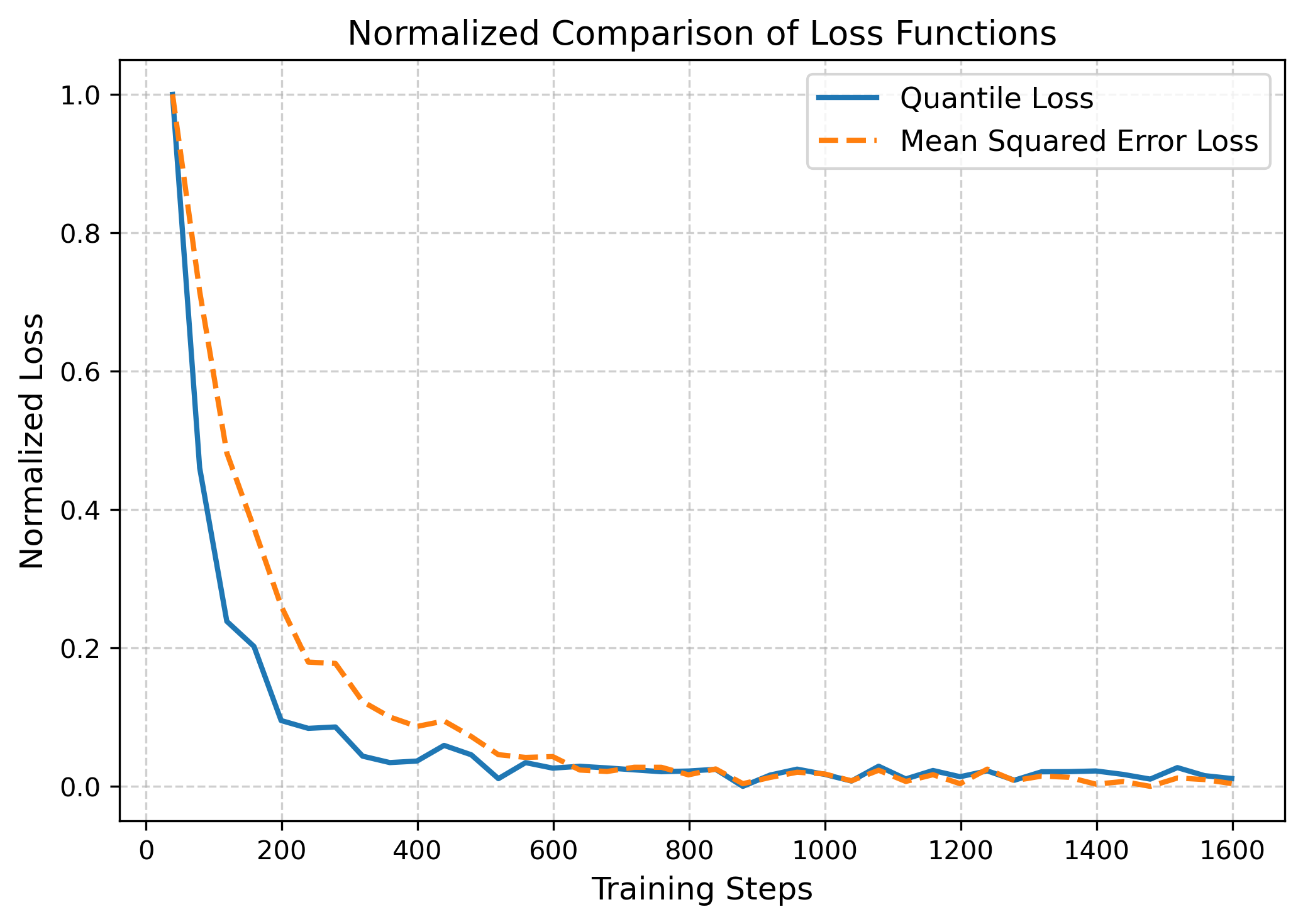}
    \caption{Normalized Quantile Loss vs MSE as a function of training steps}
    \label{fig:PairwiseQuantileEpoch}
    \end{subfigure}
    \caption{The errors are plotted on the de-corrupted features of Gaussian distorted CIFAR10C images at severity level $5$ (using ResNet18) as a function of epochs. The mean-squared error is computed using the test images of CIFAR10. Fig \ref{fig:PairwiseQuantileScatter} is a scatter plot between raw quantile loss and means-squared error losses. In Fig \ref{fig:PairwiseQuantileEpoch} Quantile loss versus means-squared error loss is plotted as a function of epochs after normalizing them to accommodate on the same scale. Observe that the two losses are positively correlated and the losses reduce with the number of training steps.}
    \label{fig:PairwiseQuantileEquivalence}
\end{figure}

\section{Implementation Details}
\label{sec:implementation}

\subsection{High-Confidence Uniform Sampling}
\label{subsec:HUS}

Assuming regularity conditions, Proposition \ref{prop:QuantileBasis} implies matching the quantile indices of every vector in the support perfectly matches distributions. However, in practice using uninformative vectors would lead to sub-optimal solutions. For example, the quantiles chosen away from high-density regions of the data support can be omitted to boost the results at a lower computational cost. Fig \ref{fig:quantloss_intuition} in the appendix provides a visual intuition.

Let $\mu^f_{HUS}$ denote a class-wise balanced high-confidence sampling distribution from the classifier features of the training distribution. 
Imposing the penalty on these quantiles, the quantile loss in Eq \ref{eq:QuantileLoss} changes to:
\begin{equation}
    \mathcal{L}_{quant}(\pmb{\theta}) = E_{\pmb{Z} \sim \mu^f_{HUS}} \left[ \| U_{\nu^{f}_{\pmb{\theta}}}(\pmb{Z}) - U_{\mu^{f}}(\pmb{Z}) \|^2\right]
\end{equation}

\subsection{Efficient Model Scaling: Small-Batch Workarounds for Limited GPUs}
\label{subsec:modelScaling}

One of the main reasons which made deep learning scalable is the fact that most of the loss functions are sample-separable. A loss function is sample-separable if the loss function decomposes into a sum of per-sample losses. A sample-separable loss enables the gradient of the total loss to be approximated with the expectation of gradients of per-mini-batch losses \cite{nacson2019stochastic}. Unfortunately, the quantile loss function (Eq \ref{eq:QuantileLoss}) is not sample-separable. To circumvent this restriction we propose \emph{composite-separability}.
% One of the central challenges in deep learning is the inability to load and process all training samples on GPUs simultaneously. Fortunately, most commonly used loss functions are sample-separable, meaning the overall loss decomposes into a sum of per-sample (or per-mini-batch) losses. This property enables mini-batch stochastic gradient descent (SGD): the gradient of the total loss equals the sum of gradients computed over disjoint mini-batches, allowing us to backpropagate using only a subset of data at each iteration without introducing bias.

% In contrast, when the loss function is not sample-separable, the situation becomes more complex. The loss may include higher-order interactions among samples, making it non-trivial to construct an unbiased gradient estimator from mini-batch computations alone. In such cases, smaller mini-batches yield increasingly crude approximations to the true gradient, as the expected gradient computed from mini-batch losses fails to capture these cross-sample dependencies.}

% \paragraph{Quantile Loss is not sample-separable but separable-like} Recall that the Quantile loss defined by Eq \ref{eq:QuantileLoss} is not sample-separable. However, the loss function has a specialized structure which we denote as composite-separability.

\begin{definition}
    A loss function $L(\pmb{x}_1, \cdots, \pmb{x}_n;\pmb{\theta})$ is said to be \textbf{composite-separable} if it can be decomposed into sum of sub-losses such that each sub-loss is a composition of a sample-separable function and another function which is possibly non-linear in that order i.e. there exists a decomposition such that
    \begin{equation}
    \label{eq:compoSeparable}
        L(\pmb{x}_1, \cdots, \pmb{x}_n;\pmb{\theta}) = \frac{1}{|R|}\sum_{r \in R}g_r\left(\frac{1}{n}\sum_{i=1}^{n}h_r(\pmb{x}_i;\pmb{\theta})\right)
    \end{equation}
    where $g_r$ is possibly non-linear and $h_r$ is a function of one variable for each $r \in R$
\end{definition}

A composite-separable loss would be sample-separable if each $g_r$ is linear. Conversely, every sample-separable loss function can be written in the form of Eq \ref{eq:compoSeparable}. Hence composite-separable loss functions naturally extend sample-separable loss functions. 

Let $f_{\pmb{\theta}}$ denote the neural-network. % and $\pmb{y}_i$ denote the ground-truth of the input $\pmb{x}_i$. 
The quantile-loss defined by Eq \ref{eq:QuantileLoss} can be cast as a composite-separable loss as follows. $R \subsetneq \{1, 2, \cdots, n\}$ denotes the set of quantiles at which we match the indices. For each $r \in R$, $g_r: \mathbb{R}^k \rightarrow \mathbb{R}$ is given by $g_r(\pmb{x}) = ||\pmb{x} - \pmb{u}_r||^2$ where $\pmb{u}_r$ is the sample quantile index and $h_r: \mathbb{R}^k \rightarrow \mathbb{R}^k$ is given by

\begin{equation}
    h_r(\pmb{x}_i;\pmb{\theta}) = \frac{\pmb{Z}_r - f_{\pmb{\theta}}(\pmb{x}_i)}{|| \pmb{Z}_r - f_{\pmb{\theta}}(\pmb{x}_i)||}
\end{equation}

Thanks to the structure of composite-separable loss functions, implementing SGD only requires obtaining a low-variance estimate of the average of the $h_r$ terms. This leverages the control-variate principle \cite{ross2022simulation}, which provides a systematic way to reduce the variance of stochastic estimators without altering their expectation. In our setting, this idea is realized by maintaining a memory bank on the CPU that stores the following quantities:

\begin{enumerate}
    \item the feature vectors of all the inputs $\{f_{\pmb{\theta}}(\pmb{x}_i)\}$ at $\pmb{\theta} = \pmb{\theta}_t$. At each gradient step, the feature vectors of the indices corresponding to the batch are updated. 
    \item the average of $h_r$-functions over all samples for each $r$ i.e. $\frac{1}{n}\sum_{i=1}^{n}h_r(\pmb{x}_i;\pmb{\theta})$ at $\pmb{\theta} = \pmb{\theta}_{t_{snap}}$. At the end of each epoch, $t_{snap}$ is reset to $t$ i.e. the averages of $h_r$-functions are updated with the current parameters.  
\end{enumerate}

 Let $\{\pmb{\theta}_t\}$ denote the time-series of the parameter updates. Assume for simplicity that the batch size divides the total number of samples. Let $B$ denote a batch, $b$ denote the batch size and $g$ denote the number of gradient steps per epoch then number of samples $n = gb$.  
 
 The estimate of $\frac{1}{n}\sum_{i=1}^{n}h_r(\pmb{x}_i;\pmb{\theta})$ at each gradient step $t$ is given by

 \begin{equation}
 \label{eq:Controlvariate}
     \frac{1}{b}\sum_{i \in B}h_r(\pmb{x}_i;\pmb{\theta}_{t}) + \left( \frac{1}{n}\sum_{i=1}^{n}h_r(\pmb{x}_i;\pmb{\theta}_{t_{snap}}) - \frac{1}{b}\sum_{i \in B}h_r(\pmb{x}_i;\pmb{\theta}_{t_{snap}}) \right)
 \end{equation}

The rate at which the feature vectors and the corresponding $h_r$ values in the memory bank are updated determines that $t - t_{snap}  \leq g$.

Under a sufficiently small learning rate, frequent updates to the memory bank ensure that $\frac{1}{b}\sum_{i \in B}h_r(\pmb{x}_i;\pmb{\theta}_{t})$ and $\frac{1}{b}\sum_{i \in B}h_r(\pmb{x}_i;\pmb{\theta}_{t_{snap}})$ remain highly and positively correlated. Intuitively, the negative sign in the snapshot-based correction term drives the two batch averages toward a common mean, thereby canceling a substantial portion of their shared variability and reducing the overall gradient variance. This mechanism is closely related to the control logic underlying Stochastic Variance Reduced Gradient (SVRG) \cite{johnson2013accelerating}.

The same intuition applies to any composite-separable loss. Notably, quantile loss arises as a particular instance of such losses, exhibiting second-order interactions among samples. A detailed analysis of the memory-bank estimator for quantile loss is provided in appendix \ref{ssec:composite_separable}.

% Observe that the average over all $n$ samples is precomputed and retrieved directly from the memory bank, while the remaining two averages those involving only the current mini-batch are computed on the GPUs.  

\section{Experiments}
\label{sec:experiments}

For experimental validation, we chose three datasets CIFAR10C, CIFAR100C and TINYIMAGENETC with number of classes ranging from 10 to 200. We run all the experiments on corruption level 5. We use ResNet18 as a representative for classic convolution-based networks and three transformer networks suited for $32 \times 32 \times 3$ images namely compact convolution transformer (CCT), compact vision transformer (CVT) and a lightweight vision transformer (ViT-Lite) \cite{hassani2021escaping} to validate on transformer networks. The code is available at \url{https://anonymous.4open.science/r/GeometricQuantile-TTA-BE02/README.md}
% choice of datasets and models for supporting claims

Each of these networks are trained on the training datasets of CIFAR10, CIFAR100 and TINYIMAGENET. The ResNet models for all datasets are trained using cross-entropy loss. The CCT models for CIFAR10 and CIFAR100 are pre-trained SSL models whose weights are obtained from the official repository \cite{hassani2021escaping}. The CVT, ViT-Lite models for all datasets and CCT for TinyImageNet are trained from scratch using cross-entropy loss as the weights are unavailable from the official repository. Also, regularization terms to penalize the deviation of the batch statistics of channel means and standard deviations of the decorrupted images from standard values are added to the quantile loss in the implementation. The decorruptor is a convolutional network with spatial convolutions of various sizes and a deconvolution containing $6.2$M parameters. 
% Training details. Mention channel-mean and std statistics penalty

\subsection{Results}

% overview of results supporting main claim - architecture-agnostic TTA
\begin{table*}[ht]
\caption{Increase in test accuracy by the Quantile-approach compared to the baseline. Baselines are obtained by applying the pre-trained classifier directly to the corrupted datasets. Current SOTA are the accuracies obtained by SoTTA \cite{gong2024sotta} whenever applicable}
\centering
\begin{tabular}{cccccc}
{\makecell{Dataset \\ (\# classes)}}  & Model & Baseline & Quantile & Increase & Current SOTA \\
\toprule
\multirow{3}{*}{\makecell{CIFAR10C \\ (10)}} & ResNet18 & 54.4\% & 79.4\% & +25.0\% &  \multirow{4}{*}{\makecell{82.2\%\text{*}}}   \\
& CCT & 80.3\% & \textbf{88.8\%} & +8.5\% &  \\
& CVT & 54.6\% & 70.3\% & +15.7\% & \\
& ViT-Lite & 53.2\% & 69.0\% & +15.8\% &  \\
\midrule
\multirow{3}{*}{\makecell{CIFAR100C \\ (100)}} & ResNet18 & 38.7\% & 49.9\% & +11.2\% & \multirow{4}{*}{\makecell{60.5\%\textdagger}}   \\
& CCT & 58.8\% & \textbf{65.0\%} & +6.2\% &   \\
& CVT & 40.6\% & 46.5\% & +5.9\% &  \\
& ViT-Lite & 39.2\% & 43.1\% & +3.9\% &  \\
\midrule 
\multirow{3}{*}{\makecell{TINYIMAGENETC \\ (200)}} & ResNet18 & 20.8\% & 25.4\% & +4.6\% &  \multirow{4}{*}{\makecell{NA\text{**}}} \\
& CCT & 12.8\% & 16.0\% & +3.2\% &  \\
& CVT & 12.5\% & 16.1\% & +3.6\% &  \\
& ViT-Lite & 12.3\% & 15.8\% & +3.5\% & \\
\bottomrule
\end{tabular}
\label{table:summary}
\end{table*}

Table \ref{table:summary} provides average improvement across 15 types of corruptions obtained on each of the datasets by the quantile approach compared to directly applying the classifier to the corrupted dataset. Baseline model refers to using the classifier trained on clean counter-parts i.e. CIFAR10, CIFAR100 and TINYIMAGENET directly on CIFAR10C, CIFAR100C and TINYIMAGENETC. Current SOTA corresponds to the accuracy obtained by SoTTA \cite{gong2024sotta} and are taken directly from the article. Although our ResNet18-based results fall short of SoTTA on both CIFAR10C and CIFAR100C ($79.4\%$ vs.\ $82.2\%$ and $49.5\%$ vs.\ $60.5\%$, respectively), the strength of our approach lies in its architecture independence. Unlike SoTTA, which is tailored to CNNs, our method applies seamlessly to transformer architectures which are widely known to surpass CNN-based classifiers in performance. This flexibility is not merely conceptual: when instantiated on CCT, our method achieves $65.0\%$ on CIFAR100C, outperforming SoTTA’s ResNet18 result of $60.5\%$. \emph{The increase of $6.6\%$ accuracy on CIFAR10C and $4.5\%$ on CIFAR100C compared to the current SOTA is attributed to the applicability of quantile-loss minimization to transformer architectures\footnote{\text{*} \textdagger \text{**} The SOTA accuracy refers to accuracy of current SOTA on models with parameters at most that of ResNet18. SOTA is not applicable to CCT, CVT and ViT-Lite. The results on TinyImageNetC are not readily available in the article. Hence the results are not tabulated.}.} Thus, while there is a slight performance gap on CNNs, our method compensates for it through broader applicability and demonstrably stronger performance on more advanced architectures, underscoring its long-term potential.
% % describe the table.  Should I add pairwise results?

\begin{table*}[ht]
\caption{Drop in test accuracy of the Quantile-approach and the current SOTA (SoTTA \cite{gong2024sotta}) w.r.t. the clean counterparts}
\centering
\begin{tabular}{ccccccc}
{Approach-Architecture}  & CIFAR10 & CIFAR10C & Drop & CIFAR100 & CIFAR100C & Drop \\
\toprule
SoTTA-ResNet18 & 95.2\% & 82.2\% & 13.0\% & 78.6\% &  60.5\% & 18.1\% \\
\midrule
Quantile-ResNet18 & 92.0\% & 79.9\% & 12.1\% & 72.5\% & 49.9\% & 23.1\% \\
\midrule
Quantile-CCT & 96.2\% & 88.8\% & \textbf{7.4\%} & 80.9\% & 65.0\% & \textbf{15.9\%} \\
\bottomrule
\end{tabular}
\label{table:drop}
\end{table*}

Table \ref{table:drop} provides the comparison of the average accuracy drop of the ResNet18 models on the corrupt test data (CIFAR10C and CIFAR100C) w.r.t. the accuracies on the clean test data counterparts i.e. CIFAR10 and CIFAR100 respectively. The results of SoTTA are tabulated from the article \cite{gong2024sotta}. We trained the ResNet18 from scratch and obtained the accuracies on CIFAR10 and CIFAR100 as tabulated in the second row. The pre-trained CCT weights are used to generate the results for the third row. Observe that the drops in accuracies are comparable in all the cases. On the other hand, quantile approach outperforming the current SOTA is purely attributed to the fact that they are applicable to transformers but the current SOTA are not.

\subsection{Runtime and Memory Cost Analysis} 

Quantile loss minimization enables learning a decorruptor without paired clean-corrupt images. Our implementation adds two components on top of a frozen classifier: (i) quantile loss with a memory bank, and (ii) a decorruptor network. Recall that the memory bank stores, at each optimization step: (i) source features, (ii) decorrupted features, and (iii) quantile indices of reference quantiles computed with respect to the empirical distributions of both the source and decorrupted features (the latter evaluated at the snapshot parameter). These cached quantities enable low-variance quantile-loss estimation without recomputing distribution statistics at each step as discussed in Subsection \ref{subsec:modelScaling}

A typical set-up in our experiments is a CIFAR100C with ResNet18 ($10,000$ source features with $512$ dimension, batch size $128$), the quantile-loss module needs $\approx570$ MB and $\approx 2$ ms per batch on an Nvidia H100. Unlike mean-teacher TTA methods (applicable to transformers) which duplicate the classifier and thus scale with classifier complexity, our overhead scales only with the complexity of corruption (i.e. the decorruptor architecture). In our experiments, the $6.2$ M parameter decorruptor increases training time from $1.44$ s to $2.03$ s per epoch and peak memory from $765$ MB to $1429$ MB which is comparable to mean-teacher-based methods, yet more principled, because it remains independent of the classifier architecture.

\subsection{Dependence on Hyperparameters}

Quantile loss minimization involves choice of two hyperparameters - the number of quantiles to choose (i.e. support of $\mu^f_{HUS}$ in Eq \ref{eq:QuantileLoss}) and the batch-size. We ran experiments with batch sizes of $128$ and $256$ respectively on CIFAR100C dataset using the pre-trained CCT transformer model. For each of the batch sizes, we vary the the number of quantiles to $1000$ and $2000$. These results are compared to that of the results provided in table \ref{table:summary} which uses $1000$ quantiles and $512$ batch size. The results in table \ref{table:ablation} across batch sizes and number of quantiles are similar indicating that the algorithm scales well with larger models that force smaller batch sizes. Also, approximately $10$ quantiles per class suffice assuming the classifier is well-trained on the training distribution allowing for well-separable clusters. Intuitively, the hyperparameter - number of quantiles can be set as an estimated maximum number of clusters per class-conditional multiplied by number of classes.

% batch size sensitivity - memory-bank claim evidence
% number of quantiles used - hyperparameter easy tuning evidence

\begin{table*}[ht]
\caption{Test accuracy obtained by the CCT Quantile on CIFAR100 dataset with small batch sizes and varying number of quantiles.}
\centering
\begin{tabular}{cccccc}
{\makecell{Dataset \\ (\# classes)}}  & Batch Size  & \# Quantiles per Class & Accuracy \\
\toprule
\multirow{3}{*}{\makecell{CIFAR100C \\ (100)}} & 128 & 10 & 65.5\% \\
& 256 & 10 & 65.8\%  \\
& 128 & 20 & 66.8\% \\
& 256 & 20 & 65.6\% \\
& 512 & 10 & 65.0\% \\
\bottomrule
\end{tabular}
\label{table:ablation}
\end{table*}

\section{Conclusions and Perspectives}

In this article, we propose an architecture-agnostic design to solve the TTA problem by explicitly learning a decorruptor from the test distribution to the train distribution. The solution is based on matching the classifier-feature distributions of the train and the decorrupted images. We design a novel loss function based on geometric quantiles to quantify closeness between high-dimensional distributions and provide an efficient algorithm to implement the loss function. We experimentally show that it works on both CNN-based and transformer-based classifiers. Further, we provide a theoretical result showing the existence of `good' initialization of the network such that the quantile loss minimization is theoretically equivalent to a training regime based on paired corrupt and clean images in principle. Thus, quantile loss minimization is shown to effectively align class-conditionals without the need of class labels assuming `good' initialization.

Quantile loss minimization algorithm efficiently matches high-dimensional probability distributions. It would enable scalable optimal transport–based domain adaptation, and improved self-supervised representation learning through tighter matching of augmented view statistics. In generative modeling, efficient distribution matching could stabilize adversarial training, accelerate convergence of diffusion and flow-based models, and provide more reliable likelihood-free inference. On the theoretical side, characterizing the quantile basis of a probability distribution etc would allow us to design more efficient algorithms.

\bibliography{references}
\bibliographystyle{tmlr}

\newpage
\appendix
\section{Appendix}

\subsection{Detailed Experimental Results and Proofs}
% detailed noise-type comparison results supporting main claim - architecture-agnostic TTA
\begin{table}[t]
\begin{center}
\caption{CIFAR10C - Comparison of Accuracies across noise-types at severity level $5$.}
\resizebox{\textwidth}{!}{%
\begin{tabular}{p|c|c|c|c|c|c|c|c|c|c|c|c|c|c|c|c} 
\hline
\multicolumn{1}{@{\hspace{0.3em}}c@{\hspace{0.3em}}}{} &
\multicolumn{1}{@{\hspace{0.3em}}c@{\hspace{0.3em}}}{\rot{Gaussian}} &
\multicolumn{1}{@{\hspace{0.3em}}c@{\hspace{0.3em}}}{\rot{Shot}} &
\multicolumn{1}{@{\hspace{0.3em}}c@{\hspace{0.3em}}}{\rot{Impulse}} &
\multicolumn{1}{@{\hspace{0.3em}}c@{\hspace{0.3em}}}{\rot{Defocus}} &
\multicolumn{1}{@{\hspace{0.3em}}c@{\hspace{0.3em}}}{\rot{Glass}} &
\multicolumn{1}{@{\hspace{0.3em}}c@{\hspace{0.3em}}}{\rot{Motion}} &
\multicolumn{1}{@{\hspace{0.3em}}c@{\hspace{0.3em}}}{\rot{Zoom}} &
\multicolumn{1}{@{\hspace{0.3em}}c@{\hspace{0.3em}}}{\rot{Snow}} &
\multicolumn{1}{@{\hspace{0.3em}}c@{\hspace{0.3em}}}{\rot{Frost}} &
\multicolumn{1}{@{\hspace{0.3em}}c@{\hspace{0.3em}}}{\rot{Fog}} &
\multicolumn{1}{@{\hspace{0.3em}}c@{\hspace{0.3em}}}{\rot{Brightness}} &
\multicolumn{1}{@{\hspace{0.3em}}c@{\hspace{0.3em}}}{\rot{Contrast}} &
\multicolumn{1}{@{\hspace{0.3em}}c@{\hspace{0.3em}}}{\rot{Elastic}} &
\multicolumn{1}{@{\hspace{0.3em}}c@{\hspace{0.3em}}}{\rot{Pixelate}} &
\multicolumn{1}{@{\hspace{0.3em}}c@{\hspace{0.3em}}}{\rot{Jpeg}} &
\multicolumn{1}{@{\hspace{0.3em}}c@{\hspace{0.3em}}}{Mean} \\

\hline
\multicolumn{1}{@{\hspace{0.3em}}c@{\hspace{0.3em}}}{ResNet18 Source} &
\multicolumn{1}{@{\hspace{0.3em}}c@{\hspace{0.3em}}}{36.4} &
\multicolumn{1}{@{\hspace{0.3em}}c@{\hspace{0.3em}}}{43.3} &
\multicolumn{1}{@{\hspace{0.3em}}c@{\hspace{0.3em}}}{28.0} &
\multicolumn{1}{@{\hspace{0.3em}}c@{\hspace{0.3em}}}{48.4} &
\multicolumn{1}{@{\hspace{0.3em}}c@{\hspace{0.3em}}}{44.7} &
\multicolumn{1}{@{\hspace{0.3em}}c@{\hspace{0.3em}}}{55.8} &
\multicolumn{1}{@{\hspace{0.3em}}c@{\hspace{0.3em}}}{58.4} &
\multicolumn{1}{@{\hspace{0.3em}}c@{\hspace{0.3em}}}{70.6} &
\multicolumn{1}{@{\hspace{0.3em}}c@{\hspace{0.3em}}}{61.6} &
\multicolumn{1}{@{\hspace{0.3em}}c@{\hspace{0.3em}}}{61.1} &
\multicolumn{1}{@{\hspace{0.3em}}c@{\hspace{0.3em}}}{86.0} &
\multicolumn{1}{@{\hspace{0.3em}}c@{\hspace{0.3em}}}{23.4} &
\multicolumn{1}{@{\hspace{0.3em}}c@{\hspace{0.3em}}}{72.3} &
\multicolumn{1}{@{\hspace{0.3em}}c@{\hspace{0.3em}}}{50.5} &
\multicolumn{1}{@{\hspace{0.3em}}c@{\hspace{0.3em}}}{75.2} &
\multicolumn{1}{@{\hspace{0.3em}}c@{\hspace{0.3em}}}{54.4} \\

\multicolumn{1}{@{\hspace{0.3em}}c@{\hspace{0.3em}}}{ResNet18 Quantile} &
\multicolumn{1}{@{\hspace{0.3em}}c@{\hspace{0.3em}}}{76.9} &
\multicolumn{1}{@{\hspace{0.3em}}c@{\hspace{0.3em}}}{80.3} &
\multicolumn{1}{@{\hspace{0.3em}}c@{\hspace{0.3em}}}{79.5} &
\multicolumn{1}{@{\hspace{0.3em}}c@{\hspace{0.3em}}}{81.6} &
\multicolumn{1}{@{\hspace{0.3em}}c@{\hspace{0.3em}}}{68.1} &
\multicolumn{1}{@{\hspace{0.3em}}c@{\hspace{0.3em}}}{80.5} &
\multicolumn{1}{@{\hspace{0.3em}}c@{\hspace{0.3em}}}{80.8} &
\multicolumn{1}{@{\hspace{0.3em}}c@{\hspace{0.3em}}}{80.8} &
\multicolumn{1}{@{\hspace{0.3em}}c@{\hspace{0.3em}}}{79.6} &
\multicolumn{1}{@{\hspace{0.3em}}c@{\hspace{0.3em}}}{79.5} &
\multicolumn{1}{@{\hspace{0.3em}}c@{\hspace{0.3em}}}{86.8} &
\multicolumn{1}{@{\hspace{0.3em}}c@{\hspace{0.3em}}}{80.2} &
\multicolumn{1}{@{\hspace{0.3em}}c@{\hspace{0.3em}}}{75.0} &
\multicolumn{1}{@{\hspace{0.3em}}c@{\hspace{0.3em}}}{81.5} &
\multicolumn{1}{@{\hspace{0.3em}}c@{\hspace{0.3em}}}{79.6} &
\multicolumn{1}{@{\hspace{0.3em}}c@{\hspace{0.3em}}}{79.4} \\

\multicolumn{1}{@{\hspace{0.3em}}c@{\hspace{0.3em}}}{Increase} &
\multicolumn{1}{@{\hspace{0.3em}}c@{\hspace{0.3em}}}{\small{+40.5}} &
\multicolumn{1}{@{\hspace{0.3em}}c@{\hspace{0.3em}}}{\small{+37.0}} &
\multicolumn{1}{@{\hspace{0.3em}}c@{\hspace{0.3em}}}{\small{+51.5}} &
\multicolumn{1}{@{\hspace{0.3em}}c@{\hspace{0.3em}}}{\small{+33.2}} &
\multicolumn{1}{@{\hspace{0.3em}}c@{\hspace{0.3em}}}{\small{+23.4}} &
\multicolumn{1}{@{\hspace{0.3em}}c@{\hspace{0.3em}}}{\small{+24.7}} &
\multicolumn{1}{@{\hspace{0.3em}}c@{\hspace{0.3em}}}{\small{+22.4}} &
\multicolumn{1}{@{\hspace{0.3em}}c@{\hspace{0.3em}}}{\small{+10.2}} &
\multicolumn{1}{@{\hspace{0.3em}}c@{\hspace{0.3em}}}{\small{+18.0}} &
\multicolumn{1}{@{\hspace{0.3em}}c@{\hspace{0.3em}}}{\small{+18.4}} &
\multicolumn{1}{@{\hspace{0.3em}}c@{\hspace{0.3em}}}{\small{+0.8}} &
\multicolumn{1}{@{\hspace{0.3em}}c@{\hspace{0.3em}}}{\small{+56.8}} &
\multicolumn{1}{@{\hspace{0.3em}}c@{\hspace{0.3em}}}{\small{+2.7}} &
\multicolumn{1}{@{\hspace{0.3em}}c@{\hspace{0.3em}}}{\small{+31.0}} &
\multicolumn{1}{@{\hspace{0.3em}}c@{\hspace{0.3em}}}{\small{+4.4}} &
\multicolumn{1}{@{\hspace{0.3em}}c@{\hspace{0.3em}}}{\small{+25.0}} \\
\hline

\multicolumn{1}{@{\hspace{0.3em}}c@{\hspace{0.3em}}}{ResNet18 SoTTA} &
\multicolumn{1}{@{\hspace{0.3em}}c@{\hspace{0.3em}}}{75.0} &
\multicolumn{1}{@{\hspace{0.3em}}c@{\hspace{0.3em}}}{77.5} &
\multicolumn{1}{@{\hspace{0.3em}}c@{\hspace{0.3em}}}{68.8} &
\multicolumn{1}{@{\hspace{0.3em}}c@{\hspace{0.3em}}}{88.8} &
\multicolumn{1}{@{\hspace{0.3em}}c@{\hspace{0.3em}}}{70.7} &
\multicolumn{1}{@{\hspace{0.3em}}c@{\hspace{0.3em}}}{87.5} &
\multicolumn{1}{@{\hspace{0.3em}}c@{\hspace{0.3em}}}{89.0} &
\multicolumn{1}{@{\hspace{0.3em}}c@{\hspace{0.3em}}}{85.4} &
\multicolumn{1}{@{\hspace{0.3em}}c@{\hspace{0.3em}}}{84.0} &
\multicolumn{1}{@{\hspace{0.3em}}c@{\hspace{0.3em}}}{88.2} &
\multicolumn{1}{@{\hspace{0.3em}}c@{\hspace{0.3em}}}{91.9} &
\multicolumn{1}{@{\hspace{0.3em}}c@{\hspace{0.3em}}}{83.9} &
\multicolumn{1}{@{\hspace{0.3em}}c@{\hspace{0.3em}}}{79.8} &
\multicolumn{1}{@{\hspace{0.3em}}c@{\hspace{0.3em}}}{83.9} &
\multicolumn{1}{@{\hspace{0.3em}}c@{\hspace{0.3em}}}{78.3} &
\multicolumn{1}{@{\hspace{0.3em}}c@{\hspace{0.3em}}}{82.2} \\
\hline

\multicolumn{1}{@{\hspace{0.3em}}c@{\hspace{0.3em}}}{CCT Source} &
\multicolumn{1}{@{\hspace{0.3em}}c@{\hspace{0.3em}}}{63.1} &
\multicolumn{1}{@{\hspace{0.3em}}c@{\hspace{0.3em}}}{67.6} &
\multicolumn{1}{@{\hspace{0.3em}}c@{\hspace{0.3em}}}{42.3} &
\multicolumn{1}{@{\hspace{0.3em}}c@{\hspace{0.3em}}}{92.4} &
\multicolumn{1}{@{\hspace{0.3em}}c@{\hspace{0.3em}}}{74.0} &
\multicolumn{1}{@{\hspace{0.3em}}c@{\hspace{0.3em}}}{89.7} &
\multicolumn{1}{@{\hspace{0.3em}}c@{\hspace{0.3em}}}{92.5} &
\multicolumn{1}{@{\hspace{0.3em}}c@{\hspace{0.3em}}}{89.6} &
\multicolumn{1}{@{\hspace{0.3em}}c@{\hspace{0.3em}}}{88.7} &
\multicolumn{1}{@{\hspace{0.3em}}c@{\hspace{0.3em}}}{85.5} &
\multicolumn{1}{@{\hspace{0.3em}}c@{\hspace{0.3em}}}{95.1} &
\multicolumn{1}{@{\hspace{0.3em}}c@{\hspace{0.3em}}}{88.7} &
\multicolumn{1}{@{\hspace{0.3em}}c@{\hspace{0.3em}}}{86.2} &
\multicolumn{1}{@{\hspace{0.3em}}c@{\hspace{0.3em}}}{64.8} &
\multicolumn{1}{@{\hspace{0.3em}}c@{\hspace{0.3em}}}{84.4} &
\multicolumn{1}{@{\hspace{0.3em}}c@{\hspace{0.3em}}}{80.3} \\

\multicolumn{1}{@{\hspace{0.3em}}c@{\hspace{0.3em}}}{CCT Quantile} &
\multicolumn{1}{@{\hspace{0.3em}}c@{\hspace{0.3em}}}{86.7} &
\multicolumn{1}{@{\hspace{0.3em}}c@{\hspace{0.3em}}}{87.9} &
\multicolumn{1}{@{\hspace{0.3em}}c@{\hspace{0.3em}}}{87.6} &
\multicolumn{1}{@{\hspace{0.3em}}c@{\hspace{0.3em}}}{90.8} &
\multicolumn{1}{@{\hspace{0.3em}}c@{\hspace{0.3em}}}{82.6} &
\multicolumn{1}{@{\hspace{0.3em}}c@{\hspace{0.3em}}}{89.9} &
\multicolumn{1}{@{\hspace{0.3em}}c@{\hspace{0.3em}}}{90.7} &
\multicolumn{1}{@{\hspace{0.3em}}c@{\hspace{0.3em}}}{90.1} &
\multicolumn{1}{@{\hspace{0.3em}}c@{\hspace{0.3em}}}{90.3} &
\multicolumn{1}{@{\hspace{0.3em}}c@{\hspace{0.3em}}}{88.6} &
\multicolumn{1}{@{\hspace{0.3em}}c@{\hspace{0.3em}}}{93.4} &
\multicolumn{1}{@{\hspace{0.3em}}c@{\hspace{0.3em}}}{91.0} &
\multicolumn{1}{@{\hspace{0.3em}}c@{\hspace{0.3em}}}{85.9} &
\multicolumn{1}{@{\hspace{0.3em}}c@{\hspace{0.3em}}}{90.2} &
\multicolumn{1}{@{\hspace{0.3em}}c@{\hspace{0.3em}}}{86.5} &
\multicolumn{1}{@{\hspace{0.3em}}c@{\hspace{0.3em}}}{88.8} \\

\multicolumn{1}{@{\hspace{0.3em}}c@{\hspace{0.3em}}}{Increase} &
\multicolumn{1}{@{\hspace{0.3em}}c@{\hspace{0.3em}}}{\small{+23.6}} &
\multicolumn{1}{@{\hspace{0.3em}}c@{\hspace{0.3em}}}{\small{+20.3}} &
\multicolumn{1}{@{\hspace{0.3em}}c@{\hspace{0.3em}}}{\small{+45.3}} &
\multicolumn{1}{@{\hspace{0.3em}}c@{\hspace{0.3em}}}{\small{-1.6}} &
\multicolumn{1}{@{\hspace{0.3em}}c@{\hspace{0.3em}}}{\small{+8.6}} &
\multicolumn{1}{@{\hspace{0.3em}}c@{\hspace{0.3em}}}{\small{+0.2}} &
\multicolumn{1}{@{\hspace{0.3em}}c@{\hspace{0.3em}}}{\small{-1.8}} &
\multicolumn{1}{@{\hspace{0.3em}}c@{\hspace{0.3em}}}{\small{+0.5}} &
\multicolumn{1}{@{\hspace{0.3em}}c@{\hspace{0.3em}}}{\small{+1.6}} &
\multicolumn{1}{@{\hspace{0.3em}}c@{\hspace{0.3em}}}{\small{+3.1}} &
\multicolumn{1}{@{\hspace{0.3em}}c@{\hspace{0.3em}}}{\small{-1.7}} &
\multicolumn{1}{@{\hspace{0.3em}}c@{\hspace{0.3em}}}{\small{+2.3}} &
\multicolumn{1}{@{\hspace{0.3em}}c@{\hspace{0.3em}}}{\small{-0.3}} &
\multicolumn{1}{@{\hspace{0.3em}}c@{\hspace{0.3em}}}{\small{+25.4}} &
\multicolumn{1}{@{\hspace{0.3em}}c@{\hspace{0.3em}}}{\small{+2.1}} &
\multicolumn{1}{@{\hspace{0.3em}}c@{\hspace{0.3em}}}{\small{+8.5}} \\
\hline

\multicolumn{1}{@{\hspace{0.3em}}c@{\hspace{0.3em}}}{CVT Source} &
\multicolumn{1}{@{\hspace{0.3em}}c@{\hspace{0.3em}}}{35.4} &
\multicolumn{1}{@{\hspace{0.3em}}c@{\hspace{0.3em}}}{39.5} &
\multicolumn{1}{@{\hspace{0.3em}}c@{\hspace{0.3em}}}{29.4} &
\multicolumn{1}{@{\hspace{0.3em}}c@{\hspace{0.3em}}}{56.2} &
\multicolumn{1}{@{\hspace{0.3em}}c@{\hspace{0.3em}}}{59.8} &
\multicolumn{1}{@{\hspace{0.3em}}c@{\hspace{0.3em}}}{60.2} &
\multicolumn{1}{@{\hspace{0.3em}}c@{\hspace{0.3em}}}{62.5} &
\multicolumn{1}{@{\hspace{0.3em}}c@{\hspace{0.3em}}}{66.8} &
\multicolumn{1}{@{\hspace{0.3em}}c@{\hspace{0.3em}}}{59.2} &
\multicolumn{1}{@{\hspace{0.3em}}c@{\hspace{0.3em}}}{42.8} &
\multicolumn{1}{@{\hspace{0.3em}}c@{\hspace{0.3em}}}{75.3} &
\multicolumn{1}{@{\hspace{0.3em}}c@{\hspace{0.3em}}}{20.7} &
\multicolumn{1}{@{\hspace{0.3em}}c@{\hspace{0.3em}}}{74.0} &
\multicolumn{1}{@{\hspace{0.3em}}c@{\hspace{0.3em}}}{68.0} &
\multicolumn{1}{@{\hspace{0.3em}}c@{\hspace{0.3em}}}{69.0} &
\multicolumn{1}{@{\hspace{0.3em}}c@{\hspace{0.3em}}}{54.6} \\

\multicolumn{1}{@{\hspace{0.3em}}c@{\hspace{0.3em}}}{CVT Quantile} &
\multicolumn{1}{@{\hspace{0.3em}}c@{\hspace{0.3em}}}{70.1} &
\multicolumn{1}{@{\hspace{0.3em}}c@{\hspace{0.3em}}}{72.1} &
\multicolumn{1}{@{\hspace{0.3em}}c@{\hspace{0.3em}}}{72.6} &
\multicolumn{1}{@{\hspace{0.3em}}c@{\hspace{0.3em}}}{75.1} &
\multicolumn{1}{@{\hspace{0.3em}}c@{\hspace{0.3em}}}{67.9} &
\multicolumn{1}{@{\hspace{0.3em}}c@{\hspace{0.3em}}}{73.4} &
\multicolumn{1}{@{\hspace{0.3em}}c@{\hspace{0.3em}}}{74.6} &
\multicolumn{1}{@{\hspace{0.3em}}c@{\hspace{0.3em}}}{74.3} &
\multicolumn{1}{@{\hspace{0.3em}}c@{\hspace{0.3em}}}{71.8} &
\multicolumn{1}{@{\hspace{0.3em}}c@{\hspace{0.3em}}}{65.3} &
\multicolumn{1}{@{\hspace{0.3em}}c@{\hspace{0.3em}}}{76.2} &
\multicolumn{1}{@{\hspace{0.3em}}c@{\hspace{0.3em}}}{42.9} &
\multicolumn{1}{@{\hspace{0.3em}}c@{\hspace{0.3em}}}{72.1} &
\multicolumn{1}{@{\hspace{0.3em}}c@{\hspace{0.3em}}}{76.2} &
\multicolumn{1}{@{\hspace{0.3em}}c@{\hspace{0.3em}}}{69.8} &
\multicolumn{1}{@{\hspace{0.3em}}c@{\hspace{0.3em}}}{70.3} \\

\multicolumn{1}{@{\hspace{0.3em}}c@{\hspace{0.3em}}}{Increase} &
\multicolumn{1}{@{\hspace{0.3em}}c@{\hspace{0.3em}}}{\small{+34.7}} &
\multicolumn{1}{@{\hspace{0.3em}}c@{\hspace{0.3em}}}{\small{+32.6}} &
\multicolumn{1}{@{\hspace{0.3em}}c@{\hspace{0.3em}}}{\small{+43.2}} &
\multicolumn{1}{@{\hspace{0.3em}}c@{\hspace{0.3em}}}{\small{+18.9}} &
\multicolumn{1}{@{\hspace{0.3em}}c@{\hspace{0.3em}}}{\small{+8.1}} &
\multicolumn{1}{@{\hspace{0.3em}}c@{\hspace{0.3em}}}{\small{+10.3}} &
\multicolumn{1}{@{\hspace{0.3em}}c@{\hspace{0.3em}}}{\small{+7.2}} &
\multicolumn{1}{@{\hspace{0.3em}}c@{\hspace{0.3em}}}{\small{+7.5}} &
\multicolumn{1}{@{\hspace{0.3em}}c@{\hspace{0.3em}}}{\small{+12.6}} &
\multicolumn{1}{@{\hspace{0.3em}}c@{\hspace{0.3em}}}{\small{+22.5}} &
\multicolumn{1}{@{\hspace{0.3em}}c@{\hspace{0.3em}}}{\small{+0.9}} &
\multicolumn{1}{@{\hspace{0.3em}}c@{\hspace{0.3em}}}{\small{+22.2}} &
\multicolumn{1}{@{\hspace{0.3em}}c@{\hspace{0.3em}}}{\small{-1.9}} &
\multicolumn{1}{@{\hspace{0.3em}}c@{\hspace{0.3em}}}{\small{+8.2}} &
\multicolumn{1}{@{\hspace{0.3em}}c@{\hspace{0.3em}}}{\small{+0.8}} &
\multicolumn{1}{@{\hspace{0.3em}}c@{\hspace{0.3em}}}{\small{+15.7}} \\
\hline

\multicolumn{1}{@{\hspace{0.3em}}c@{\hspace{0.3em}}}{ViT-Lite Source} &
\multicolumn{1}{@{\hspace{0.3em}}c@{\hspace{0.3em}}}{32.7} &
\multicolumn{1}{@{\hspace{0.3em}}c@{\hspace{0.3em}}}{37.0} &
\multicolumn{1}{@{\hspace{0.3em}}c@{\hspace{0.3em}}}{30.7} &
\multicolumn{1}{@{\hspace{0.3em}}c@{\hspace{0.3em}}}{53.0} &
\multicolumn{1}{@{\hspace{0.3em}}c@{\hspace{0.3em}}}{59.9} &
\multicolumn{1}{@{\hspace{0.3em}}c@{\hspace{0.3em}}}{57.9} &
\multicolumn{1}{@{\hspace{0.3em}}c@{\hspace{0.3em}}}{59.7} &
\multicolumn{1}{@{\hspace{0.3em}}c@{\hspace{0.3em}}}{64.4} &
\multicolumn{1}{@{\hspace{0.3em}}c@{\hspace{0.3em}}}{55.2} &
\multicolumn{1}{@{\hspace{0.3em}}c@{\hspace{0.3em}}}{41.4} &
\multicolumn{1}{@{\hspace{0.3em}}c@{\hspace{0.3em}}}{73.3} &
\multicolumn{1}{@{\hspace{0.3em}}c@{\hspace{0.3em}}}{23.5} &
\multicolumn{1}{@{\hspace{0.3em}}c@{\hspace{0.3em}}}{72.9} &
\multicolumn{1}{@{\hspace{0.3em}}c@{\hspace{0.3em}}}{67.5} &
\multicolumn{1}{@{\hspace{0.3em}}c@{\hspace{0.3em}}}{68.9} &
\multicolumn{1}{@{\hspace{0.3em}}c@{\hspace{0.3em}}}{53.2} \\

\multicolumn{1}{@{\hspace{0.3em}}c@{\hspace{0.3em}}}{ViT-Lite Quantile} &
\multicolumn{1}{@{\hspace{0.3em}}c@{\hspace{0.3em}}}{67.7} &
\multicolumn{1}{@{\hspace{0.3em}}c@{\hspace{0.3em}}}{71.0} &
\multicolumn{1}{@{\hspace{0.3em}}c@{\hspace{0.3em}}}{71.5} &
\multicolumn{1}{@{\hspace{0.3em}}c@{\hspace{0.3em}}}{73.1} &
\multicolumn{1}{@{\hspace{0.3em}}c@{\hspace{0.3em}}}{66.8} &
\multicolumn{1}{@{\hspace{0.3em}}c@{\hspace{0.3em}}}{74.2} &
\multicolumn{1}{@{\hspace{0.3em}}c@{\hspace{0.3em}}}{75.1} &
\multicolumn{1}{@{\hspace{0.3em}}c@{\hspace{0.3em}}}{72.5} &
\multicolumn{1}{@{\hspace{0.3em}}c@{\hspace{0.3em}}}{71.3} &
\multicolumn{1}{@{\hspace{0.3em}}c@{\hspace{0.3em}}}{63.7} &
\multicolumn{1}{@{\hspace{0.3em}}c@{\hspace{0.3em}}}{75.6} &
\multicolumn{1}{@{\hspace{0.3em}}c@{\hspace{0.3em}}}{37.3} &
\multicolumn{1}{@{\hspace{0.3em}}c@{\hspace{0.3em}}}{71.3} &
\multicolumn{1}{@{\hspace{0.3em}}c@{\hspace{0.3em}}}{74.7} &
\multicolumn{1}{@{\hspace{0.3em}}c@{\hspace{0.3em}}}{68.7} &
\multicolumn{1}{@{\hspace{0.3em}}c@{\hspace{0.3em}}}{69.0} \\

\multicolumn{1}{@{\hspace{0.3em}}c@{\hspace{0.3em}}}{Increase} &
\multicolumn{1}{@{\hspace{0.3em}}c@{\hspace{0.3em}}}{\small{+35.0}} &
\multicolumn{1}{@{\hspace{0.3em}}c@{\hspace{0.3em}}}{\small{+34.0}} &
\multicolumn{1}{@{\hspace{0.3em}}c@{\hspace{0.3em}}}{\small{+40.8}} &
\multicolumn{1}{@{\hspace{0.3em}}c@{\hspace{0.3em}}}{\small{+20.1}} &
\multicolumn{1}{@{\hspace{0.3em}}c@{\hspace{0.3em}}}{\small{+6.9}} &
\multicolumn{1}{@{\hspace{0.3em}}c@{\hspace{0.3em}}}{\small{+16.3}} &
\multicolumn{1}{@{\hspace{0.3em}}c@{\hspace{0.3em}}}{\small{+15.4}} &
\multicolumn{1}{@{\hspace{0.3em}}c@{\hspace{0.3em}}}{\small{+8.1}} &
\multicolumn{1}{@{\hspace{0.3em}}c@{\hspace{0.3em}}}{\small{+16.1}} &
\multicolumn{1}{@{\hspace{0.3em}}c@{\hspace{0.3em}}}{\small{+22.3}} &
\multicolumn{1}{@{\hspace{0.3em}}c@{\hspace{0.3em}}}{\small{+2.3}} &
\multicolumn{1}{@{\hspace{0.3em}}c@{\hspace{0.3em}}}{\small{+13.8}} &
\multicolumn{1}{@{\hspace{0.3em}}c@{\hspace{0.3em}}}{\small{-1.6}} &
\multicolumn{1}{@{\hspace{0.3em}}c@{\hspace{0.3em}}}{\small{+7.2}} &
\multicolumn{1}{@{\hspace{0.3em}}c@{\hspace{0.3em}}}{\small{-0.2}} &
\multicolumn{1}{@{\hspace{0.3em}}c@{\hspace{0.3em}}}{\small{+15.8}} \\
\hline
\end{tabular}
}
\end{center}
\end{table}

\begin{table}[t]
\begin{center}
\caption{CIFAR100C - Comparison of Accuracies across noise-types at severity level $5$.}
\resizebox{\textwidth}{!}{%
\begin{tabular}{p|c|c|c|c|c|c|c|c|c|c|c|c|c|c|c|c} 
\hline
\multicolumn{1}{@{\hspace{0.3em}}c@{\hspace{0.3em}}}{} &
\multicolumn{1}{@{\hspace{0.3em}}c@{\hspace{0.3em}}}{\rot{Gaussian}} &
\multicolumn{1}{@{\hspace{0.3em}}c@{\hspace{0.3em}}}{\rot{Shot}} &
\multicolumn{1}{@{\hspace{0.3em}}c@{\hspace{0.3em}}}{\rot{Impulse}} &
\multicolumn{1}{@{\hspace{0.3em}}c@{\hspace{0.3em}}}{\rot{Defocus}} &
\multicolumn{1}{@{\hspace{0.3em}}c@{\hspace{0.3em}}}{\rot{Glass}} &
\multicolumn{1}{@{\hspace{0.3em}}c@{\hspace{0.3em}}}{\rot{Motion}} &
\multicolumn{1}{@{\hspace{0.3em}}c@{\hspace{0.3em}}}{\rot{Zoom}} &
\multicolumn{1}{@{\hspace{0.3em}}c@{\hspace{0.3em}}}{\rot{Snow}} &
\multicolumn{1}{@{\hspace{0.3em}}c@{\hspace{0.3em}}}{\rot{Frost}} &
\multicolumn{1}{@{\hspace{0.3em}}c@{\hspace{0.3em}}}{\rot{Fog}} &
\multicolumn{1}{@{\hspace{0.3em}}c@{\hspace{0.3em}}}{\rot{Brightness}} &
\multicolumn{1}{@{\hspace{0.3em}}c@{\hspace{0.3em}}}{\rot{Contrast}} &
\multicolumn{1}{@{\hspace{0.3em}}c@{\hspace{0.3em}}}{\rot{Elastic}} &
\multicolumn{1}{@{\hspace{0.3em}}c@{\hspace{0.3em}}}{\rot{Pixelate}} &
\multicolumn{1}{@{\hspace{0.3em}}c@{\hspace{0.3em}}}{\rot{Jpeg}} &
\multicolumn{1}{@{\hspace{0.3em}}c@{\hspace{0.3em}}}{Mean} \\

\hline
\multicolumn{1}{@{\hspace{0.3em}}c@{\hspace{0.3em}}}{ResNet18 Source} &
\multicolumn{1}{@{\hspace{0.3em}}c@{\hspace{0.3em}}}{12.3} &
\multicolumn{1}{@{\hspace{0.3em}}c@{\hspace{0.3em}}}{15.8} &
\multicolumn{1}{@{\hspace{0.3em}}c@{\hspace{0.3em}}}{22.8} &
\multicolumn{1}{@{\hspace{0.3em}}c@{\hspace{0.3em}}}{50.5} &
\multicolumn{1}{@{\hspace{0.3em}}c@{\hspace{0.3em}}}{21.8} &
\multicolumn{1}{@{\hspace{0.3em}}c@{\hspace{0.3em}}}{46.1} &
\multicolumn{1}{@{\hspace{0.3em}}c@{\hspace{0.3em}}}{51.2} &
\multicolumn{1}{@{\hspace{0.3em}}c@{\hspace{0.3em}}}{47.3} &
\multicolumn{1}{@{\hspace{0.3em}}c@{\hspace{0.3em}}}{36.3} &
\multicolumn{1}{@{\hspace{0.3em}}c@{\hspace{0.3em}}}{43.4} &
\multicolumn{1}{@{\hspace{0.3em}}c@{\hspace{0.3em}}}{69.9} &
\multicolumn{1}{@{\hspace{0.3em}}c@{\hspace{0.3em}}}{59.8} &
\multicolumn{1}{@{\hspace{0.3em}}c@{\hspace{0.3em}}}{45.9} &
\multicolumn{1}{@{\hspace{0.3em}}c@{\hspace{0.3em}}}{15.2} &
\multicolumn{1}{@{\hspace{0.3em}}c@{\hspace{0.3em}}}{41.7} &
\multicolumn{1}{@{\hspace{0.3em}}c@{\hspace{0.3em}}}{38.7} \\

\multicolumn{1}{@{\hspace{0.3em}}c@{\hspace{0.3em}}}{ResNet18 Quantile} &
\multicolumn{1}{@{\hspace{0.3em}}c@{\hspace{0.3em}}}{42.1} &
\multicolumn{1}{@{\hspace{0.3em}}c@{\hspace{0.3em}}}{45.5} &
\multicolumn{1}{@{\hspace{0.3em}}c@{\hspace{0.3em}}}{40.9} &
\multicolumn{1}{@{\hspace{0.3em}}c@{\hspace{0.3em}}}{57.4} &
\multicolumn{1}{@{\hspace{0.3em}}c@{\hspace{0.3em}}}{34.3} &
\multicolumn{1}{@{\hspace{0.3em}}c@{\hspace{0.3em}}}{56.4} &
\multicolumn{1}{@{\hspace{0.3em}}c@{\hspace{0.3em}}}{54.0} &
\multicolumn{1}{@{\hspace{0.3em}}c@{\hspace{0.3em}}}{55.0} &
\multicolumn{1}{@{\hspace{0.3em}}c@{\hspace{0.3em}}}{50.4} &
\multicolumn{1}{@{\hspace{0.3em}}c@{\hspace{0.3em}}}{46.5} &
\multicolumn{1}{@{\hspace{0.3em}}c@{\hspace{0.3em}}}{68.4} &
\multicolumn{1}{@{\hspace{0.3em}}c@{\hspace{0.3em}}}{54.9} &
\multicolumn{1}{@{\hspace{0.3em}}c@{\hspace{0.3em}}}{42.0} &
\multicolumn{1}{@{\hspace{0.3em}}c@{\hspace{0.3em}}}{53.4} &
\multicolumn{1}{@{\hspace{0.3em}}c@{\hspace{0.3em}}}{47.6} &
\multicolumn{1}{@{\hspace{0.3em}}c@{\hspace{0.3em}}}{49.9} \\

\multicolumn{1}{@{\hspace{0.3em}}c@{\hspace{0.3em}}}{Increase} &
\multicolumn{1}{@{\hspace{0.3em}}c@{\hspace{0.3em}}}{\small{+29.8}} &
\multicolumn{1}{@{\hspace{0.3em}}c@{\hspace{0.3em}}}{\small{+29.7}} &
\multicolumn{1}{@{\hspace{0.3em}}c@{\hspace{0.3em}}}{\small{+18.1}} &
\multicolumn{1}{@{\hspace{0.3em}}c@{\hspace{0.3em}}}{\small{+6.9}} &
\multicolumn{1}{@{\hspace{0.3em}}c@{\hspace{0.3em}}}{\small{+12.5}} &
\multicolumn{1}{@{\hspace{0.3em}}c@{\hspace{0.3em}}}{\small{+10.3}} &
\multicolumn{1}{@{\hspace{0.3em}}c@{\hspace{0.3em}}}{\small{+2.8}} &
\multicolumn{1}{@{\hspace{0.3em}}c@{\hspace{0.3em}}}{\small{+7.7}} &
\multicolumn{1}{@{\hspace{0.3em}}c@{\hspace{0.3em}}}{\small{+14.1}} &
\multicolumn{1}{@{\hspace{0.3em}}c@{\hspace{0.3em}}}{\small{+3.1}} &
\multicolumn{1}{@{\hspace{0.3em}}c@{\hspace{0.3em}}}{\small{-1.5}} &
\multicolumn{1}{@{\hspace{0.3em}}c@{\hspace{0.3em}}}{\small{-4.9}} &
\multicolumn{1}{@{\hspace{0.3em}}c@{\hspace{0.3em}}}{\small{-3.9}} &
\multicolumn{1}{@{\hspace{0.3em}}c@{\hspace{0.3em}}}{\small{+38.2}} &
\multicolumn{1}{@{\hspace{0.3em}}c@{\hspace{0.3em}}}{\small{+5.9}} &
\multicolumn{1}{@{\hspace{0.3em}}c@{\hspace{0.3em}}}{\small{+11.2}} \\
\hline

\multicolumn{1}{@{\hspace{0.3em}}c@{\hspace{0.3em}}}{ResNet18 SoTTA} &
\multicolumn{1}{@{\hspace{0.3em}}c@{\hspace{0.3em}}}{52.0} &
\multicolumn{1}{@{\hspace{0.3em}}c@{\hspace{0.3em}}}{53.4} &
\multicolumn{1}{@{\hspace{0.3em}}c@{\hspace{0.3em}}}{45.0} &
\multicolumn{1}{@{\hspace{0.3em}}c@{\hspace{0.3em}}}{68.8} &
\multicolumn{1}{@{\hspace{0.3em}}c@{\hspace{0.3em}}}{49.1} &
\multicolumn{1}{@{\hspace{0.3em}}c@{\hspace{0.3em}}}{66.7} &
\multicolumn{1}{@{\hspace{0.3em}}c@{\hspace{0.3em}}}{69.0} &
\multicolumn{1}{@{\hspace{0.3em}}c@{\hspace{0.3em}}}{61.7} &
\multicolumn{1}{@{\hspace{0.3em}}c@{\hspace{0.3em}}}{60.2} &
\multicolumn{1}{@{\hspace{0.3em}}c@{\hspace{0.3em}}}{64.7} &
\multicolumn{1}{@{\hspace{0.3em}}c@{\hspace{0.3em}}}{72.2} &
\multicolumn{1}{@{\hspace{0.3em}}c@{\hspace{0.3em}}}{66.4} &
\multicolumn{1}{@{\hspace{0.3em}}c@{\hspace{0.3em}}}{58.6} &
\multicolumn{1}{@{\hspace{0.3em}}c@{\hspace{0.3em}}}{64.1} &
\multicolumn{1}{@{\hspace{0.3em}}c@{\hspace{0.3em}}}{55.0} &
\multicolumn{1}{@{\hspace{0.3em}}c@{\hspace{0.3em}}}{60.5} \\

\hline

\multicolumn{1}{@{\hspace{0.3em}}c@{\hspace{0.3em}}}{CCT Source} &
\multicolumn{1}{@{\hspace{0.3em}}c@{\hspace{0.3em}}}{42.5} &
\multicolumn{1}{@{\hspace{0.3em}}c@{\hspace{0.3em}}}{45.1} &
\multicolumn{1}{@{\hspace{0.3em}}c@{\hspace{0.3em}}}{42.4} &
\multicolumn{1}{@{\hspace{0.3em}}c@{\hspace{0.3em}}}{73.1} &
\multicolumn{1}{@{\hspace{0.3em}}c@{\hspace{0.3em}}}{43.6} &
\multicolumn{1}{@{\hspace{0.3em}}c@{\hspace{0.3em}}}{69.7} &
\multicolumn{1}{@{\hspace{0.3em}}c@{\hspace{0.3em}}}{73.0} &
\multicolumn{1}{@{\hspace{0.3em}}c@{\hspace{0.3em}}}{67.5} &
\multicolumn{1}{@{\hspace{0.3em}}c@{\hspace{0.3em}}}{64.0} &
\multicolumn{1}{@{\hspace{0.3em}}c@{\hspace{0.3em}}}{55.3} &
\multicolumn{1}{@{\hspace{0.3em}}c@{\hspace{0.3em}}}{76.3} &
\multicolumn{1}{@{\hspace{0.3em}}c@{\hspace{0.3em}}}{63.1} &
\multicolumn{1}{@{\hspace{0.3em}}c@{\hspace{0.3em}}}{64.0} &
\multicolumn{1}{@{\hspace{0.3em}}c@{\hspace{0.3em}}}{42.8} &
\multicolumn{1}{@{\hspace{0.3em}}c@{\hspace{0.3em}}}{59.2} &
\multicolumn{1}{@{\hspace{0.3em}}c@{\hspace{0.3em}}}{58.8} \\

\multicolumn{1}{@{\hspace{0.3em}}c@{\hspace{0.3em}}}{CCT Quantile} &
\multicolumn{1}{@{\hspace{0.3em}}c@{\hspace{0.3em}}}{61.6} &
\multicolumn{1}{@{\hspace{0.3em}}c@{\hspace{0.3em}}}{62.5} &
\multicolumn{1}{@{\hspace{0.3em}}c@{\hspace{0.3em}}}{66.3} &
\multicolumn{1}{@{\hspace{0.3em}}c@{\hspace{0.3em}}}{71.9} &
\multicolumn{1}{@{\hspace{0.3em}}c@{\hspace{0.3em}}}{55.0} &
\multicolumn{1}{@{\hspace{0.3em}}c@{\hspace{0.3em}}}{70.1} &
\multicolumn{1}{@{\hspace{0.3em}}c@{\hspace{0.3em}}}{68.2} &
\multicolumn{1}{@{\hspace{0.3em}}c@{\hspace{0.3em}}}{68.5} &
\multicolumn{1}{@{\hspace{0.3em}}c@{\hspace{0.3em}}}{67.9} &
\multicolumn{1}{@{\hspace{0.3em}}c@{\hspace{0.3em}}}{60.3} &
\multicolumn{1}{@{\hspace{0.3em}}c@{\hspace{0.3em}}}{75.9} &
\multicolumn{1}{@{\hspace{0.3em}}c@{\hspace{0.3em}}}{56.3} &
\multicolumn{1}{@{\hspace{0.3em}}c@{\hspace{0.3em}}}{59.5} &
\multicolumn{1}{@{\hspace{0.3em}}c@{\hspace{0.3em}}}{69.4} &
\multicolumn{1}{@{\hspace{0.3em}}c@{\hspace{0.3em}}}{60.8} &
\multicolumn{1}{@{\hspace{0.3em}}c@{\hspace{0.3em}}}{65.0} \\

\multicolumn{1}{@{\hspace{0.3em}}c@{\hspace{0.3em}}}{Increase} &
\multicolumn{1}{@{\hspace{0.3em}}c@{\hspace{0.3em}}}{\small{+19.1}} &
\multicolumn{1}{@{\hspace{0.3em}}c@{\hspace{0.3em}}}{\small{+17.4}} &
\multicolumn{1}{@{\hspace{0.3em}}c@{\hspace{0.3em}}}{\small{+23.9}} &
\multicolumn{1}{@{\hspace{0.3em}}c@{\hspace{0.3em}}}{\small{-1.2}} &
\multicolumn{1}{@{\hspace{0.3em}}c@{\hspace{0.3em}}}{\small{+11.4}} &
\multicolumn{1}{@{\hspace{0.3em}}c@{\hspace{0.3em}}}{\small{+0.4}} &
\multicolumn{1}{@{\hspace{0.3em}}c@{\hspace{0.3em}}}{\small{-4.8}} &
\multicolumn{1}{@{\hspace{0.3em}}c@{\hspace{0.3em}}}{\small{+1.0}} &
\multicolumn{1}{@{\hspace{0.3em}}c@{\hspace{0.3em}}}{\small{+3.9}} &
\multicolumn{1}{@{\hspace{0.3em}}c@{\hspace{0.3em}}}{\small{+5.0}} &
\multicolumn{1}{@{\hspace{0.3em}}c@{\hspace{0.3em}}}{\small{-0.4}} &
\multicolumn{1}{@{\hspace{0.3em}}c@{\hspace{0.3em}}}{\small{-6.8}} &
\multicolumn{1}{@{\hspace{0.3em}}c@{\hspace{0.3em}}}{\small{-4.5}} &
\multicolumn{1}{@{\hspace{0.3em}}c@{\hspace{0.3em}}}{\small{+26.6}} &
\multicolumn{1}{@{\hspace{0.3em}}c@{\hspace{0.3em}}}{\small{+1.6}} &
\multicolumn{1}{@{\hspace{0.3em}}c@{\hspace{0.3em}}}{\small{+6.2}} \\
\hline

\multicolumn{1}{@{\hspace{0.3em}}c@{\hspace{0.3em}}}{CVT Source} &
\multicolumn{1}{@{\hspace{0.3em}}c@{\hspace{0.3em}}}{15.8} &
\multicolumn{1}{@{\hspace{0.3em}}c@{\hspace{0.3em}}}{18.7} &
\multicolumn{1}{@{\hspace{0.3em}}c@{\hspace{0.3em}}}{11.8} &
\multicolumn{1}{@{\hspace{0.3em}}c@{\hspace{0.3em}}}{51.2} &
\multicolumn{1}{@{\hspace{0.3em}}c@{\hspace{0.3em}}}{31.6} &
\multicolumn{1}{@{\hspace{0.3em}}c@{\hspace{0.3em}}}{50.9} &
\multicolumn{1}{@{\hspace{0.3em}}c@{\hspace{0.3em}}}{54.9} &
\multicolumn{1}{@{\hspace{0.3em}}c@{\hspace{0.3em}}}{47.9} &
\multicolumn{1}{@{\hspace{0.3em}}c@{\hspace{0.3em}}}{41.0} &
\multicolumn{1}{@{\hspace{0.3em}}c@{\hspace{0.3em}}}{38.7} &
\multicolumn{1}{@{\hspace{0.3em}}c@{\hspace{0.3em}}}{63.8} &
\multicolumn{1}{@{\hspace{0.3em}}c@{\hspace{0.3em}}}{43.0} &
\multicolumn{1}{@{\hspace{0.3em}}c@{\hspace{0.3em}}}{53.9} &
\multicolumn{1}{@{\hspace{0.3em}}c@{\hspace{0.3em}}}{42.6} &
\multicolumn{1}{@{\hspace{0.3em}}c@{\hspace{0.3em}}}{43.2} &
\multicolumn{1}{@{\hspace{0.3em}}c@{\hspace{0.3em}}}{40.6} \\

\multicolumn{1}{@{\hspace{0.3em}}c@{\hspace{0.3em}}}{CVT Quantile} &
\multicolumn{1}{@{\hspace{0.3em}}c@{\hspace{0.3em}}}{34.8} &
\multicolumn{1}{@{\hspace{0.3em}}c@{\hspace{0.3em}}}{40.1} &
\multicolumn{1}{@{\hspace{0.3em}}c@{\hspace{0.3em}}}{32.0} &
\multicolumn{1}{@{\hspace{0.3em}}c@{\hspace{0.3em}}}{56.6} &
\multicolumn{1}{@{\hspace{0.3em}}c@{\hspace{0.3em}}}{41.3} &
\multicolumn{1}{@{\hspace{0.3em}}c@{\hspace{0.3em}}}{54.4} &
\multicolumn{1}{@{\hspace{0.3em}}c@{\hspace{0.3em}}}{53.2} &
\multicolumn{1}{@{\hspace{0.3em}}c@{\hspace{0.3em}}}{52.9} &
\multicolumn{1}{@{\hspace{0.3em}}c@{\hspace{0.3em}}}{50.9} &
\multicolumn{1}{@{\hspace{0.3em}}c@{\hspace{0.3em}}}{35.0} &
\multicolumn{1}{@{\hspace{0.3em}}c@{\hspace{0.3em}}}{62.3} &
\multicolumn{1}{@{\hspace{0.3em}}c@{\hspace{0.3em}}}{28.1} &
\multicolumn{1}{@{\hspace{0.3em}}c@{\hspace{0.3em}}}{48.6} &
\multicolumn{1}{@{\hspace{0.3em}}c@{\hspace{0.3em}}}{59.0} &
\multicolumn{1}{@{\hspace{0.3em}}c@{\hspace{0.3em}}}{47.8} &
\multicolumn{1}{@{\hspace{0.3em}}c@{\hspace{0.3em}}}{46.5} \\

\multicolumn{1}{@{\hspace{0.3em}}c@{\hspace{0.3em}}}{Increase} &
\multicolumn{1}{@{\hspace{0.3em}}c@{\hspace{0.3em}}}{\small{+19.0}} &
\multicolumn{1}{@{\hspace{0.3em}}c@{\hspace{0.3em}}}{\small{+21.6}} &
\multicolumn{1}{@{\hspace{0.3em}}c@{\hspace{0.3em}}}{\small{+20.2}} &
\multicolumn{1}{@{\hspace{0.3em}}c@{\hspace{0.3em}}}{\small{+5.4}} &
\multicolumn{1}{@{\hspace{0.3em}}c@{\hspace{0.3em}}}{\small{+9.7}} &
\multicolumn{1}{@{\hspace{0.3em}}c@{\hspace{0.3em}}}{\small{+3.5}} &
\multicolumn{1}{@{\hspace{0.3em}}c@{\hspace{0.3em}}}{\small{-1.7}} &
\multicolumn{1}{@{\hspace{0.3em}}c@{\hspace{0.3em}}}{\small{+5.0}} &
\multicolumn{1}{@{\hspace{0.3em}}c@{\hspace{0.3em}}}{\small{+9.9}} &
\multicolumn{1}{@{\hspace{0.3em}}c@{\hspace{0.3em}}}{\small{-3.7}} &
\multicolumn{1}{@{\hspace{0.3em}}c@{\hspace{0.3em}}}{\small{-1.5}} &
\multicolumn{1}{@{\hspace{0.3em}}c@{\hspace{0.3em}}}{\small{-14.9}} &
\multicolumn{1}{@{\hspace{0.3em}}c@{\hspace{0.3em}}}{\small{-5.3}} &
\multicolumn{1}{@{\hspace{0.3em}}c@{\hspace{0.3em}}}{\small{+16.4}} &
\multicolumn{1}{@{\hspace{0.3em}}c@{\hspace{0.3em}}}{\small{+4.6}} &
\multicolumn{1}{@{\hspace{0.3em}}c@{\hspace{0.3em}}}{\small{+5.9}} \\
\hline

\multicolumn{1}{@{\hspace{0.3em}}c@{\hspace{0.3em}}}{ViT-Lite Source} &
\multicolumn{1}{@{\hspace{0.3em}}c@{\hspace{0.3em}}}{17.9} &
\multicolumn{1}{@{\hspace{0.3em}}c@{\hspace{0.3em}}}{19.8} &
\multicolumn{1}{@{\hspace{0.3em}}c@{\hspace{0.3em}}}{12.8} &
\multicolumn{1}{@{\hspace{0.3em}}c@{\hspace{0.3em}}}{48.8} &
\multicolumn{1}{@{\hspace{0.3em}}c@{\hspace{0.3em}}}{31.1} &
\multicolumn{1}{@{\hspace{0.3em}}c@{\hspace{0.3em}}}{47.7} &
\multicolumn{1}{@{\hspace{0.3em}}c@{\hspace{0.3em}}}{51.9} &
\multicolumn{1}{@{\hspace{0.3em}}c@{\hspace{0.3em}}}{45.3} &
\multicolumn{1}{@{\hspace{0.3em}}c@{\hspace{0.3em}}}{38.2} &
\multicolumn{1}{@{\hspace{0.3em}}c@{\hspace{0.3em}}}{35.9} &
\multicolumn{1}{@{\hspace{0.3em}}c@{\hspace{0.3em}}}{60.7} &
\multicolumn{1}{@{\hspace{0.3em}}c@{\hspace{0.3em}}}{40.0} &
\multicolumn{1}{@{\hspace{0.3em}}c@{\hspace{0.3em}}}{51.4} &
\multicolumn{1}{@{\hspace{0.3em}}c@{\hspace{0.3em}}}{43.9} &
\multicolumn{1}{@{\hspace{0.3em}}c@{\hspace{0.3em}}}{43.2} &
\multicolumn{1}{@{\hspace{0.3em}}c@{\hspace{0.3em}}}{39.2} \\

\multicolumn{1}{@{\hspace{0.3em}}c@{\hspace{0.3em}}}{ViT-Lite Quantile} &
\multicolumn{1}{@{\hspace{0.3em}}c@{\hspace{0.3em}}}{34.5} &
\multicolumn{1}{@{\hspace{0.3em}}c@{\hspace{0.3em}}}{36.1} &
\multicolumn{1}{@{\hspace{0.3em}}c@{\hspace{0.3em}}}{29.5} &
\multicolumn{1}{@{\hspace{0.3em}}c@{\hspace{0.3em}}}{53.9} &
\multicolumn{1}{@{\hspace{0.3em}}c@{\hspace{0.3em}}}{40.4} &
\multicolumn{1}{@{\hspace{0.3em}}c@{\hspace{0.3em}}}{51.4} &
\multicolumn{1}{@{\hspace{0.3em}}c@{\hspace{0.3em}}}{50.1} &
\multicolumn{1}{@{\hspace{0.3em}}c@{\hspace{0.3em}}}{49.1} &
\multicolumn{1}{@{\hspace{0.3em}}c@{\hspace{0.3em}}}{47.2} &
\multicolumn{1}{@{\hspace{0.3em}}c@{\hspace{0.3em}}}{29.9} &
\multicolumn{1}{@{\hspace{0.3em}}c@{\hspace{0.3em}}}{60.8} &
\multicolumn{1}{@{\hspace{0.3em}}c@{\hspace{0.3em}}}{22.8} &
\multicolumn{1}{@{\hspace{0.3em}}c@{\hspace{0.3em}}}{42.4} &
\multicolumn{1}{@{\hspace{0.3em}}c@{\hspace{0.3em}}}{54.3} &
\multicolumn{1}{@{\hspace{0.3em}}c@{\hspace{0.3em}}}{43.5} &
\multicolumn{1}{@{\hspace{0.3em}}c@{\hspace{0.3em}}}{43.1} \\

\multicolumn{1}{@{\hspace{0.3em}}c@{\hspace{0.3em}}}{Increase} &
\multicolumn{1}{@{\hspace{0.3em}}c@{\hspace{0.3em}}}{\small{+16.6}} &
\multicolumn{1}{@{\hspace{0.3em}}c@{\hspace{0.3em}}}{\small{+16.3}} &
\multicolumn{1}{@{\hspace{0.3em}}c@{\hspace{0.3em}}}{\small{+16.7}} &
\multicolumn{1}{@{\hspace{0.3em}}c@{\hspace{0.3em}}}{\small{+5.1}} &
\multicolumn{1}{@{\hspace{0.3em}}c@{\hspace{0.3em}}}{\small{+9.3}} &
\multicolumn{1}{@{\hspace{0.3em}}c@{\hspace{0.3em}}}{\small{+3.7}} &
\multicolumn{1}{@{\hspace{0.3em}}c@{\hspace{0.3em}}}{\small{-1.8}} &
\multicolumn{1}{@{\hspace{0.3em}}c@{\hspace{0.3em}}}{\small{+3.8}} &
\multicolumn{1}{@{\hspace{0.3em}}c@{\hspace{0.3em}}}{\small{+9.0}} &
\multicolumn{1}{@{\hspace{0.3em}}c@{\hspace{0.3em}}}{\small{-6.0}} &
\multicolumn{1}{@{\hspace{0.3em}}c@{\hspace{0.3em}}}{\small{+0.1}} &
\multicolumn{1}{@{\hspace{0.3em}}c@{\hspace{0.3em}}}{\small{-17.2}} &
\multicolumn{1}{@{\hspace{0.3em}}c@{\hspace{0.3em}}}{\small{-9.0}} &
\multicolumn{1}{@{\hspace{0.3em}}c@{\hspace{0.3em}}}{\small{+10.4}} &
\multicolumn{1}{@{\hspace{0.3em}}c@{\hspace{0.3em}}}{\small{+0.3}} &
\multicolumn{1}{@{\hspace{0.3em}}c@{\hspace{0.3em}}}{\small{+3.9}} \\
\hline
\end{tabular}
}
\end{center}
\end{table}

\begin{table}[t]
\begin{center}
\caption{TINYIMAGENETC - Comparison of Accuracies across noise-types at severity level $5$.}
\resizebox{\textwidth}{!}{%
\begin{tabular}{p|c|c|c|c|c|c|c|c|c|c|c|c|c|c|c|c} 
\hline
\multicolumn{1}{@{\hspace{0.3em}}c@{\hspace{0.3em}}}{} &
\multicolumn{1}{@{\hspace{0.3em}}c@{\hspace{0.3em}}}{\rot{Gaussian}} &
\multicolumn{1}{@{\hspace{0.3em}}c@{\hspace{0.3em}}}{\rot{Shot}} &
\multicolumn{1}{@{\hspace{0.3em}}c@{\hspace{0.3em}}}{\rot{Impulse}} &
\multicolumn{1}{@{\hspace{0.3em}}c@{\hspace{0.3em}}}{\rot{Defocus}} &
\multicolumn{1}{@{\hspace{0.3em}}c@{\hspace{0.3em}}}{\rot{Glass}} &
\multicolumn{1}{@{\hspace{0.3em}}c@{\hspace{0.3em}}}{\rot{Motion}} &
\multicolumn{1}{@{\hspace{0.3em}}c@{\hspace{0.3em}}}{\rot{Zoom}} &
\multicolumn{1}{@{\hspace{0.3em}}c@{\hspace{0.3em}}}{\rot{Snow}} &
\multicolumn{1}{@{\hspace{0.3em}}c@{\hspace{0.3em}}}{\rot{Frost}} &
\multicolumn{1}{@{\hspace{0.3em}}c@{\hspace{0.3em}}}{\rot{Fog}} &
\multicolumn{1}{@{\hspace{0.3em}}c@{\hspace{0.3em}}}{\rot{Brightness}} &
\multicolumn{1}{@{\hspace{0.3em}}c@{\hspace{0.3em}}}{\rot{Contrast}} &
\multicolumn{1}{@{\hspace{0.3em}}c@{\hspace{0.3em}}}{\rot{Elastic}} &
\multicolumn{1}{@{\hspace{0.3em}}c@{\hspace{0.3em}}}{\rot{Pixelate}} &
\multicolumn{1}{@{\hspace{0.3em}}c@{\hspace{0.3em}}}{\rot{Jpeg}} &
\multicolumn{1}{@{\hspace{0.3em}}c@{\hspace{0.3em}}}{Mean} \\

\hline
\multicolumn{1}{@{\hspace{0.3em}}c@{\hspace{0.3em}}}{ResNet18 Source} &
\multicolumn{1}{@{\hspace{0.3em}}c@{\hspace{0.3em}}}{8.3} &
\multicolumn{1}{@{\hspace{0.3em}}c@{\hspace{0.3em}}}{11.0} &
\multicolumn{1}{@{\hspace{0.3em}}c@{\hspace{0.3em}}}{7.0} &
\multicolumn{1}{@{\hspace{0.3em}}c@{\hspace{0.3em}}}{13.7} &
\multicolumn{1}{@{\hspace{0.3em}}c@{\hspace{0.3em}}}{10.1} &
\multicolumn{1}{@{\hspace{0.3em}}c@{\hspace{0.3em}}}{24.5} &
\multicolumn{1}{@{\hspace{0.3em}}c@{\hspace{0.3em}}}{25.1} &
\multicolumn{1}{@{\hspace{0.3em}}c@{\hspace{0.3em}}}{25.0} &
\multicolumn{1}{@{\hspace{0.3em}}c@{\hspace{0.3em}}}{30.5} &
\multicolumn{1}{@{\hspace{0.3em}}c@{\hspace{0.3em}}}{13.3} &
\multicolumn{1}{@{\hspace{0.3em}}c@{\hspace{0.3em}}}{33.0} &
\multicolumn{1}{@{\hspace{0.3em}}c@{\hspace{0.3em}}}{9.1} &
\multicolumn{1}{@{\hspace{0.3em}}c@{\hspace{0.3em}}}{30.7} &
\multicolumn{1}{@{\hspace{0.3em}}c@{\hspace{0.3em}}}{28.2} &
\multicolumn{1}{@{\hspace{0.3em}}c@{\hspace{0.3em}}}{42.7} &
\multicolumn{1}{@{\hspace{0.3em}}c@{\hspace{0.3em}}}{20.8} \\

\multicolumn{1}{@{\hspace{0.3em}}c@{\hspace{0.3em}}}{ResNet18 Quantile} &
\multicolumn{1}{@{\hspace{0.3em}}c@{\hspace{0.3em}}}{15.6} &
\multicolumn{1}{@{\hspace{0.3em}}c@{\hspace{0.3em}}}{18.1} &
\multicolumn{1}{@{\hspace{0.3em}}c@{\hspace{0.3em}}}{7.9} &
\multicolumn{1}{@{\hspace{0.3em}}c@{\hspace{0.3em}}}{22.1} &
\multicolumn{1}{@{\hspace{0.3em}}c@{\hspace{0.3em}}}{10.1} &
\multicolumn{1}{@{\hspace{0.3em}}c@{\hspace{0.3em}}}{32.4} &
\multicolumn{1}{@{\hspace{0.3em}}c@{\hspace{0.3em}}}{33.0} &
\multicolumn{1}{@{\hspace{0.3em}}c@{\hspace{0.3em}}}{33.6} &
\multicolumn{1}{@{\hspace{0.3em}}c@{\hspace{0.3em}}}{29.5} &
\multicolumn{1}{@{\hspace{0.3em}}c@{\hspace{0.3em}}}{20.7} &
\multicolumn{1}{@{\hspace{0.3em}}c@{\hspace{0.3em}}}{38.2} &
\multicolumn{1}{@{\hspace{0.3em}}c@{\hspace{0.3em}}}{3.6} &
\multicolumn{1}{@{\hspace{0.3em}}c@{\hspace{0.3em}}}{31.9} &
\multicolumn{1}{@{\hspace{0.3em}}c@{\hspace{0.3em}}}{43.5} &
\multicolumn{1}{@{\hspace{0.3em}}c@{\hspace{0.3em}}}{40.5} &
\multicolumn{1}{@{\hspace{0.3em}}c@{\hspace{0.3em}}}{25.4} \\

\multicolumn{1}{@{\hspace{0.3em}}c@{\hspace{0.3em}}}{Increase} &
\multicolumn{1}{@{\hspace{0.3em}}c@{\hspace{0.3em}}}{\small{+7.3}} &
\multicolumn{1}{@{\hspace{0.3em}}c@{\hspace{0.3em}}}{\small{+7.1}} &
\multicolumn{1}{@{\hspace{0.3em}}c@{\hspace{0.3em}}}{\small{+0.9}} &
\multicolumn{1}{@{\hspace{0.3em}}c@{\hspace{0.3em}}}{\small{+8.4}} &
\multicolumn{1}{@{\hspace{0.3em}}c@{\hspace{0.3em}}}{\small{+0.0}} &
\multicolumn{1}{@{\hspace{0.3em}}c@{\hspace{0.3em}}}{\small{+7.9}} &
\multicolumn{1}{@{\hspace{0.3em}}c@{\hspace{0.3em}}}{\small{+7.9}} &
\multicolumn{1}{@{\hspace{0.3em}}c@{\hspace{0.3em}}}{\small{+8.6}} &
\multicolumn{1}{@{\hspace{0.3em}}c@{\hspace{0.3em}}}{\small{-1.0}} &
\multicolumn{1}{@{\hspace{0.3em}}c@{\hspace{0.3em}}}{\small{+7.4}} &
\multicolumn{1}{@{\hspace{0.3em}}c@{\hspace{0.3em}}}{\small{+5.2}} &
\multicolumn{1}{@{\hspace{0.3em}}c@{\hspace{0.3em}}}{\small{-5.5}} &
\multicolumn{1}{@{\hspace{0.3em}}c@{\hspace{0.3em}}}{\small{+1.2}} &
\multicolumn{1}{@{\hspace{0.3em}}c@{\hspace{0.3em}}}{\small{+15.3}} &
\multicolumn{1}{@{\hspace{0.3em}}c@{\hspace{0.3em}}}{\small{-2.2}} &
\multicolumn{1}{@{\hspace{0.3em}}c@{\hspace{0.3em}}}{\small{+4.6}} \\
\hline

\multicolumn{1}{@{\hspace{0.3em}}c@{\hspace{0.3em}}}{CCT Source} &
\multicolumn{1}{@{\hspace{0.3em}}c@{\hspace{0.3em}}}{3.9} &
\multicolumn{1}{@{\hspace{0.3em}}c@{\hspace{0.3em}}}{4.6} &
\multicolumn{1}{@{\hspace{0.3em}}c@{\hspace{0.3em}}}{2.9} &
\multicolumn{1}{@{\hspace{0.3em}}c@{\hspace{0.3em}}}{12.5} &
\multicolumn{1}{@{\hspace{0.3em}}c@{\hspace{0.3em}}}{9.5} &
\multicolumn{1}{@{\hspace{0.3em}}c@{\hspace{0.3em}}}{18.8} &
\multicolumn{1}{@{\hspace{0.3em}}c@{\hspace{0.3em}}}{19.6} &
\multicolumn{1}{@{\hspace{0.3em}}c@{\hspace{0.3em}}}{14.6} &
\multicolumn{1}{@{\hspace{0.3em}}c@{\hspace{0.3em}}}{17.5} &
\multicolumn{1}{@{\hspace{0.3em}}c@{\hspace{0.3em}}}{8.1} &
\multicolumn{1}{@{\hspace{0.3em}}c@{\hspace{0.3em}}}{15.1} &
\multicolumn{1}{@{\hspace{0.3em}}c@{\hspace{0.3em}}}{2.5} &
\multicolumn{1}{@{\hspace{0.3em}}c@{\hspace{0.3em}}}{23.7} &
\multicolumn{1}{@{\hspace{0.3em}}c@{\hspace{0.3em}}}{8.2} &
\multicolumn{1}{@{\hspace{0.3em}}c@{\hspace{0.3em}}}{31.1} &
\multicolumn{1}{@{\hspace{0.3em}}c@{\hspace{0.3em}}}{12.8} \\

\multicolumn{1}{@{\hspace{0.3em}}c@{\hspace{0.3em}}}{CCT Quantile} &
\multicolumn{1}{@{\hspace{0.3em}}c@{\hspace{0.3em}}}{8.1} &
\multicolumn{1}{@{\hspace{0.3em}}c@{\hspace{0.3em}}}{9.7} &
\multicolumn{1}{@{\hspace{0.3em}}c@{\hspace{0.3em}}}{6.9} &
\multicolumn{1}{@{\hspace{0.3em}}c@{\hspace{0.3em}}}{15.1} &
\multicolumn{1}{@{\hspace{0.3em}}c@{\hspace{0.3em}}}{9.7} &
\multicolumn{1}{@{\hspace{0.3em}}c@{\hspace{0.3em}}}{20.0} &
\multicolumn{1}{@{\hspace{0.3em}}c@{\hspace{0.3em}}}{21.5} &
\multicolumn{1}{@{\hspace{0.3em}}c@{\hspace{0.3em}}}{20.2} &
\multicolumn{1}{@{\hspace{0.3em}}c@{\hspace{0.3em}}}{15.5} &
\multicolumn{1}{@{\hspace{0.3em}}c@{\hspace{0.3em}}}{10.4} &
\multicolumn{1}{@{\hspace{0.3em}}c@{\hspace{0.3em}}}{20.6} &
\multicolumn{1}{@{\hspace{0.3em}}c@{\hspace{0.3em}}}{2.0} &
\multicolumn{1}{@{\hspace{0.3em}}c@{\hspace{0.3em}}}{24.3} &
\multicolumn{1}{@{\hspace{0.3em}}c@{\hspace{0.3em}}}{27.4} &
\multicolumn{1}{@{\hspace{0.3em}}c@{\hspace{0.3em}}}{27.8} &
\multicolumn{1}{@{\hspace{0.3em}}c@{\hspace{0.3em}}}{16.0} \\

\multicolumn{1}{@{\hspace{0.3em}}c@{\hspace{0.3em}}}{Increase} &
\multicolumn{1}{@{\hspace{0.3em}}c@{\hspace{0.3em}}}{\small{+4.2}} &
\multicolumn{1}{@{\hspace{0.3em}}c@{\hspace{0.3em}}}{\small{+5.1}} &
\multicolumn{1}{@{\hspace{0.3em}}c@{\hspace{0.3em}}}{\small{+4.0}} &
\multicolumn{1}{@{\hspace{0.3em}}c@{\hspace{0.3em}}}{\small{+2.6}} &
\multicolumn{1}{@{\hspace{0.3em}}c@{\hspace{0.3em}}}{\small{+0.2}} &
\multicolumn{1}{@{\hspace{0.3em}}c@{\hspace{0.3em}}}{\small{+1.2}} &
\multicolumn{1}{@{\hspace{0.3em}}c@{\hspace{0.3em}}}{\small{+1.9}} &
\multicolumn{1}{@{\hspace{0.3em}}c@{\hspace{0.3em}}}{\small{+5.6}} &
\multicolumn{1}{@{\hspace{0.3em}}c@{\hspace{0.3em}}}{\small{-2.0}} &
\multicolumn{1}{@{\hspace{0.3em}}c@{\hspace{0.3em}}}{\small{+2.3}} &
\multicolumn{1}{@{\hspace{0.3em}}c@{\hspace{0.3em}}}{\small{+5.5}} &
\multicolumn{1}{@{\hspace{0.3em}}c@{\hspace{0.3em}}}{\small{-0.5}} &
\multicolumn{1}{@{\hspace{0.3em}}c@{\hspace{0.3em}}}{\small{+0.6}} &
\multicolumn{1}{@{\hspace{0.3em}}c@{\hspace{0.3em}}}{\small{+19.2}} &
\multicolumn{1}{@{\hspace{0.3em}}c@{\hspace{0.3em}}}{\small{-3.3}} &
\multicolumn{1}{@{\hspace{0.3em}}c@{\hspace{0.3em}}}{\small{+3.2}} \\
\hline

\multicolumn{1}{@{\hspace{0.3em}}c@{\hspace{0.3em}}}{CVT Source} &
\multicolumn{1}{@{\hspace{0.3em}}c@{\hspace{0.3em}}}{3.6} &
\multicolumn{1}{@{\hspace{0.3em}}c@{\hspace{0.3em}}}{4.9} &
\multicolumn{1}{@{\hspace{0.3em}}c@{\hspace{0.3em}}}{3.0} &
\multicolumn{1}{@{\hspace{0.3em}}c@{\hspace{0.3em}}}{13.5} &
\multicolumn{1}{@{\hspace{0.3em}}c@{\hspace{0.3em}}}{12.6} &
\multicolumn{1}{@{\hspace{0.3em}}c@{\hspace{0.3em}}}{20.3} &
\multicolumn{1}{@{\hspace{0.3em}}c@{\hspace{0.3em}}}{19.8} &
\multicolumn{1}{@{\hspace{0.3em}}c@{\hspace{0.3em}}}{12.3} &
\multicolumn{1}{@{\hspace{0.3em}}c@{\hspace{0.3em}}}{17.6} &
\multicolumn{1}{@{\hspace{0.3em}}c@{\hspace{0.3em}}}{6.1} &
\multicolumn{1}{@{\hspace{0.3em}}c@{\hspace{0.3em}}}{14.2} &
\multicolumn{1}{@{\hspace{0.3em}}c@{\hspace{0.3em}}}{3.1} &
\multicolumn{1}{@{\hspace{0.3em}}c@{\hspace{0.3em}}}{24.5} &
\multicolumn{1}{@{\hspace{0.3em}}c@{\hspace{0.3em}}}{6.3} &
\multicolumn{1}{@{\hspace{0.3em}}c@{\hspace{0.3em}}}{25.9} &
\multicolumn{1}{@{\hspace{0.3em}}c@{\hspace{0.3em}}}{12.5} \\

\multicolumn{1}{@{\hspace{0.3em}}c@{\hspace{0.3em}}}{CVT Quantile} &
\multicolumn{1}{@{\hspace{0.3em}}c@{\hspace{0.3em}}}{8.0} &
\multicolumn{1}{@{\hspace{0.3em}}c@{\hspace{0.3em}}}{10.1} &
\multicolumn{1}{@{\hspace{0.3em}}c@{\hspace{0.3em}}}{7.2} &
\multicolumn{1}{@{\hspace{0.3em}}c@{\hspace{0.3em}}}{15.9} &
\multicolumn{1}{@{\hspace{0.3em}}c@{\hspace{0.3em}}}{11.4} &
\multicolumn{1}{@{\hspace{0.3em}}c@{\hspace{0.3em}}}{23.6} &
\multicolumn{1}{@{\hspace{0.3em}}c@{\hspace{0.3em}}}{22.6} &
\multicolumn{1}{@{\hspace{0.3em}}c@{\hspace{0.3em}}}{19.7} &
\multicolumn{1}{@{\hspace{0.3em}}c@{\hspace{0.3em}}}{16.8} &
\multicolumn{1}{@{\hspace{0.3em}}c@{\hspace{0.3em}}}{8.0} &
\multicolumn{1}{@{\hspace{0.3em}}c@{\hspace{0.3em}}}{20.1} &
\multicolumn{1}{@{\hspace{0.3em}}c@{\hspace{0.3em}}}{1.9} &
\multicolumn{1}{@{\hspace{0.3em}}c@{\hspace{0.3em}}}{24.1} &
\multicolumn{1}{@{\hspace{0.3em}}c@{\hspace{0.3em}}}{26.3} &
\multicolumn{1}{@{\hspace{0.3em}}c@{\hspace{0.3em}}}{25.6} &
\multicolumn{1}{@{\hspace{0.3em}}c@{\hspace{0.3em}}}{16.1} \\

\multicolumn{1}{@{\hspace{0.3em}}c@{\hspace{0.3em}}}{Increase} &
\multicolumn{1}{@{\hspace{0.3em}}c@{\hspace{0.3em}}}{\small{+4.4}} &
\multicolumn{1}{@{\hspace{0.3em}}c@{\hspace{0.3em}}}{\small{+5.2}} &
\multicolumn{1}{@{\hspace{0.3em}}c@{\hspace{0.3em}}}{\small{+4.2}} &
\multicolumn{1}{@{\hspace{0.3em}}c@{\hspace{0.3em}}}{\small{+2.4}} &
\multicolumn{1}{@{\hspace{0.3em}}c@{\hspace{0.3em}}}{\small{-1.2}} &
\multicolumn{1}{@{\hspace{0.3em}}c@{\hspace{0.3em}}}{\small{+3.3}} &
\multicolumn{1}{@{\hspace{0.3em}}c@{\hspace{0.3em}}}{\small{+2.8}} &
\multicolumn{1}{@{\hspace{0.3em}}c@{\hspace{0.3em}}}{\small{+7.4}} &
\multicolumn{1}{@{\hspace{0.3em}}c@{\hspace{0.3em}}}{\small{-0.8}} &
\multicolumn{1}{@{\hspace{0.3em}}c@{\hspace{0.3em}}}{\small{+1.9}} &
\multicolumn{1}{@{\hspace{0.3em}}c@{\hspace{0.3em}}}{\small{+5.9}} &
\multicolumn{1}{@{\hspace{0.3em}}c@{\hspace{0.3em}}}{\small{-1.2}} &
\multicolumn{1}{@{\hspace{0.3em}}c@{\hspace{0.3em}}}{\small{-0.4}} &
\multicolumn{1}{@{\hspace{0.3em}}c@{\hspace{0.3em}}}{\small{+20.0}} &
\multicolumn{1}{@{\hspace{0.3em}}c@{\hspace{0.3em}}}{\small{-0.3}} &
\multicolumn{1}{@{\hspace{0.3em}}c@{\hspace{0.3em}}}{\small{+3.6}} \\
\hline

\multicolumn{1}{@{\hspace{0.3em}}c@{\hspace{0.3em}}}{ViT-Lite Source} &
\multicolumn{1}{@{\hspace{0.3em}}c@{\hspace{0.3em}}}{4.1} &
\multicolumn{1}{@{\hspace{0.3em}}c@{\hspace{0.3em}}}{5.5} &
\multicolumn{1}{@{\hspace{0.3em}}c@{\hspace{0.3em}}}{3.0} &
\multicolumn{1}{@{\hspace{0.3em}}c@{\hspace{0.3em}}}{13.6} &
\multicolumn{1}{@{\hspace{0.3em}}c@{\hspace{0.3em}}}{12.2} &
\multicolumn{1}{@{\hspace{0.3em}}c@{\hspace{0.3em}}}{19.4} &
\multicolumn{1}{@{\hspace{0.3em}}c@{\hspace{0.3em}}}{19.9} &
\multicolumn{1}{@{\hspace{0.3em}}c@{\hspace{0.3em}}}{11.9} &
\multicolumn{1}{@{\hspace{0.3em}}c@{\hspace{0.3em}}}{16.2} &
\multicolumn{1}{@{\hspace{0.3em}}c@{\hspace{0.3em}}}{6.3} &
\multicolumn{1}{@{\hspace{0.3em}}c@{\hspace{0.3em}}}{12.8} &
\multicolumn{1}{@{\hspace{0.3em}}c@{\hspace{0.3em}}}{2.8} &
\multicolumn{1}{@{\hspace{0.3em}}c@{\hspace{0.3em}}}{23.7} &
\multicolumn{1}{@{\hspace{0.3em}}c@{\hspace{0.3em}}}{6.6} &
\multicolumn{1}{@{\hspace{0.3em}}c@{\hspace{0.3em}}}{26.0} &
\multicolumn{1}{@{\hspace{0.3em}}c@{\hspace{0.3em}}}{12.3} \\

\multicolumn{1}{@{\hspace{0.3em}}c@{\hspace{0.3em}}}{ViT-Lite Quantile} &
\multicolumn{1}{@{\hspace{0.3em}}c@{\hspace{0.3em}}}{10.0} &
\multicolumn{1}{@{\hspace{0.3em}}c@{\hspace{0.3em}}}{13.0} &
\multicolumn{1}{@{\hspace{0.3em}}c@{\hspace{0.3em}}}{9.6} &
\multicolumn{1}{@{\hspace{0.3em}}c@{\hspace{0.3em}}}{13.8} &
\multicolumn{1}{@{\hspace{0.3em}}c@{\hspace{0.3em}}}{10.3} &
\multicolumn{1}{@{\hspace{0.3em}}c@{\hspace{0.3em}}}{21.0} &
\multicolumn{1}{@{\hspace{0.3em}}c@{\hspace{0.3em}}}{22.2} &
\multicolumn{1}{@{\hspace{0.3em}}c@{\hspace{0.3em}}}{17.4} &
\multicolumn{1}{@{\hspace{0.3em}}c@{\hspace{0.3em}}}{14.9} &
\multicolumn{1}{@{\hspace{0.3em}}c@{\hspace{0.3em}}}{7.2} &
\multicolumn{1}{@{\hspace{0.3em}}c@{\hspace{0.3em}}}{21.2} &
\multicolumn{1}{@{\hspace{0.3em}}c@{\hspace{0.3em}}}{1.9} &
\multicolumn{1}{@{\hspace{0.3em}}c@{\hspace{0.3em}}}{24.3} &
\multicolumn{1}{@{\hspace{0.3em}}c@{\hspace{0.3em}}}{25.0} &
\multicolumn{1}{@{\hspace{0.3em}}c@{\hspace{0.3em}}}{24.7} &
\multicolumn{1}{@{\hspace{0.3em}}c@{\hspace{0.3em}}}{15.8} \\

\multicolumn{1}{@{\hspace{0.3em}}c@{\hspace{0.3em}}}{Increase} &
\multicolumn{1}{@{\hspace{0.3em}}c@{\hspace{0.3em}}}{\small{+5.9}} &
\multicolumn{1}{@{\hspace{0.3em}}c@{\hspace{0.3em}}}{\small{+7.5}} &
\multicolumn{1}{@{\hspace{0.3em}}c@{\hspace{0.3em}}}{\small{+6.6}} &
\multicolumn{1}{@{\hspace{0.3em}}c@{\hspace{0.3em}}}{\small{+0.2}} &
\multicolumn{1}{@{\hspace{0.3em}}c@{\hspace{0.3em}}}{\small{-1.9}} &
\multicolumn{1}{@{\hspace{0.3em}}c@{\hspace{0.3em}}}{\small{+1.6}} &
\multicolumn{1}{@{\hspace{0.3em}}c@{\hspace{0.3em}}}{\small{+2.3}} &
\multicolumn{1}{@{\hspace{0.3em}}c@{\hspace{0.3em}}}{\small{+5.5}} &
\multicolumn{1}{@{\hspace{0.3em}}c@{\hspace{0.3em}}}{\small{-1.3}} &
\multicolumn{1}{@{\hspace{0.3em}}c@{\hspace{0.3em}}}{\small{+0.9}} &
\multicolumn{1}{@{\hspace{0.3em}}c@{\hspace{0.3em}}}{\small{+8.4}} &
\multicolumn{1}{@{\hspace{0.3em}}c@{\hspace{0.3em}}}{\small{-0.9}} &
\multicolumn{1}{@{\hspace{0.3em}}c@{\hspace{0.3em}}}{\small{+0.6}} &
\multicolumn{1}{@{\hspace{0.3em}}c@{\hspace{0.3em}}}{\small{+18.4}} &
\multicolumn{1}{@{\hspace{0.3em}}c@{\hspace{0.3em}}}{\small{-1.3}} &
\multicolumn{1}{@{\hspace{0.3em}}c@{\hspace{0.3em}}}{\small{+3.5}} \\
\hline
\end{tabular}
}
\end{center}
\end{table}

\begin{table}[t]
\begin{center}
\caption{CIFAR100C - Comparison of CCT-Quantile Accuracies at severity level $5$ across varying batch sizes and number of quantiles.}
\resizebox{\textwidth}{!}{%
\begin{tabular}{p|c|c|c|c|c|c|c|c|c|c|c|c|c|c|c|c} 
\hline
\multicolumn{1}{@{\hspace{0.3em}}c@{\hspace{0.3em}}}{} &
\multicolumn{1}{@{\hspace{0.3em}}c@{\hspace{0.3em}}}{\rot{Gaussian}} &
\multicolumn{1}{@{\hspace{0.3em}}c@{\hspace{0.3em}}}{\rot{Shot}} &
\multicolumn{1}{@{\hspace{0.3em}}c@{\hspace{0.3em}}}{\rot{Impulse}} &
\multicolumn{1}{@{\hspace{0.3em}}c@{\hspace{0.3em}}}{\rot{Defocus}} &
\multicolumn{1}{@{\hspace{0.3em}}c@{\hspace{0.3em}}}{\rot{Glass}} &
\multicolumn{1}{@{\hspace{0.3em}}c@{\hspace{0.3em}}}{\rot{Motion}} &
\multicolumn{1}{@{\hspace{0.3em}}c@{\hspace{0.3em}}}{\rot{Zoom}} &
\multicolumn{1}{@{\hspace{0.3em}}c@{\hspace{0.3em}}}{\rot{Snow}} &
\multicolumn{1}{@{\hspace{0.3em}}c@{\hspace{0.3em}}}{\rot{Frost}} &
\multicolumn{1}{@{\hspace{0.3em}}c@{\hspace{0.3em}}}{\rot{Fog}} &
\multicolumn{1}{@{\hspace{0.3em}}c@{\hspace{0.3em}}}{\rot{Brightness}} &
\multicolumn{1}{@{\hspace{0.3em}}c@{\hspace{0.3em}}}{\rot{Contrast}} &
\multicolumn{1}{@{\hspace{0.3em}}c@{\hspace{0.3em}}}{\rot{Elastic}} &
\multicolumn{1}{@{\hspace{0.3em}}c@{\hspace{0.3em}}}{\rot{Pixelate}} &
\multicolumn{1}{@{\hspace{0.3em}}c@{\hspace{0.3em}}}{\rot{Jpeg}} &
\multicolumn{1}{@{\hspace{0.3em}}c@{\hspace{0.3em}}}{Mean} \\

\hline

\multicolumn{1}{@{\hspace{0.3em}}c@{\hspace{0.3em}}}{B=128, Q=1000} &
\multicolumn{1}{@{\hspace{0.3em}}c@{\hspace{0.3em}}}{62.3} &
\multicolumn{1}{@{\hspace{0.3em}}c@{\hspace{0.3em}}}{63.0} &
\multicolumn{1}{@{\hspace{0.3em}}c@{\hspace{0.3em}}}{67.8} &
\multicolumn{1}{@{\hspace{0.3em}}c@{\hspace{0.3em}}}{73.8} &
\multicolumn{1}{@{\hspace{0.3em}}c@{\hspace{0.3em}}}{56.9} &
\multicolumn{1}{@{\hspace{0.3em}}c@{\hspace{0.3em}}}{72.0} &
\multicolumn{1}{@{\hspace{0.3em}}c@{\hspace{0.3em}}}{69.9} &
\multicolumn{1}{@{\hspace{0.3em}}c@{\hspace{0.3em}}}{70.1} &
\multicolumn{1}{@{\hspace{0.3em}}c@{\hspace{0.3em}}}{68.7} &
\multicolumn{1}{@{\hspace{0.3em}}c@{\hspace{0.3em}}}{61.2} &
\multicolumn{1}{@{\hspace{0.3em}}c@{\hspace{0.3em}}}{76.3} &
\multicolumn{1}{@{\hspace{0.3em}}c@{\hspace{0.3em}}}{51.6} &
\multicolumn{1}{@{\hspace{0.3em}}c@{\hspace{0.3em}}}{60.0} &
\multicolumn{1}{@{\hspace{0.3em}}c@{\hspace{0.3em}}}{68.6} &
\multicolumn{1}{@{\hspace{0.3em}}c@{\hspace{0.3em}}}{60.1} &
\multicolumn{1}{@{\hspace{0.3em}}c@{\hspace{0.3em}}}{65.5} \\

\multicolumn{1}{@{\hspace{0.3em}}c@{\hspace{0.3em}}}{B=256, Q=1000} &
\multicolumn{1}{@{\hspace{0.3em}}c@{\hspace{0.3em}}}{62.2} &
\multicolumn{1}{@{\hspace{0.3em}}c@{\hspace{0.3em}}}{63.6} &
\multicolumn{1}{@{\hspace{0.3em}}c@{\hspace{0.3em}}}{67.9} &
\multicolumn{1}{@{\hspace{0.3em}}c@{\hspace{0.3em}}}{72.1} &
\multicolumn{1}{@{\hspace{0.3em}}c@{\hspace{0.3em}}}{56.6} &
\multicolumn{1}{@{\hspace{0.3em}}c@{\hspace{0.3em}}}{70.2} &
\multicolumn{1}{@{\hspace{0.3em}}c@{\hspace{0.3em}}}{67.8} &
\multicolumn{1}{@{\hspace{0.3em}}c@{\hspace{0.3em}}}{68.8} &
\multicolumn{1}{@{\hspace{0.3em}}c@{\hspace{0.3em}}}{69.2} &
\multicolumn{1}{@{\hspace{0.3em}}c@{\hspace{0.3em}}}{60.3} &
\multicolumn{1}{@{\hspace{0.3em}}c@{\hspace{0.3em}}}{77.1} &
\multicolumn{1}{@{\hspace{0.3em}}c@{\hspace{0.3em}}}{57.7} &
\multicolumn{1}{@{\hspace{0.3em}}c@{\hspace{0.3em}}}{61.1} &
\multicolumn{1}{@{\hspace{0.3em}}c@{\hspace{0.3em}}}{70.1} &
\multicolumn{1}{@{\hspace{0.3em}}c@{\hspace{0.3em}}}{62.1} &
\multicolumn{1}{@{\hspace{0.3em}}c@{\hspace{0.3em}}}{65.8} \\

\multicolumn{1}{@{\hspace{0.3em}}c@{\hspace{0.3em}}}{B=128, Q=2000} &
\multicolumn{1}{@{\hspace{0.3em}}c@{\hspace{0.3em}}}{63.8} &
\multicolumn{1}{@{\hspace{0.3em}}c@{\hspace{0.3em}}}{64.3} &
\multicolumn{1}{@{\hspace{0.3em}}c@{\hspace{0.3em}}}{68.0} &
\multicolumn{1}{@{\hspace{0.3em}}c@{\hspace{0.3em}}}{74.4} &
\multicolumn{1}{@{\hspace{0.3em}}c@{\hspace{0.3em}}}{59.1} &
\multicolumn{1}{@{\hspace{0.3em}}c@{\hspace{0.3em}}}{72.8} &
\multicolumn{1}{@{\hspace{0.3em}}c@{\hspace{0.3em}}}{71.4} &
\multicolumn{1}{@{\hspace{0.3em}}c@{\hspace{0.3em}}}{71.0} &
\multicolumn{1}{@{\hspace{0.3em}}c@{\hspace{0.3em}}}{70.0} &
\multicolumn{1}{@{\hspace{0.3em}}c@{\hspace{0.3em}}}{62.5} &
\multicolumn{1}{@{\hspace{0.3em}}c@{\hspace{0.3em}}}{76.4} &
\multicolumn{1}{@{\hspace{0.3em}}c@{\hspace{0.3em}}}{54.4} &
\multicolumn{1}{@{\hspace{0.3em}}c@{\hspace{0.3em}}}{62.1} &
\multicolumn{1}{@{\hspace{0.3em}}c@{\hspace{0.3em}}}{70.4} &
\multicolumn{1}{@{\hspace{0.3em}}c@{\hspace{0.3em}}}{61.3} &
\multicolumn{1}{@{\hspace{0.3em}}c@{\hspace{0.3em}}}{66.8} \\

\multicolumn{1}{@{\hspace{0.3em}}c@{\hspace{0.3em}}}{B=256, Q=2000} &
\multicolumn{1}{@{\hspace{0.3em}}c@{\hspace{0.3em}}}{63.3} &
\multicolumn{1}{@{\hspace{0.3em}}c@{\hspace{0.3em}}}{64.3} &
\multicolumn{1}{@{\hspace{0.3em}}c@{\hspace{0.3em}}}{68.1} &
\multicolumn{1}{@{\hspace{0.3em}}c@{\hspace{0.3em}}}{71.5} &
\multicolumn{1}{@{\hspace{0.3em}}c@{\hspace{0.3em}}}{55.2} &
\multicolumn{1}{@{\hspace{0.3em}}c@{\hspace{0.3em}}}{70.9} &
\multicolumn{1}{@{\hspace{0.3em}}c@{\hspace{0.3em}}}{67.9} &
\multicolumn{1}{@{\hspace{0.3em}}c@{\hspace{0.3em}}}{67.8} &
\multicolumn{1}{@{\hspace{0.3em}}c@{\hspace{0.3em}}}{67.0} &
\multicolumn{1}{@{\hspace{0.3em}}c@{\hspace{0.3em}}}{60.8} &
\multicolumn{1}{@{\hspace{0.3em}}c@{\hspace{0.3em}}}{75.5} &
\multicolumn{1}{@{\hspace{0.3em}}c@{\hspace{0.3em}}}{57.8} &
\multicolumn{1}{@{\hspace{0.3em}}c@{\hspace{0.3em}}}{62.1} &
\multicolumn{1}{@{\hspace{0.3em}}c@{\hspace{0.3em}}}{70.0} &
\multicolumn{1}{@{\hspace{0.3em}}c@{\hspace{0.3em}}}{62.3} &
\multicolumn{1}{@{\hspace{0.3em}}c@{\hspace{0.3em}}}{65.6} \\

\hline

\multicolumn{1}{@{\hspace{0.3em}}c@{\hspace{0.3em}}}{B=512, Q=1000} &
\multicolumn{1}{@{\hspace{0.3em}}c@{\hspace{0.3em}}}{61.6} &
\multicolumn{1}{@{\hspace{0.3em}}c@{\hspace{0.3em}}}{62.5} &
\multicolumn{1}{@{\hspace{0.3em}}c@{\hspace{0.3em}}}{66.3} &
\multicolumn{1}{@{\hspace{0.3em}}c@{\hspace{0.3em}}}{71.9} &
\multicolumn{1}{@{\hspace{0.3em}}c@{\hspace{0.3em}}}{55.0} &
\multicolumn{1}{@{\hspace{0.3em}}c@{\hspace{0.3em}}}{70.1} &
\multicolumn{1}{@{\hspace{0.3em}}c@{\hspace{0.3em}}}{68.2} &
\multicolumn{1}{@{\hspace{0.3em}}c@{\hspace{0.3em}}}{68.5} &
\multicolumn{1}{@{\hspace{0.3em}}c@{\hspace{0.3em}}}{67.9} &
\multicolumn{1}{@{\hspace{0.3em}}c@{\hspace{0.3em}}}{60.3} &
\multicolumn{1}{@{\hspace{0.3em}}c@{\hspace{0.3em}}}{75.9} &
\multicolumn{1}{@{\hspace{0.3em}}c@{\hspace{0.3em}}}{56.3} &
\multicolumn{1}{@{\hspace{0.3em}}c@{\hspace{0.3em}}}{59.5} &
\multicolumn{1}{@{\hspace{0.3em}}c@{\hspace{0.3em}}}{69.4} &
\multicolumn{1}{@{\hspace{0.3em}}c@{\hspace{0.3em}}}{60.8} &
\multicolumn{1}{@{\hspace{0.3em}}c@{\hspace{0.3em}}}{65.0} \\

\hline

\end{tabular}
}
\end{center}
\end{table}

\begin{lemma}
\label{lem:span}
    Suppose $S = \{x_1, \cdots, x_n\}$ span $\mathbb{R}^d$. Let $D = \{x_i - x_j: 1 \leq i,j \leq n\}$, $D_j = \{x_i - x_j: 1 \leq i \leq n\}$ for every $1 \leq j \leq n$. If the zero vector $\pmb{0} \in ConvHull(S)$ then 
    \begin{enumerate}
        \item $D$ spans $\mathbb{R}^d$, and
        \item $D_j$ spans $\mathbb{R}^d$ for every $1 \leq j \leq n$
    \end{enumerate}
\begin{proof}
    It is easy to see that for any $1 \leq j \leq n$ $Span(D_j) \subseteq Span(D) \subseteq Span(S)$ as $D_j \subset D$ and $D \subset Span(S)$. To prove the first part, it is enough to show that $S \subset Span(D)$. Since $\pmb{0} \in ConvHull(S)$, there exists $0 \leq \lambda_{k} \leq 1$ such that 
    \begin{equation}
        \sum_{k=1}^n \lambda_{k}x_k = \pmb{0}
    \end{equation}
    and
    \begin{equation}
       \sum_{k=1}^n \lambda_{k} = 1
    \end{equation}
    Any vector $x_i$ can be written as
    \begin{equation}
        x_i = x_i - \sum_{k=1}^n \lambda_{k}x_k = \sum_{k=1}^n \lambda_{k}(x_i - x_k) \in Span(D)
    \end{equation}
    To complete the second part of the proof, observe that every vector in $D$ is in the span of $D_k$ because
    \begin{equation}
        x_i - x_j = (x_i - x_k) - (x_j - x_k)
    \end{equation}
\end{proof}      
\end{lemma}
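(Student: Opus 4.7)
The plan is to exploit the convex-hull hypothesis to rewrite each generator $x_i$ as a linear combination of difference vectors, and then leverage the fact that $S$ already spans $\mathbb{R}^d$. Since $\pmb{0} \in \text{ConvHull}(S)$, there exist nonnegative weights $\lambda_1,\dots,\lambda_n$ summing to $1$ with $\sum_k \lambda_k x_k = \pmb{0}$. The central identity I would use is the trivial rewriting $x_i = x_i - \pmb{0} = \sum_k \lambda_k(x_i - x_k)$, which expresses $x_i$ as an explicit linear combination of vectors from $D_i$, and hence also from $D$. Once this identity is in place, both parts of the lemma follow quickly.

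For part (1), I would first observe the obvious chain of inclusions $D_j \subseteq D \subseteq \text{Span}(S) = \mathbb{R}^d$, so only the reverse inclusion $\text{Span}(D) \supseteq \mathbb{R}^d$ requires work. The identity above shows $x_i \in \text{Span}(D_i) \subseteq \text{Span}(D)$ for every $i$, so $S \subseteq \text{Span}(D)$, and taking spans gives $\text{Span}(D) = \mathbb{R}^d$.

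For part (2), the same identity shows that each $x_i$ itself lies in $\text{Span}(D_i)$ (with $i = j$ it gives $x_j \in \text{Span}(D_j)$ directly, up to the sign convention). To handle arbitrary $j$, I would use the bridging identity $x_i - x_k = (x_i - x_j) - (x_k - x_j)$, which shows $D \subseteq \text{Span}(D_j)$ for any fixed $j$. Combined with part (1), this gives $\mathbb{R}^d = \text{Span}(D) \subseteq \text{Span}(D_j)$, finishing the proof. Alternatively, one can apply the convex-combination identity directly to $x_j$ to show $x_j \in \text{Span}(D_j)$, and then write every $x_i = x_j + (x_i - x_j)$.

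I do not anticipate a genuine obstacle: the role of the convex-hull assumption is purely algebraic, guaranteeing that $\pmb{0}$ admits a nontrivial representation as a combination of the $x_k$'s with the coefficients summing to $1$ (rather than $0$), and this is precisely what is needed to rewrite each $x_i$ as a combination of differences. The only subtle point worth flagging is that a linear dependence $\sum c_k x_k = \pmb{0}$ with $\sum c_k = 0$ would not suffice, since one could then only conclude that the $x_i$'s are linearly dependent, not that they lie in $\text{Span}(D)$; the convex-hull hypothesis is what forces $\sum \lambda_k = 1$.
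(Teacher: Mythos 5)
Your proposal is correct and follows essentially the same route as the paper: the identity $x_i = \sum_k \lambda_k(x_i - x_k)$ derived from the convex-combination representation of $\pmb{0}$ gives $S \subseteq \mathrm{Span}(D)$, and the bridging identity $x_i - x_k = (x_i - x_j) - (x_k - x_j)$ reduces $\mathrm{Span}(D_j)$ to $\mathrm{Span}(D)$, exactly as in the paper. Your closing remark on why the coefficients must sum to $1$ rather than $0$ is a worthwhile clarification of the role of the convex-hull hypothesis, but it does not change the argument.
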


\paragraph{Proof of Proposition \ref{prop:QuantileBasis}}

\begin{proof}
    We will present a proof by contradiction for a discrete probability distribution. 
    Suppose $F$ is a discrete probability distribution with finite support say $n$ points $x_1, \cdots, x_n$. Define unit vectors for every $i \neq j$ and $1 \leq i,j \leq n$
    \begin{equation}
        v_{j,i} = \frac{x_j - x_i}{\|x_j - x_i\|}
    \end{equation}
    Suppose $\sum_{i=1}^{n}p_i = 1$ and $\sum_{i=1}^{n}q_i = 1$ where $p_i >0$ and $q_i > 0$ for all $1 \leq i \leq n$. Define $\delta_i:= p_i - q_i$. It is enough to show that if
    \begin{equation}
    \label{eq:AllQuantIdMatch}
        \sum_{i \neq j}\delta_i v_{j,i}=0
    \end{equation}
    for all $1 \leq j \leq n$ then $\delta_i = 0$ for all $1 \leq i \leq n$.

    % In simple words, if there exists two discrete probability distributions on the same support with the quantile indices at each of the support points are identical then it is enough to show that both the probability distributions are identical.

    Observe that Eq \ref{eq:AllQuantIdMatch} can be compactly written as
    \begin{equation}
        M \pmb{\delta} = 0
    \end{equation}
    where $M$ is a $nd \times n$ matrix and $\pmb{\delta} = (\delta_1, \cdots, \delta_n)^T$ and also we have $\pmb{1}^T\pmb{\delta} = 0$. We need to show that the only solution to this system is $\pmb{\delta} = \pmb{0}$.

    Lemma \ref{lem:span} and the given conditions imply that for any fixed index $i$, $\{x_j - x_i: j \neq i\}$ spans $\mathbb{R}^d$. For each $i$, there exists $1 \leq j_1, \cdots, j_d \leq n$ such that the $d$ vectors $v_{j_1,i}, \cdots, v_{j_d,i}$ are linearly independent. For each $1 \leq m \leq d$, consider the Eq \ref{eq:AllQuantIdMatch} at $j = j_m$ i.e.
    \begin{equation}
        \sum_{l \neq j_m}\delta_l v_{j_m,l}=0
    \end{equation}
    Combining them all we have
    \begin{equation}
     \begin{bmatrix}
    v_{j_1,1} & v_{j_1,2} &  \dots  & v_{j_1,n} \\
    v_{j_2,1} & v_{j_2,2} &  \dots  & v_{j_2,n} \\
    \vdots & \vdots & \ddots & \vdots \\
    v_{j_d,1} & v_{j_d,2} &  \dots  & v_{j_d,n}
    \end{bmatrix}
    \pmb{\delta} = \pmb{0}
    \end{equation}
    Column $i$ has independent vectors hence $\delta_i = 0$. Since the fixed index $i$ was arbitrarily chosen, $\pmb{\delta} = \pmb{0}$. 
    The proof for a general probability distribution is similar and requires more rigor.
\end{proof}

\paragraph{Proof of theorem \ref{thm:WeakEquivalence}}

\begin{proof}
    (a) $\Rightarrow$ (c) follows from the non-negativity of the squared-norm and the expectation over the support of the joint distribution.

    (c) $\Rightarrow$ (a): $\mu_{\pmb{\theta}} \stackrel{d}{=} \mu_{\pmb{X}}$ and (A1) i.e. existence of a perfect reconstruction implies that $\mu_{\pmb{\theta}^{*}} \stackrel{d}{=} \mu_{\pmb{X}}$ as we already proved (a) $\Rightarrow$ (c). Identifiability (A2) $\Rightarrow$ $T_{\pmb{\theta}}(\pmb{x}^{+}) = T_{\pmb{\theta}^{*}}(\pmb{x}^{+})$ for $\mu_{\pmb{X}^{+}}$-almost $\pmb{x}^{+}$. Hence, $\mathcal{L}_{pair}(\pmb{\theta}) = 0$ 

    (c) $\Rightarrow$ (b) follows directly from the fact that $U_{\mu_{\pmb{\theta}}}(\pmb{z}) = U_{\mu_{\pmb{X}}}(\pmb{z})$ for every $\pmb{z}$ in the common support of the distributions. 

    (b) $\Rightarrow$ (c) follows from (A3), Lemma \ref{lem:span} and Proposition \ref{prop:QuantileBasis}.

    The proofs of (d) $\Rightarrow$ (f), (f) $\Rightarrow$ (d),  (e) $\Rightarrow$ (f), and  (f) $\Rightarrow$ (e) are similar but require more rigor. 

\end{proof}  

%\subsection{Proof of $(d) \Leftrightarrow (e) \Leftrightarrow (f)$}

\subsection{Composite-Separable Functions}
\label{ssec:composite_separable}
Some examples of well-known loss functions can be written as a composite-separable loss as follows:

\begin{enumerate}
    \item \textbf{Mean-squared Loss} $|R|=1$, $g_r$ is the identity mapping. $h_r(\pmb{x}_i;\pmb{\theta}) = (f_{\pmb{\theta}}(\pmb{x}_i) - \pmb{y}_i)^2$.
    \item \textbf{Cross-Entropy Loss} $|R|=1$, $g_r$ is the identity mapping. $h_r(\pmb{x}_i;\pmb{\theta}) = -\pmb{y}_i^Tlog\left(\frac{exp(f_{\pmb{\theta}}(\pmb{x}_i))}{\pmb{1}^Texp(f_{\pmb{\theta}}(\pmb{x}_i))}\right)$. Here $log$ refers to element-wise application of logarithm, $\pmb{y}_i$ is the one-hot vector of class labels, $\pmb{1}$ refers to the vector with all ones.
\end{enumerate}

\paragraph{Justification of Memory Bank as a variance-reduced estimator}

\begin{lemma}
\label{lem:MiniBatchSamplingVariance}
    Suppose $B$ is a batch of size $b$ drawn from a population of size $n$ say $\{\pmb{x}_1, \cdots, \pmb{x}_n\}$ without replacement then the variance of the sample mean (measured by expected squared norm)
    \begin{equation}
        \bar{\pmb{x}}_B = \frac{1}{b}\sum_{i \in B}\pmb{x}_i
    \end{equation}
    is given by
    \begin{equation}
        Var(\bar{\pmb{x}}_B):= E \left[ \| \bar{\pmb{x}}_B - \pmb{\mu} \|^2 \right] = \frac{1}{b}\frac{n-b}{n-1}tr(\Sigma)
    \end{equation}
    where $\pmb{\mu} = \frac{1}{n}\sum_{i =1}^{n}\pmb{x}_i$ and 
    \begin{equation}
        \Sigma = \frac{1}{n}\sum_{i=1}^{n}(\pmb{x}_i - \pmb{\mu})(\pmb{x}_i - \pmb{\mu})^{\intercal}
    \end{equation}
\begin{proof}
    For uniform sampling without replacement we have probability of selecting sample $i$ as $P\{I_i=1\} = \frac{b}{n}$ and probability of selecting two samples $i$ and $j$ as $P\{I_i=1,I_j=1\} = \frac{{{n-2}\choose{b-2}}}{{{n}\choose{b}}}$. 
    \begin{equation}
        \bar{\pmb{x}}_B = \frac{1}{b}\sum_{i=1}^{n}I_i\pmb{x}_i
    \end{equation}
    The rest of the proof is straightforward as it involves calculation of variance and covariance terms using the marginal and joint probabilities.
\end{proof}    
\end{lemma}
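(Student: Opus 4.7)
The plan is to recast the sampling as Bernoulli indicators: set $I_i = 1$ if $i \in B$ and $0$ otherwise, so that $\bar{\pmb{x}}_B = \frac{1}{b}\sum_{i=1}^n I_i \pmb{x}_i$. From the two sampling probabilities stated in the lemma, $P(I_i = 1) = b/n$ and $P(I_i = 1, I_j = 1) = \frac{b(b-1)}{n(n-1)}$ for $i \neq j$, I would extract $E[I_i] = b/n$, $\mathrm{Var}(I_i) = \frac{b(n-b)}{n^2}$, and $\mathrm{Cov}(I_i, I_j) = -\frac{b(n-b)}{n^2(n-1)}$ by direct computation. Unbiasedness $E[\bar{\pmb{x}}_B] = \pmb{\mu}$ follows immediately.

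Next, since $\sum_i(I_i - b/n) = 0$, the estimation error admits a recentered form in which each $\pmb{x}_i$ is replaced by $\pmb{x}_i - \pmb{\mu}$ without any additive correction:
\begin{equation}
\bar{\pmb{x}}_B - \pmb{\mu} = \frac{1}{b}\sum_{i=1}^n (I_i - b/n)(\pmb{x}_i - \pmb{\mu}).
\end{equation}
Squaring the norm and taking expectation converts the problem into a bilinear sum of indicator covariances weighted by inner products of centered data:
\begin{equation}
E\!\left[\|\bar{\pmb{x}}_B - \pmb{\mu}\|^2\right] = \frac{1}{b^2}\sum_{i,j=1}^n \mathrm{Cov}(I_i, I_j)\,\langle \pmb{x}_i - \pmb{\mu},\ \pmb{x}_j - \pmb{\mu}\rangle.
\end{equation}

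The single algebraic step that drives the collapse is the centering identity $\sum_i(\pmb{x}_i - \pmb{\mu}) = \pmb{0}$, which forces
\begin{equation}
\sum_{i \neq j}\langle\pmb{x}_i - \pmb{\mu},\ \pmb{x}_j - \pmb{\mu}\rangle = -\sum_i\|\pmb{x}_i - \pmb{\mu}\|^2.
\end{equation}
Substituting this together with the variance and covariance values above leaves a single multiple of $\sum_i\|\pmb{x}_i - \pmb{\mu}\|^2$, whose coefficient simplifies to $\frac{n-b}{b\,n(n-1)}$. The last step is to recognize $\mathrm{tr}(\Sigma) = \frac{1}{n}\sum_i\|\pmb{x}_i - \pmb{\mu}\|^2$ via linearity of trace and the identity $\mathrm{tr}(\pmb{v}\pmb{v}^{\intercal}) = \|\pmb{v}\|^2$ applied to $\pmb{v} = \pmb{x}_i - \pmb{\mu}$, which yields exactly $\frac{1}{b}\frac{n-b}{n-1}\mathrm{tr}(\Sigma)$.

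There is no conceptual obstacle; the only risk is arithmetic bookkeeping with the $n$, $n-1$, and $b$ factors when combining the diagonal and off-diagonal pieces of the covariance expansion. I would track the algebra symbolically and verify the two boundary cases $b = 1$ (which should recover $\mathrm{tr}(\Sigma)$, matching the i.i.d.\ single-draw variance) and $b = n$ (which must make the expression vanish, since the ``batch'' deterministically equals the full population); together these uniquely pin down the finite-population factor $\frac{n-b}{n-1}$ and serve as a sanity check on the closed form.
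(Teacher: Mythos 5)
Your proposal is correct and follows exactly the route the paper sketches: indicator variables $I_i$ with $P(I_i=1)=b/n$ and $P(I_i=1,I_j=1)=\frac{b(b-1)}{n(n-1)}$, followed by the variance--covariance expansion that the paper leaves as ``straightforward.'' Your centering identity, the collapse of the off-diagonal terms to $-\sum_i\|\pmb{x}_i-\pmb{\mu}\|^2$, the final coefficient $\frac{n-b}{bn(n-1)}$, and the boundary checks at $b=1$ and $b=n$ all check out, so you have in effect supplied the computation the paper omits.
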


\begin{proof}
    We will drop the subscript $r$ and denote $h_r(\pmb{x}_i; \pmb{\theta})$ with $h(\pmb{x}_i; \pmb{\theta})$. Note that each $h(\pmb{x}_i; \pmb{\theta})$ is a d-dimensional unit vector. The goal is to construct low variance estimators of 
    \begin{equation}
        \pmb{\mu}_t = \frac{1}{n}\sum_{i=1}^{n}h(\pmb{x}_i; \pmb{\theta}_t)
    \end{equation}
    using mini-batches of indices from $\{1, \cdots, n\}$. We have two estimators at hand. The crude mini-batch estimator is given by
    \begin{equation}
    \label{eq:crudeEstimator}
        \hat{\pmb{\mu}}_t^{crude} = \frac{1}{b}\sum_{i \in B}h(\pmb{x}_i; \pmb{\theta}_t)
    \end{equation}
    Denote
    \begin{equation}
        \pmb{\mu}_{snap} = \frac{1}{n}\sum_{i=1}^{n}h(\pmb{x}_i; \pmb{\theta}_{snap})
    \end{equation}
    The memory-bank based estimator with a constant $\beta$ is given by (we will substitute $\beta=1$ later)
    \begin{equation}
        \hat{\pmb{\mu}}_t^{control} = \frac{1}{b}\sum_{i \in B}h(\pmb{x}_i; \pmb{\theta}_t) + \beta \left(\pmb{\mu}_{snap} - \frac{1}{b}\sum_{i \in B}h(\pmb{x}_i; \pmb{\theta}_{snap}) \right)
    \end{equation}
    Both are unbiased estimators of $\pmb{\mu}_t$
    \begin{equation}
        E_B \left[ \frac{1}{b}\sum_{i \in B} h(\pmb{x}_i; \pmb{\theta}_t) \right] = \frac{1}{n}\sum_{i=1}^{n}h(\pmb{x}_i; \pmb{\theta}_t) = \pmb{\mu}_t
    \end{equation}
    With a similar argument
    \begin{equation}
       E_B \left[ \hat{\pmb{\mu}}_t^{control} \right] = \pmb{\mu}_t + \beta (\pmb{\mu}_{snap} - \pmb{\mu}_{snap}) = \pmb{\mu}_t
    \end{equation}
    For each index $i$, denote $\pmb{a}_i := h(\pmb{x}_i, \pmb{\theta}_t)$ and $\pmb{s}_i := h(\pmb{x}_i, \pmb{\theta}_{snap})$ and let
    \begin{equation*}
        \pmb{A} = \frac{1}{n}\sum_{i=1}^{n}\pmb{a}_i = \pmb{\mu}_t
    \end{equation*}
    and
    \begin{equation*}
        \pmb{S} = \frac{1}{n}\sum_{i=1}^{n}\pmb{s}_i = \pmb{\mu}_{snap}
    \end{equation*}    
    % Let
    % \begin{equation}
    %     \Sigma_a = \frac{1}{n}\sum_{i=1}^{n}(\pmb{a}_i - \pmb{A})(\pmb{a}_i - \pmb{A})^{\intercal}
    % \end{equation}
    % \begin{equation}
    %     \Sigma_s = \frac{1}{n}\sum_{i=1}^{n}(\pmb{s}_i - \pmb{S})(\pmb{s}_i - \pmb{S})^{\intercal}
    % \end{equation}    
    % \begin{equation}
    %     \Sigma_{as} = \frac{1}{n}\sum_{i=1}^{n}(\pmb{a}_i - \pmb{A})(\pmb{s}_i - \pmb{S})^{\intercal}
    % \end{equation}   
    Define the population squared-norm variances by
    \begin{equation}
        \sigma_a^2 = \frac{1}{n}\sum_{i=1}^{n}\|\pmb{a}_i - \pmb{A} \|^2 
    \end{equation}
    and
    \begin{equation}
        \sigma_s^2 = \frac{1}{n}\sum_{i=1}^{n}\|\pmb{s}_i - \pmb{S} \|^2 
    \end{equation}    
    and the cross-term
    \begin{equation}
        \sigma_{as} = \frac{1}{n}\sum_{i=1}^{n}(\pmb{a}_i - \pmb{A}) \cdot (\pmb{s}_i - \pmb{S})
    \end{equation}
    
    Measuring variance by expected square norm and applying Lemma \ref{lem:MiniBatchSamplingVariance}, the variance of the crude estimator is given by
    \begin{align*}
    \vspace{1em}&\vspace{-5em}
        Var(\hat{\pmb{\mu}}_t^{crude}) = E \left[ \| \hat{\pmb{\mu}}_t^{crude} - \pmb{\mu}_t \|^2 \right] \\
        & = E \left[ \| \hat{\pmb{\mu}}_t^{crude} - \pmb{A}\|^2 \right] \\
        & = \frac{1}{b}\frac{n-b}{n-1} \sigma_a^2
    \end{align*}   

    The last expression follows from Lemma \ref{lem:MiniBatchSamplingVariance}. Similarly, the variance of the control-variate estimator is given by
    \begin{align*}
    \vspace{1em}&\vspace{-5em}
        Var(\hat{\pmb{\mu}}_t^{control}) = E \left[ \| \hat{\pmb{\mu}}_t^{control} - \pmb{\mu}_t \|^2 \right] \\
        & = E \left[ \| \hat{\pmb{\mu}}_t^{control} - \pmb{A}\|^2 \right] \\
        & = \frac{1}{b}\frac{n-b}{n-1} \left( \sigma_a^2 + \beta^2 \sigma_s^2 -2 \beta \sigma_{as}\right)
    \end{align*}      

    The variance of the control-variate estimator is a quadratic expression in $\beta$ with the leading coefficient non-negative. Setting the derivative to zero, the minimzer is obtained at
    \begin{equation}
        \beta^{*} = \frac{\sigma_{as}}{\sigma_s^2}
    \end{equation}

    The variance of the control-variate estimator for optimal choice i.e. for $\beta = \beta^{*}$ is given by
    \begin{equation}
        Var(\hat{\pmb{\mu}}_t^{control})|_{\beta=\beta^{*}} = \frac{1}{b}\frac{n-b}{n-1}\sigma_a^2 (1 - \rho^2)
    \end{equation}
    where 
    \begin{equation}
        \rho = \frac{\sigma_{as}}{\sigma_a \sigma_s}
    \end{equation}
    % \begin{equation}
    %     Var(\hat{\pmb{\mu}}_t^{crude}) = \frac{1}{b}tr(\Sigma_a)
    % \end{equation}
    % and the variance of memory-bank based estimator is given by
    % \begin{equation}
    %     Var(\hat{\pmb{\mu}}_t^{control}) = \frac{1}{b} \left( tr(\Sigma_a) + tr(\Sigma_s) - 2 tr(\Sigma_{as}) \right)
    % \end{equation}
    % The difference of variances is given by
    % \begin{equation}
    %     Var(\hat{\pmb{\mu}}_t^{crude}) - Var(\hat{\pmb{\mu}}_t^{control}) = \frac{1}{b} \left( 2 tr(\Sigma_{as}) -  tr(\Sigma_s) \right)
    % \end{equation}
    % Intuitively, with a small learning rate and frequent updates i.e. if $t - t_{snap}$ is small then $\pmb{a}_i$ and $\pmb{s}_i$ are positively correlated and $2 tr(\Sigma_{as}) -  tr(\Sigma_s) >0$. Additionally, the memory-bank estimator is close-to-optimal among all the control-variate based estimators of the form  $\frac{1}{b}\sum_{i \in B}h(\pmb{x}_i; \pmb{\theta}_t) + C \left(\pmb{\mu}_{snap} - \frac{1}{b}\sum_{i \in B}h(\pmb{x}_i; \pmb{\theta}_{snap}) \right)$ where $C$ is any constant. This is because the optimal coefficient $C^{*}$ \cite{} is given by
    % \begin{equation}
    %     C^{*} = \frac{tr(\Sigma_{as})}{tr(\Sigma_a)} 
    % \end{equation}
    % $C^{*} \approx 1$ when $t - t_{snap}$ is small. 
    For Quantile loss, we have $\|\pmb{a}_i\| = \|\pmb{s}_i\|=1$ for all $1 \leq i \leq n$. So we have
    \begin{equation}
        \sigma_a^2 = \frac{1}{n}\sum_{i=1}^{n}\|\pmb{a}_i\|^2 - \|\pmb{A}\|^2 = 1 - \|\pmb{A}\|^2
    \end{equation}
    Similarly
    \begin{equation}
        \sigma_s^2 = 1 - \|\pmb{S}\|^2 
    \end{equation}
    Let 
    \begin{equation}
        c = \frac{1}{n}\sum_{i=1}^{n}\pmb{a}_i \cdot \pmb{s}_i
    \end{equation}
    then
    \begin{equation}
        \sigma_{as} = c - \pmb{A}\cdot \pmb{S}
    \end{equation}
    \begin{equation}
        \beta^{*} = \frac{ c - \pmb{A}\cdot \pmb{S}}{1 - \|\pmb{S}\|^2 }
    \end{equation}
    Intuitively, with a small learning rate and frequent updates i.e. if $t - t_{snap}$ is small then $\pmb{a}_i$ and $\pmb{s}_i$ are positively correlated unit vectors i.e. $\pmb{a}_i \cdot \pmb{s}_i \approx 1$. Hence, we have $c \approx 1$ and $\pmb{A}\cdot \pmb{S} \approx \|\pmb{S}\|^2$ i.e. $\beta^{*} \approx 1$. Hence, the memory-bank estimator is optimal among the control-variate based estimators. The optimal variance is close to zero making the loss-gradient approximation valid even with small batches.
    
\end{proof}

\subsection{Quantile Index: Geometric Intuition, Six Blobs and Two-Moons Examples}

\paragraph{Intuition of a Geometric Quantile} In one-dimension each quantile has an associated index between $0$ and $1$ which can be mean-centered to $[-1,1]$. The high-dimensional equivalent of the index is a vector in the unit ball in $d$ dimensions. For a probability distribution in $d$ dimensions, every vector in $d$ dimensions is a geometric quantile unlike in one-dimension where the support determines the set of quantiles. Informally, for a given probability distribution, the quantile index of a quantile is given by computing a weighted average of the unit vectors starting at every support point and ending at the quantile, with the weights proportional to the probability. See Fig \ref{fig:quantile_idx_defmain} for a visualization on a toy dataset with 3 points in 2 dimensions. Fig \ref{fig:quantile_idx_def0} shows the recentered data with the given quantile as the centre. Fig \ref{fig:quantile_idx_def} visualizes the unit vectors starting from the data points directed towards the centre in blue and the average of these unit vectors highlighted in red. In other words, a quantile index captures an effective direction of the data from the distribution weighted by the probability mass.  The idea of quantile matching is to jointly match all the quantile indices of a specific set of informative quantiles. Geometrically, it is beneficial to choose the support of the dataset as quantiles. The reader may refer to Fig \ref{fig:quantile_field} and Fig \ref{fig:quantloss_intuition} for intuition and the text in subsection \ref{subsec:QuantileLoss} for practical consideration on informative quantiles.  

\begin{figure}[ht]
    \centering
    \hspace{0.02\textwidth}
        \begin{subfigure}[b]{0.4\textwidth}
        \centering
        \includegraphics[width=\linewidth]{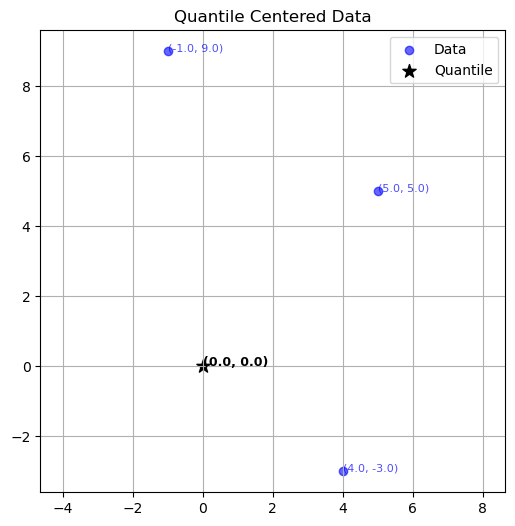}
        \caption{Recentered data points w.r.t the quantile. Data is shown in blue and the quantile is shown in black.}
        \label{fig:quantile_idx_def0}
    \end{subfigure}
    \hspace{0.02\textwidth}
    \begin{subfigure}[b]{0.4\textwidth}
        \centering
        \includegraphics[width=\linewidth]{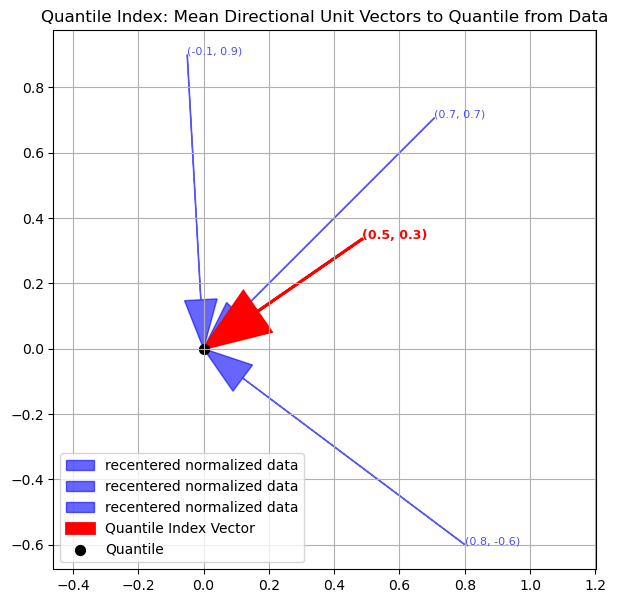}
        \caption{Recentered unit vectors from data to quantile. Quantile index is the average of these unit vectors.}
        \label{fig:quantile_idx_def}
    \end{subfigure}
    \caption{Geometric Visualization of a Quantile Index of a Quantile.}
    \label{fig:quantile_idx_defmain}
\end{figure}

\begin{figure}[ht]
    \centering
    \includegraphics[width=1.0\linewidth]{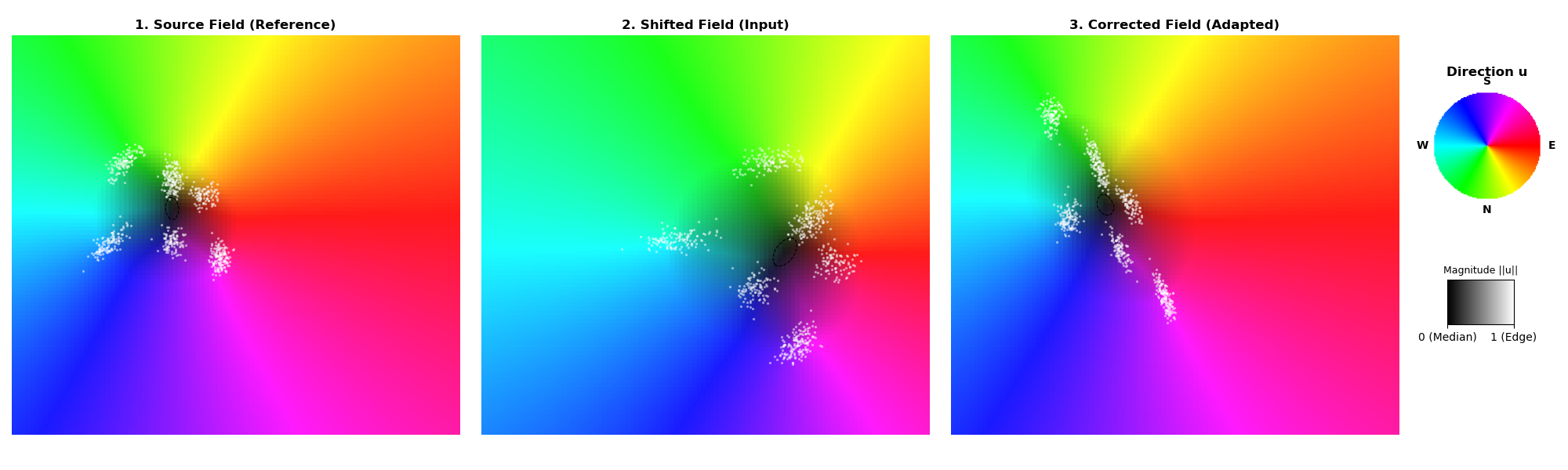}
    \caption{Geometric Visualization of Quantile Field illustrating quantile loss and its smoothness. \\
    \textbf{Left}: Quantile field of the source data features. \textbf{Middle}: Quantile field of the test data features in the same range. Notice the large discrepancy between the color plus magnitude at any chosen point in the range implying large quantile loss. \textbf{Right}: The corrected features obtained by minimizing quantile loss. Notice that the color plus magnitude are roughly aligned at every point. Also, the reconstruction is not perfect indicating the continuity of the quantile loss (empirical evidence to part B of Theorem \ref{thm:WeakEquivalence})}
    \label{fig:quantile_field}
\end{figure}

% toy visualization of quantile loss
\paragraph{A two-dimensional toy example illustrating quantile loss} Consider the following toy example in two dimensions with six classes (see Fig \ref{fig:epoch001}). The training data is given by six blobs each of which is a Gaussian with slightly different number of points, variance-covariance matrices and means. Each blob represents a class. The test dataset is obtained from a linear transformation on the training data. In this example, we set up a bijective mapping between the points in the training and test data to track that the proposed quantile-based approach learns the inverse mapping of the shift. Fig \ref{fig:epoch1000} shows the the network approximately learns the inverse linear transform at epoch $1000$. For real-data it is not possible to have a bijective correspondence. Hence, we also plot Wasserstein's distance between the test data corrected for covariate shift and the training data. This serves as a distribution-level metric to verify if the quantile-based approach is a proxy for matching probability distributions.  See Fig \ref{fig:validation_metrics_sixblobs} for these plots. 

\begin{figure}[t]
    \centering
    \begin{subfigure}[b]{0.4\textwidth}
        \centering
        \includegraphics[width=\linewidth]{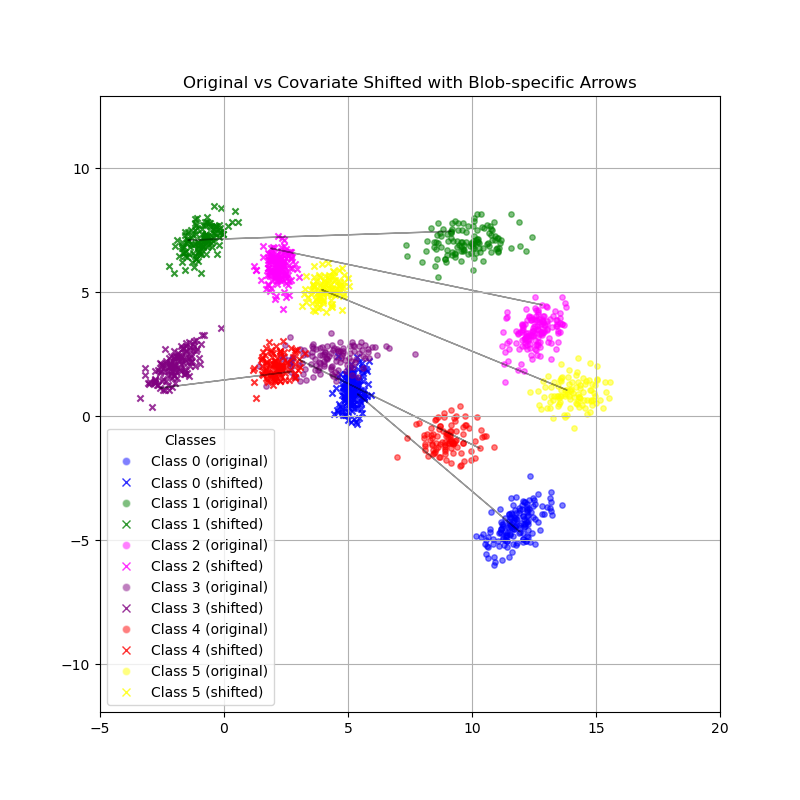}
        \caption{Epoch 1}
        \label{fig:epoch001}
    \end{subfigure}
    \hspace{0.02\textwidth}
    \begin{subfigure}[b]{0.4\textwidth}
        \centering
        \includegraphics[width=\linewidth]{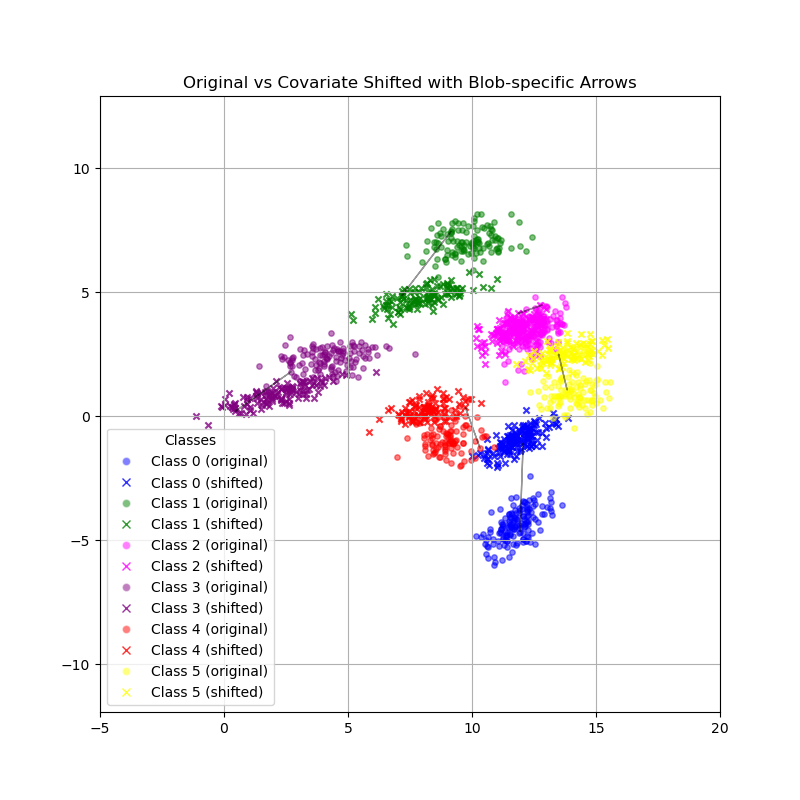}
        \caption{Epoch 200}
        \label{fig:epoch200}
    \end{subfigure}
    \begin{subfigure}[b]{0.4\textwidth}
        \centering
        \includegraphics[width=\linewidth]{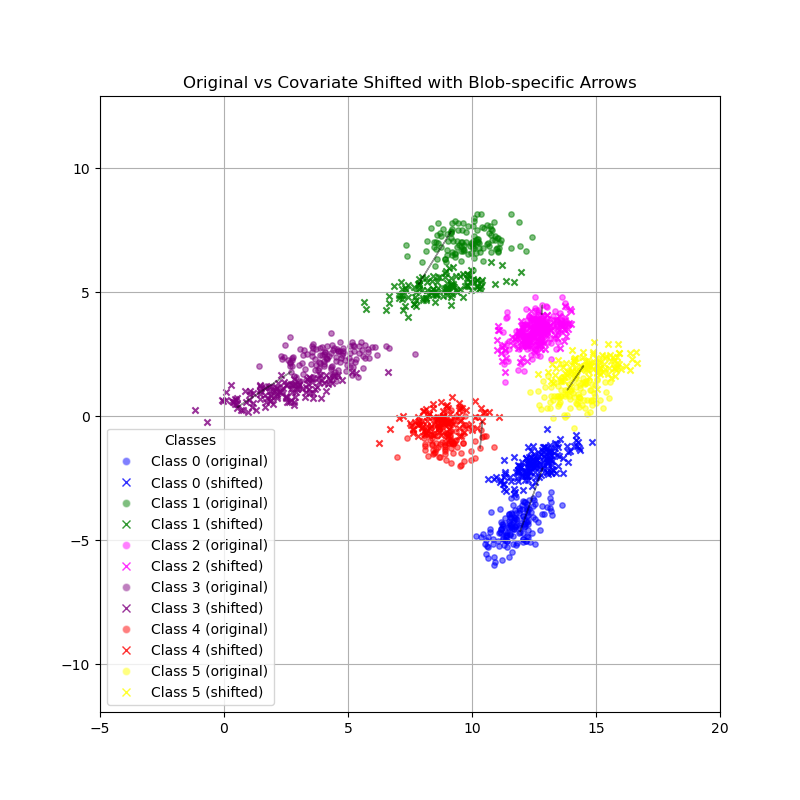}
        \caption{Epoch 800}
        \label{fig:epoch800}
    \end{subfigure}
    \hspace{0.02\textwidth}
    \begin{subfigure}[b]{0.4\textwidth}
        \centering
        \includegraphics[width=\linewidth]{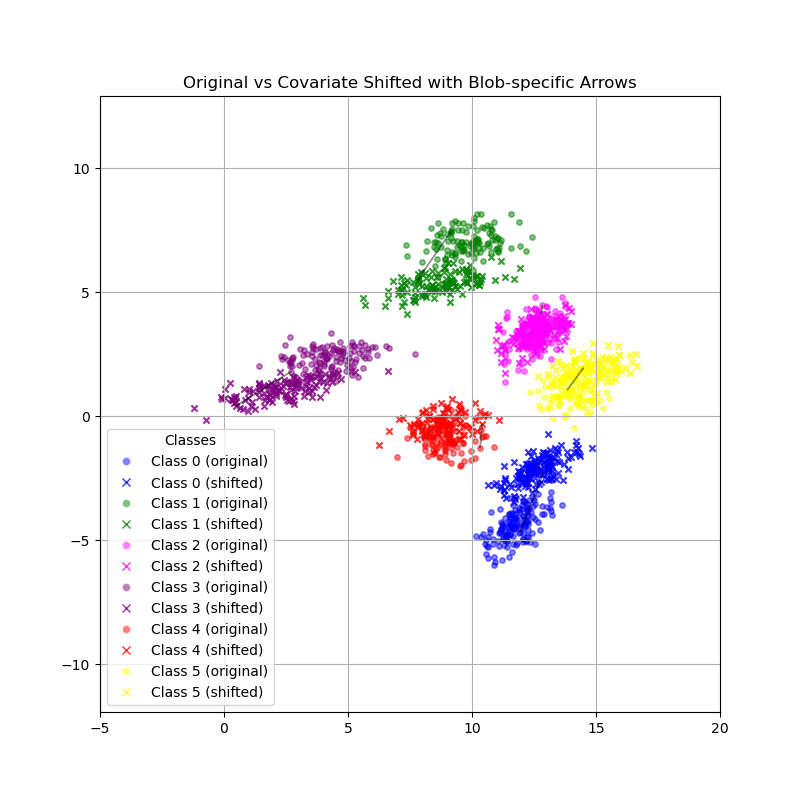}
        \caption{Epoch 1000}
        \label{fig:epoch1000}        
    \end{subfigure}
    \caption{(a) - (d) The training and de-corrupted test datasets are overlapped from Epochs 1 to Epoch 1000. The training data corresponds to the points shown in dots. The test data is a covariate-shifted data obtained by a linear transform and is shown by crosses. The colors denote the classes. The arrows denote the cluster-level shifts. As the training progresses, the network learns the shift.}
    \label{fig:six-blobs-matching}
\end{figure}

\begin{figure}[ht]
    \centering
    \hspace{0.02\textwidth}
        \begin{subfigure}[b]{0.6\textwidth}
        \centering
        \includegraphics[width=\linewidth]{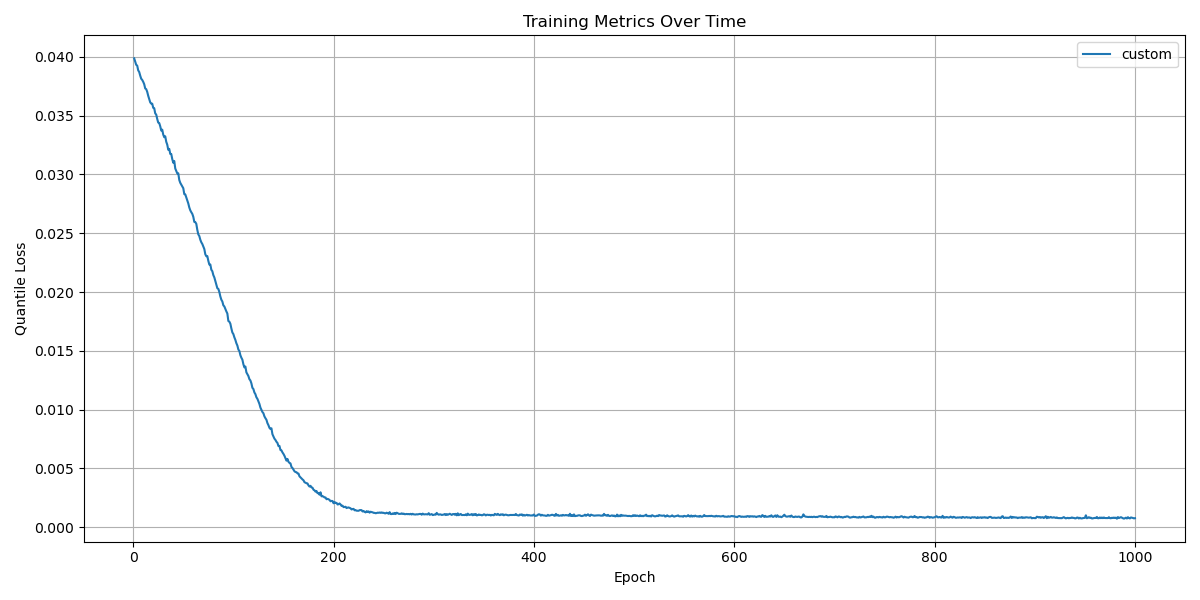}
        \caption{Quantile loss}
        \label{fig:quant_loss_six_blob}
    \end{subfigure}
    \hspace{0.02\textwidth}
    \begin{subfigure}[b]{0.6\textwidth}
        \centering
        \includegraphics[width=\linewidth]{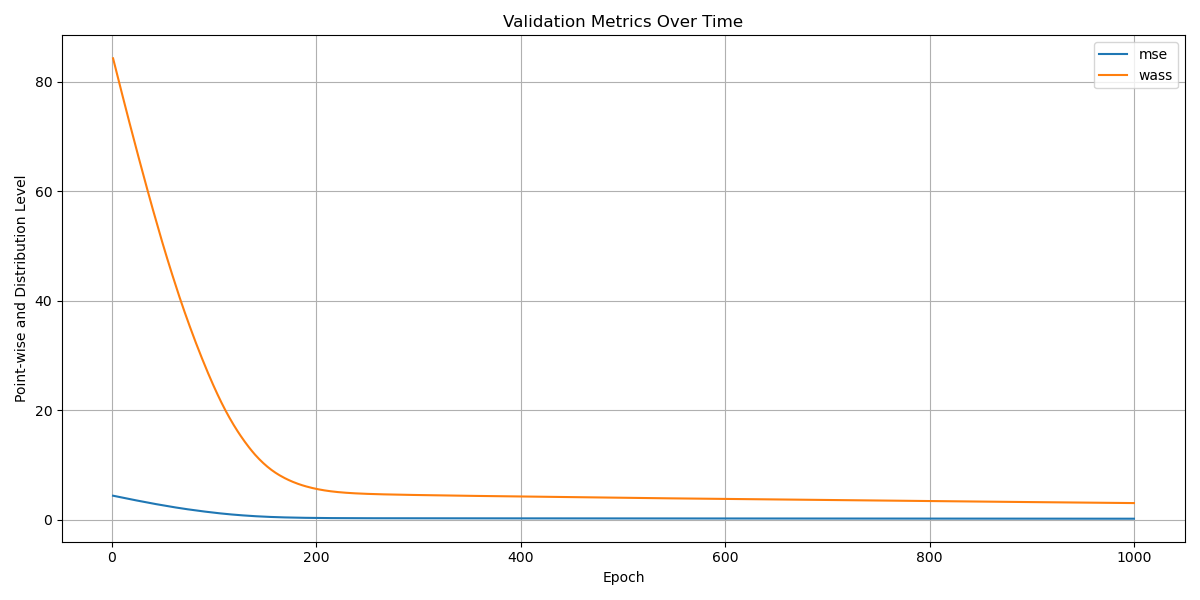}
        \caption{MSE and Wasserstein distance}
        \label{fig:mse_wass_six_blob}
    \end{subfigure}
    \caption{(a) Quantile loss between the inverse-mapped test-time covariates and the training data as a function of epochs. The training is done using SGD on a linear transform initialized with identity using quantile loss. (b) Mean squared error between the paired points and the Wasserstein distance between the distributions as a function of epochs. This plot is meant for validation only. Minimizing quantile loss also minimizes the mean-squared error and the Wasserstein's distance between the two probability distributions.}
    \label{fig:validation_metrics_sixblobs}
\end{figure}

% Choice of Quantiles and HUS
\begin{figure}[t]
    \centering
    \hspace{0.02\textwidth}
    % \begin{subfigure}[b]{0.2\textwidth}
    %     \centering
    %     \includegraphics[width=\linewidth]{img/quantile_idx_def.png}
    %     \caption{At Epoch 100, as the learning progresses, the map learnt is approximately the inverse of the covariate shift correcting for the distribution shift}
    %     \label{fig:epoch100}
    % \end{subfigure}      
    \begin{subfigure}[b]{0.4\textwidth}
        \centering
        \includegraphics[width=\linewidth]{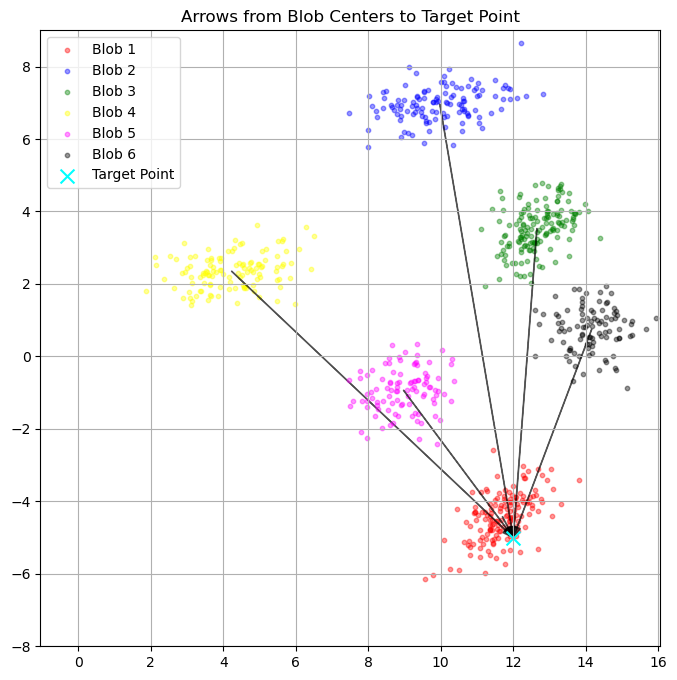}
        \caption{Quantile $[12,-5]$ and the training data.}
        \label{fig:quantile_idx1_orig}
    \end{subfigure}
    \hspace{0.02\textwidth}
    \begin{subfigure}[b]{0.4\textwidth}
        \centering
        \includegraphics[width=\linewidth]{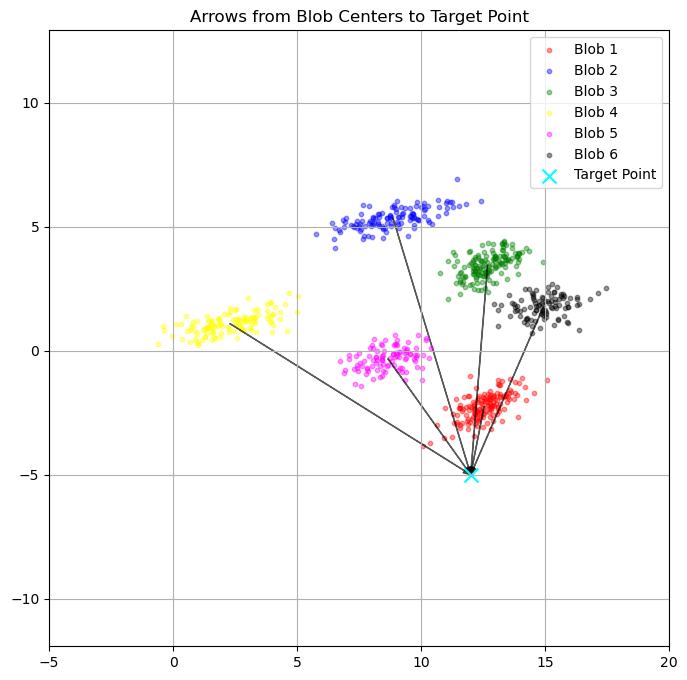}
        \caption{Quantile $[12,-5]$ and the de-shifted data.}
        \label{fig:quantile_idx1_pred}
    \end{subfigure}
    \begin{subfigure}[b]{0.4\textwidth}
        \centering
        \includegraphics[width=\linewidth]{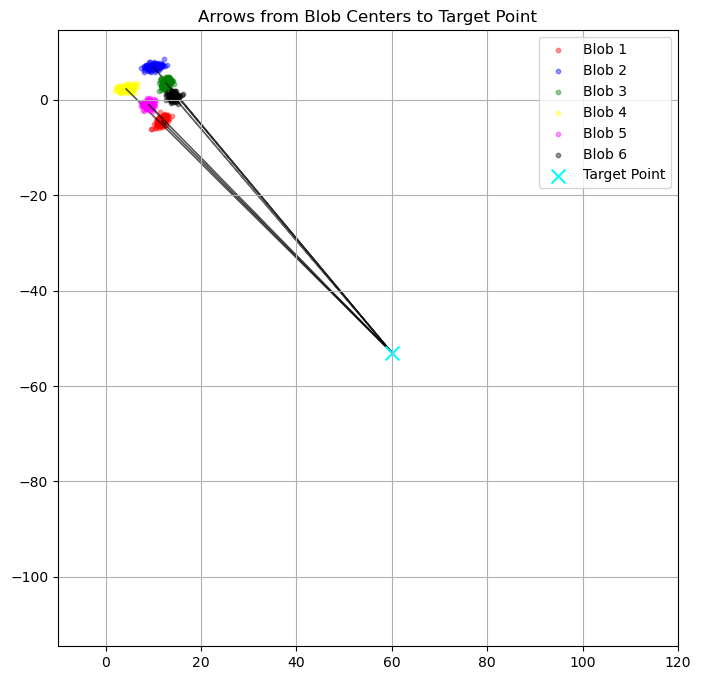}
        \caption{Quantile $[60,-53]$ and the training data.}
        \label{fig:quantile_idx2_orig}
    \end{subfigure}
    \hspace{0.02\textwidth}
    \begin{subfigure}[b]{0.4\textwidth}
        \centering
        \includegraphics[width=\linewidth]{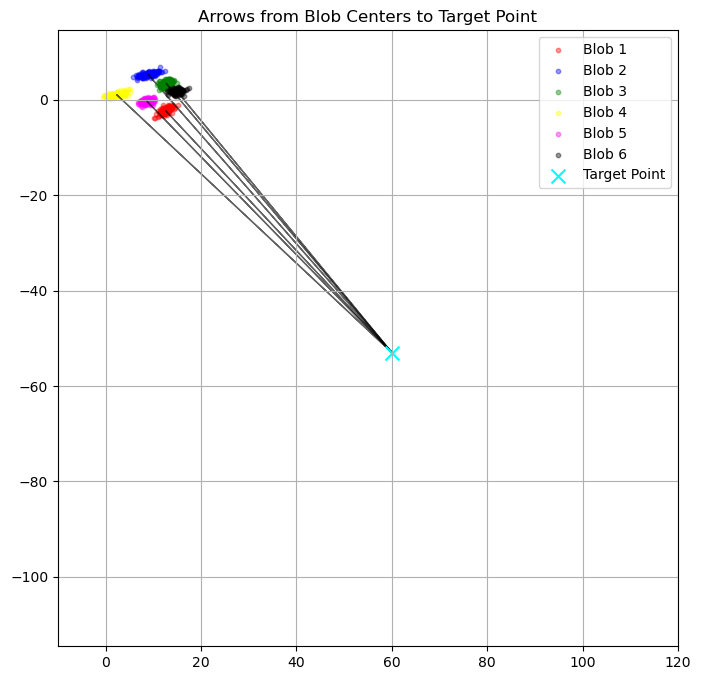}
        \caption{Quantile $[60,-53]$ and the de-shifted data.}
        \label{fig:quantile_idx2_pred}
    \end{subfigure}    
    \caption{The training dataset corresponds to the points shown in dots in (a) and (c). The de-corrupted test data at the end of 800 epochs is shown by crosses in (b) and (d). The colors denote the classes. The quantile index of a point is approximately the weighted sum of unit vectors originating from the blob centres and ending at the quantile. The weights are the relative number of points in the blob. The quantiles at which the quantile indices are to be matched should be close to the data. The quantile indices of quantiles farther away from data are less sensitive to the overlap of the de-corrupted and the training distribution. The relative norm error of the quantile indices for $[12,-5]$ is $0.64$ and for $[60,-53]$ is $0.06$ w.r.t. the two datasets}
    \label{fig:quantloss_intuition}
\end{figure}

\paragraph{Another two-dimensional example illustrating a bad initialization} Consider the following toy example in two dimensions with two classes (see Fig \ref{fig:epoch01}). The training data is given by two moons data as shown in Fig \ref{fig:TwoMoonsShifted}. In this example, we again set up a bijective mapping between the points in the training and test data to track the mean-squared error loss. The initialization is closer to a symmetry (rotational invariance of the marginal) as shown in Fig \ref{fig:epoch01}. Figs \ref{fig:epoch5}, \ref{fig:epoch10} and \ref{fig:epoch30} shows the the network approximately learns a transform at epoch $30$ such that the marginals match but the class-conditionals flip perfectly. Fig \ref{fig:validation_metrics_twomoons} shows that Wasserstein's distance between the decorrupted test data and the training data converges to zero which is consistent with the minimization of quantile loss. This serves as an example to illustrate the importance of a "good" initialization in Theorem \ref{thm:WeakEquivalence}.

\begin{figure}[t]
    \centering
    \begin{subfigure}[b]{0.45\textwidth}
        \centering
        \includegraphics[width=\linewidth]{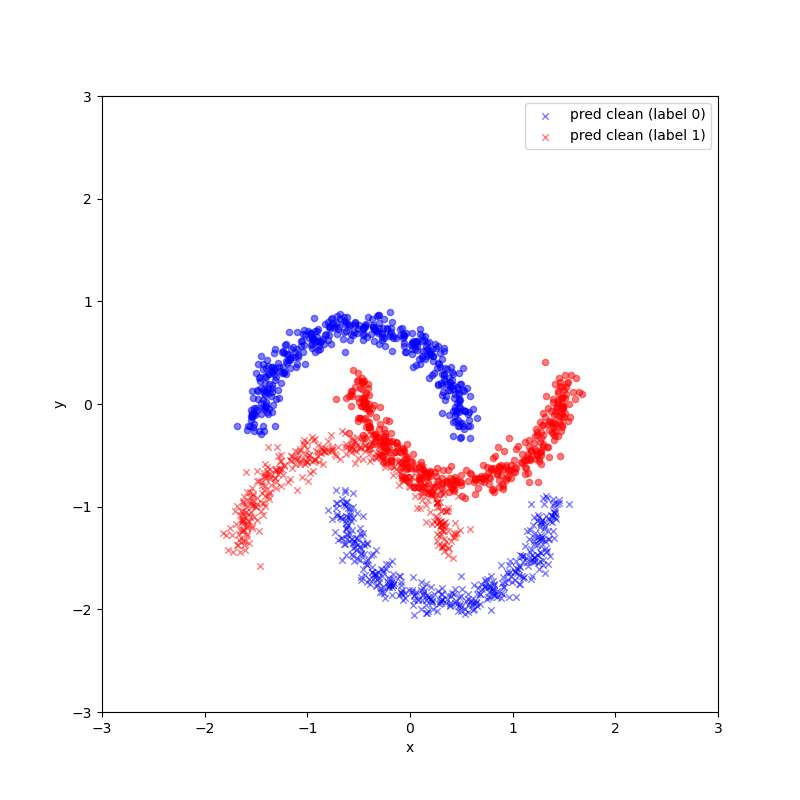}
        \caption{Epoch 1}
        \label{fig:epoch01}
    \end{subfigure}
    \hspace{0.02\textwidth}
    \begin{subfigure}[b]{0.45\textwidth}
        \centering
        \includegraphics[width=\linewidth]{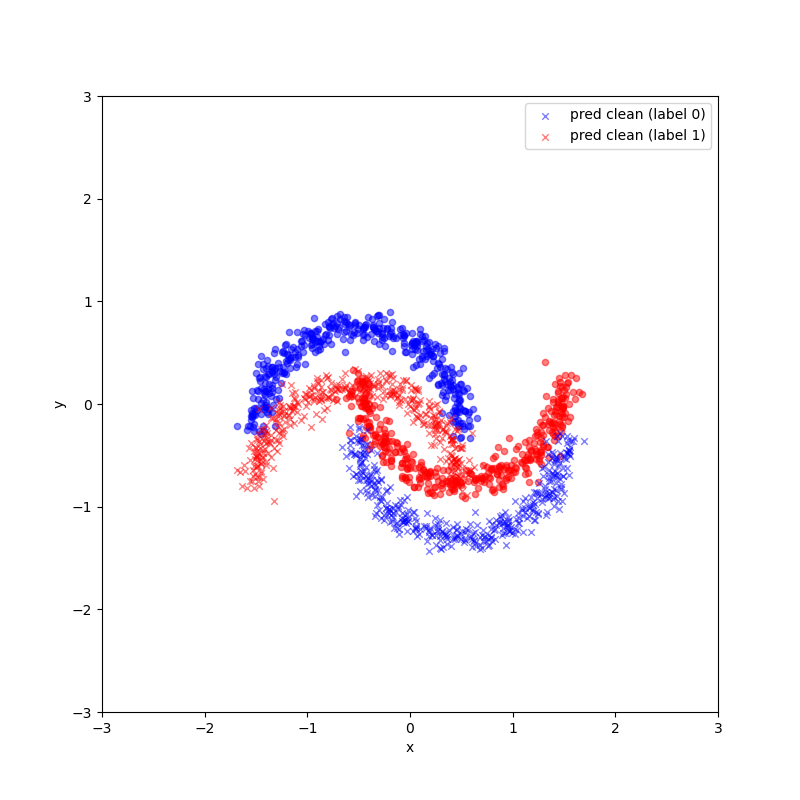}
        \caption{Epoch 5}
        \label{fig:epoch5}
    \end{subfigure}
    \begin{subfigure}[b]{0.45\textwidth}
        \centering
        \includegraphics[width=\linewidth]{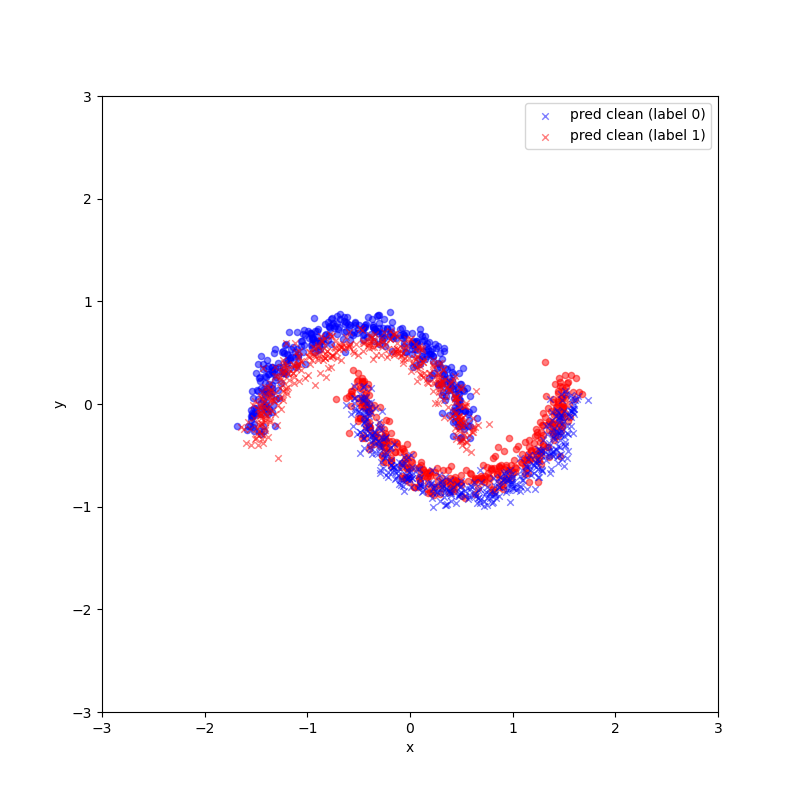}
        \caption{Epoch 10}
        \label{fig:epoch10}
    \end{subfigure}
    \hspace{0.02\textwidth}
    \begin{subfigure}[b]{0.45\textwidth}
        \centering
        \includegraphics[width=\linewidth]{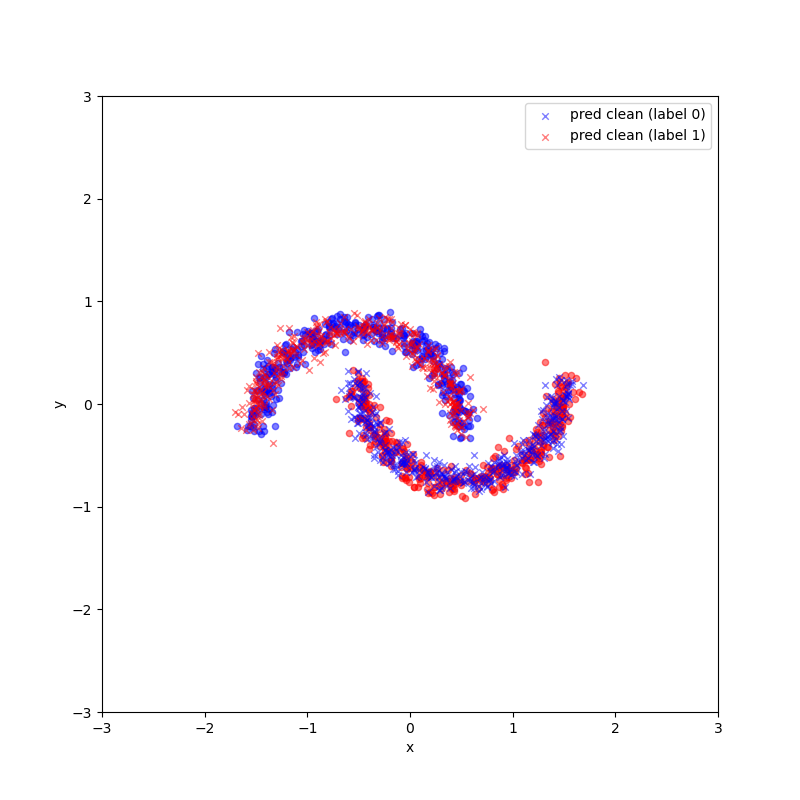}
        \caption{Epoch 30}
        \label{fig:epoch30}        
    \end{subfigure}
    \caption{(a) - (d) The training and de-corrupted test datasets are overlapped from Epochs 1 to Epoch 30. The training data corresponds to the points shown in dots. The test data is a covariate-shifted data obtained by a 180 degree rotation as shown in Fig \ref{fig:TwoMoonsIncorrect}. The de-corrupted test data is shown by crosses. The  The colors denote the classes. As the training progresses, the network matches the marginals better but flips the class-conditionals due to an initialization closer to a minima of quantile loss corresponding to perfectly flipping the class-conditionals.}
    \label{fig:two-moons-mismatching}
\end{figure}

\begin{figure}[ht]
    \centering
    \hspace{0.02\textwidth}
        \begin{subfigure}[b]{0.8\textwidth}
        \centering
        \includegraphics[width=\linewidth]{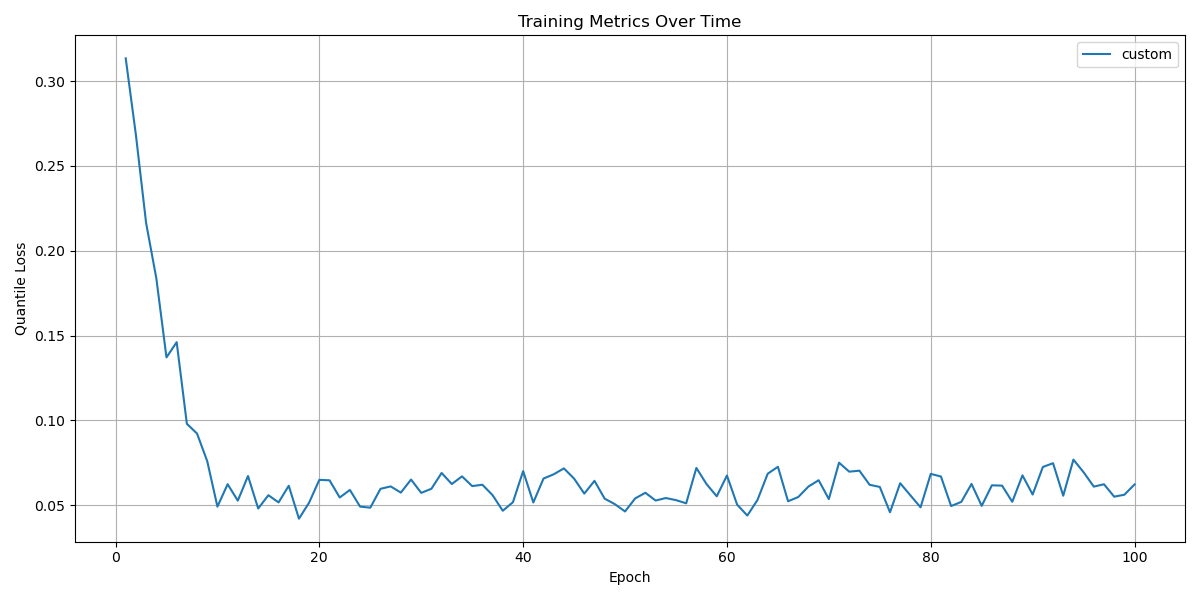}
        \caption{Quantile loss}
        \label{fig:quant_loss_six_blob}
    \end{subfigure}
    \hspace{0.02\textwidth}
    \begin{subfigure}[b]{0.8\textwidth}
        \centering
        \includegraphics[width=\linewidth]{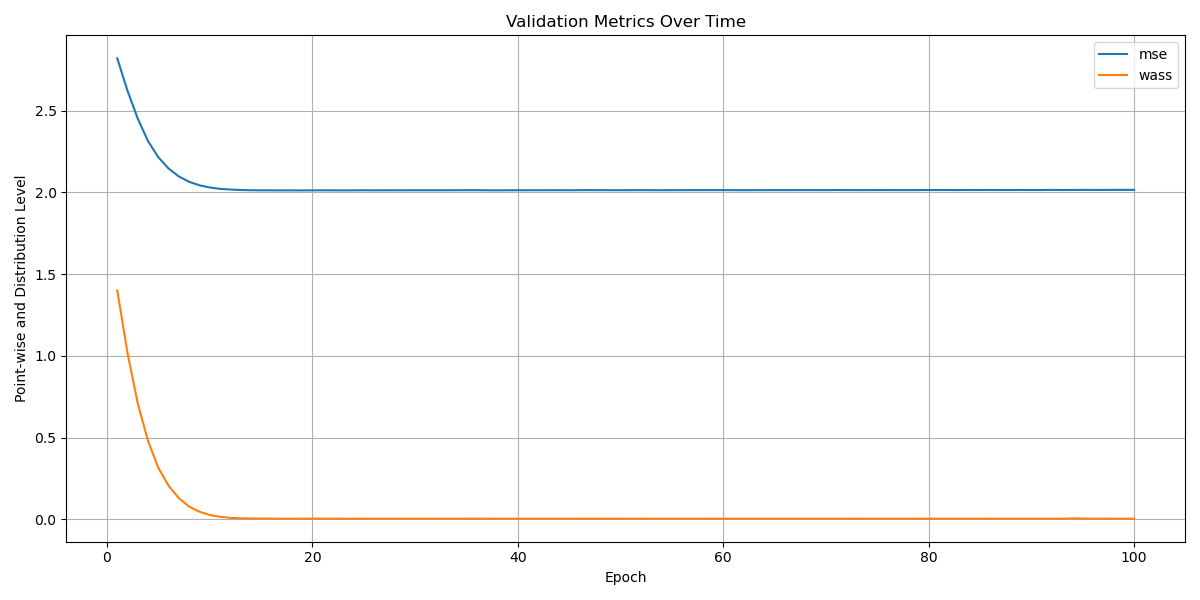}
        \caption{MSE and Wasserstein distance}
        \label{fig:mse_wass_two_moons}
    \end{subfigure}
    \caption{(a) Quantile loss between the inverse-mapped test-time covariates and the training data as a function of epochs. The training is done using SGD on a linear transform initialized randomly using quantile loss. (b) Mean squared error between the paired points and the Wasserstein distance between the distributions as a function of epochs. This plot is meant for validation only. Minimizing quantile loss also minimizes the Wasserstein's distance between the two probability distributions but the mean-squared error stagnates as the class-conditionals are swapped.}
    \label{fig:validation_metrics_twomoons}
\end{figure}

\end{document}